\newcommand{\bolde}{\ensuremath{\boldsymbol{e}}}
\newcommand{\boldg}{\ensuremath{\boldsymbol{g}}}
\newcommand{\boldh}{\ensuremath{\boldsymbol{h}}}
\newcommand{\boldu}{\ensuremath{\boldsymbol{u}}}
\newcommand{\boldv}{\ensuremath{\boldsymbol{v}}}
\newcommand{\boldw}{\ensuremath{\boldsymbol{w}}}
\newcommand{\boldy}{\ensuremath{\boldsymbol{y}}}
\newcommand{\boldz}{\ensuremath{\boldsymbol{z}}}
\newcommand{\boldA}{\ensuremath{\boldsymbol{A}}}
\newcommand{\boldB}{\ensuremath{\boldsymbol{B}}}
\newcommand{\boldC}{\ensuremath{\boldsymbol{C}}}
\newcommand{\boldD}{\ensuremath{\boldsymbol{D}}}
\newcommand{\boldE}{\ensuremath{\boldsymbol{E}}}
\newcommand{\boldG}{\ensuremath{\boldsymbol{G}}}
\newcommand{\boldH}{\ensuremath{\boldsymbol{H}}}
\newcommand{\boldI}{\ensuremath{\boldsymbol{I}}}
\newcommand{\boldK}{\ensuremath{\boldsymbol{K}}}
\newcommand{\boldM}{\ensuremath{\boldsymbol{M}}}
\newcommand{\boldN}{\ensuremath{\boldsymbol{N}}}
\newcommand{\bR}{\ensuremath{\boldsymbol{R}}}
\newcommand{\boldU}{\ensuremath{\boldsymbol{U}}}
\newcommand{\boldV}{\ensuremath{\boldsymbol{V}}}
\newcommand{\boldW}{\ensuremath{\boldsymbol{W}}}
\newcommand{\boldX}{\ensuremath{\boldsymbol{X}}}
\newcommand{\calA}{\ensuremath{\mathcal{A}}}
\newcommand{\calN}{\ensuremath{\mathcal{N}}}
\newcommand{\calS}{\ensuremath{\mathcal{S}}}
\renewcommand{\Pr}{\mathop{\mathbf{Pr}}}
\newcommand{\E}{\mathop{\mathbf{E}}}
\newcommand{\R}{\mathbb{R}}
\newtheorem{lem}{Lemma}
\newtheorem{thm}[lem]{Theorem}
\newtheorem{claim}[lem]{Claim}
\newtheorem{defn}{Definition}
\newtheorem{assumption}[defn]{Assumption}
\newtheorem{definition}[defn]{Definition}
\def\th@remark{%
  \thm@headfont{\bfseries}%
  \normalfont 
  \thm@preskip\topsep \divide\thm@preskip\tw@
  \thm@postskip\thm@preskip
}
\theoremstyle{remark}
\newtheorem{remark}{Remark}
\DeclareMathOperator*{\argmin}{arg\,min}
\newcommand{\vast}{\bBigg@{4}}
\newcommand{\Vast}{\bBigg@{5}}
\newcommand{\ltwo}[1]{\left\|#1\right\|_2}
\newcommand{\lfrob}[1]{\left\|#1\right\|_F}
\newcommand{\snorm}[2]{\left\|#1\right\|_{#2}}
\DeclarePairedDelimiterX{\infdivx}[2]{(}{)}{%
  #1\;\delimsize\|\;#2%
}
\newcommand{\mypar}[1]{
	\noindent{\textbf{{#1}.}}}
\renewcommand{\epsilon}{\varepsilon}
\renewcommand{\tilde}{\widetilde}
\newcommand{\prA}{{\sf P}_\Omega}
\newcommand{\OmegaTrain}{\Omega^{\sf train}}
\newcommand{\OmegaValid}{\Omega^{\sf valid}}
\newcommand{\OmegaTest}{\Omega^{\sf test}}
\newcommand{\OmegaTestQuery}{\Omega^{\sf test\ query}}
\newcommand{\OmegaTestTarget}{\Omega^{\sf test\ target}}
\newcommand{\prApp}{{\sf P}_{\Omega''}}
\newcommand{\prAtest}{{\sf P}_{\Omega^{\sf test}}}
\newcommand{\hU}{\widehat{\boldU}}
\newcommand{\hV}{\widehat{\boldV}}
\newcommand{\alocal}{\calA_{\sf user}\,}
\newcommand{\aglobal}{\calA_{\sf item}\,}
\newcommand{\rclip}{\Gamma_{\boldu}}
\newcommand{\vclip}{\Gamma_{\boldM}}
\newcommand{\clip}[2]{{\sf clip}\left(#1,#2\right)}
\newcommand{\reg}{\lambda}
\newcommand{\aaltmin}{\calA_{\textsf{DPALS}}}
\newcommand{\av}{\tilde m}
\newcommand{\counts}{\ensuremath{\boldsymbol{c}}}
\newcommand{\countsP}{\ensuremath{\boldsymbol{\tilde c}}}
\newcommand{\head}{{\sf Frequent}\,}
\newcommand{\tail}{{\sf Infrequent}\,}
\newcommand{\dom}{\tau}
\newcommand{\Vt}{\boldV^{t}}
\newcommand{\Vht}{\widehat{\boldV}^{t}}
\newcommand{\Vh}{\widehat{\boldV}}
\newcommand{\Vto}{\boldV^{t+1}}
\newcommand{\Vhto}{\widehat{\boldV}^{t+1}}
\newcommand{\vhto}{\widehat{\boldv}^{t+1}}
\newcommand{\Ut}{\boldU^{t}}
\newcommand{\Uht}{\widehat{\boldU}^{t}}
\newcommand{\Uh}{\widehat{\boldU}}
\newcommand{\Vtot}{(\boldV^{t+1})^\top}
\newcommand{\Vhtot}{(\widehat{\boldV}^{t+1})^\top}
\newcommand{\Utt}{(\boldU^{t})^\top}
\newcommand{\Uhtt}{(\widehat{\boldU}^{t})^\top}
\newcommand{\Vo}{\boldV^*}
\newcommand{\Vot}{(\boldV^*)^\top}
\newcommand{\So}{\boldsymbol\Sigma^*}
\newcommand{\Uo}{\boldU^*}
\newcommand{\Uot}{(\boldU^*)^\top}
\newcommand{\so}{\sigma^*}
\newcommand{\vto}{\boldv^{t+1}}
\renewcommand{\u}{\boldu}
\newcommand{\ut}{\boldu^t}
\newcommand{\vt}{\boldv^t}
\newcommand{\uto}{\boldu^{t+1}}
\newcommand{\uo}{\boldu^* }
\newcommand{\uot}{(\boldu^*)^\top }
\newcommand{\vo}{\boldv^* }
\renewcommand{\v}{\boldv}
\newcommand{\vot}{(\boldv^*)^\top }
\newcommand{\uht}{\widehat{\boldu}^t}
\newcommand{\psd}[1]{\Pi_{\sf PSD}\left(#1\right)}
\newcommand{\ko}{\kappa}
\def\R{\mathbb{R}}
\def\Rnm{\R^{n\times m}}
\def\Rnr{\R^{n\times r}}
\def\Rmr{\R^{m\times r}}
\def\err{\mathrm{Err}}
\def\dpals{DPALS\xspace}
\def\dpfw{DPFW\xspace}
\long\def\commented#1{}
\newcommand{\wz}{\widetilde{\boldz}}
\newcommand{\projm}{\prA(\boldM)}
\newcommand{\cenm}{\boldN}
\newcommand{\wv}{\widetilde{\boldv}}
\newcommand{\cP}{\mathcal{P}}
\DeclareMathOperator{\sign}{sign}
\let\ln\log
\icmltitlerunning{Private Alternating Least Squares}
\begin{document}

\twocolumn[
\icmltitle{Private Alternating Least Squares: \\Practical Private Matrix Completion with Tighter Rates}
\icmlsetsymbol{equal}{*}

\begin{icmlauthorlist}
\icmlauthor{Steve Chien}{gooR}
\icmlauthor{Prateek Jain}{gooR}
\icmlauthor{Walid Krichene}{equal,gooR}
\icmlauthor{Steffen Rendle}{gooR}
\\
\icmlauthor{Shuang Song}{gooR}
\icmlauthor{Abhradeep Thakurta}{equal,gooR}
\icmlauthor{Li Zhang}{gooR}
\end{icmlauthorlist}

\icmlaffiliation{gooR}{Google Research}
\icmlcorrespondingauthor{Walid Krichene}{walidk@google.com}
\icmlcorrespondingauthor{Abhradeep Thakurta}{athakurta@google.com}

\icmlkeywords{Alternating Least Squares, Differential Privacy, Matrix Completion}
\vskip 0.3in
]


\printAffiliationsAndNotice{}  

\begin{abstract}
We study the problem of differentially private (DP) matrix completion under user-level privacy. We design a joint differentially private variant of the popular Alternating-Least-Squares (ALS) method that achieves: i) (nearly) optimal sample complexity for matrix completion (in terms of number of items, users), and ii) the best known privacy/utility trade-off both theoretically, as well as on benchmark data sets. In particular, we provide the first global convergence analysis of ALS with {\em noise} introduced to ensure DP, and show that, in comparison to the best known alternative (the Private Frank-Wolfe algorithm by \citet{jain2018differentially}), our error bounds scale significantly better with respect to the number of items and users, which is critical in practical problems. Extensive validation on standard benchmarks demonstrate that the algorithm, in combination with carefully designed sampling procedures, is significantly more accurate than existing techniques, thus promising to be the first practical DP embedding model.
\end{abstract}
\setlength{\textfloatsep}{16pt}
\section{Introduction}
\label{sec:intro}
Given $\boldM_{ij}, (i,j)\in \Omega$ where $\Omega\subseteq [n]\times [m]$ is a set of observed user-item ratings, and assuming $\boldM\approx \Uo \Vot \in \mathbb{R}^{n\times m}$ to be a nearly low-rank matrix, the goal of low-rank matrix completion (LRMC) is to efficiently learn  $\hU\in \mathbb{R}^{n\times r}$ and $\hV\in \mathbb{R}^{m\times r}$, such that $\boldM\approx \hU\hV{}^\top$.

LRMC, a.k.a. matrix factorization, is a cornerstone technique for building recommendation systems~\cite{koren,hu2008ials}, and though proposed over a decade ago, it remains highly competitive~\cite{rendle2019baselines}. In the recommendation setting, $\boldM$ represents a mostly unknown user-item ratings matrix and $\hU$ and $\hV$ capture the user and item embeddings. Using the learned ($\hU$, $\hV$), the system computes rating predictions $\widehat \boldM_{ij}=(\hU\hV{}^\top)_{ij}$ to recommend items for the users. To ensure good generalization, one would set the rank $r\ll \min(m,n)$. 

Such models, while highly successful in practice, have the risk of leaking users' ratings through  model parameters or their recommendations. The privacy risk of similar models has been well documented, and the protection against it has been intensively studied~\cite{dinur2003revealing,dwork2007price,korolova2010privacy,calandrino2011you,shokri2017membership,carlini2019secret,carlini2020attack,carlini2020extracting,thakkar2020understanding}. In this paper, we focus on learning user and item embeddings, and consequently user-item recommendations, while ensuring privacy of users' ratings.

We conform to the well-established formal notion of differential privacy (DP)~\cite{ODO,DMNS} to protect users' ratings. 
We operate in the setting of \emph{user-level} privacy~\cite{dwork2014algorithmic,jain2018differentially}, where we intend to protect \emph{all the ratings by the user}, a much harder task than protecting a single rating from the user (a.k.a. \emph{entry-level privacy})~\cite{hardt2013beyond,meng2018personalized}. 
Note that \emph{user-level} privacy is critical in this problem, as the ratings from a single user tend to be correlated and can thus be used to fingerprint a user~\cite{calandrino2011you}. 
As is standard in the user-level privacy literature~\cite{jain2018differentially}, we estimate the shared item embeddings $\hV$ while preserving privacy with respect to the users.
In contrast, each user {\em independently} computes their embedding (a row of $\hU$) as a function of their own ratings and the privacy preserving item embeddings $\hV$. 
Formally, this setup is called \emph{joint differential privacy}~\cite{kearns2014mechanism}, and it is well-established~\cite{hardt2012beating,hardt2013beyond} that such a relaxation is necessary to learn non-trivial recommendations while ensuring user-level privacy.

While several works have studied LRMC under joint-differential privacy \cite{mcsherry2009differentially, liu2015fast, jain2018differentially}, most of the existing techniques do not provide satisfactory empirical performance compared to the state-of-the-art (SOTA) non-private LRMC methods. Furthermore, these works either lack a rigorous performance analysis \cite{mcsherry2009differentially, liu2015fast} or provide  guarantees that are significantly weaker \citep{jain2018differentially} than that of non-private LRMC algorithms. Matrix factorization can also be solved using other first-order methods such as stochastic gradient descent~\cite{ge2016matrix} or alternating gradient descent~\cite{lu19pagd}, so one may apply the differentially private SGD (DPSGD) algorithm~\cite{song2013stochastic,BST14,abadi2016deep} to achieve privacy. However, applying DPSGD to LRMC is challenging as SGD typically requires many steps to converge, thus increasing privacy cost.

In this work, we design and analyze a differentially private version of the widely used alternating least squares (ALS) algorithm for LRMC \cite{koren2009matrix,jain2013low}. ALS alternates between optimizing over the user embeddings $\hU$ and the item embeddings $\hV$, each through least squares minimization. One important property of ALS is that when solving for one side, the optimization can be done independently for each user or item, which makes ALS highly scalable. Our key insight is that this decoupling of the solution is also useful for privacy-preserving computation, since there is no accumulation of noise when solving for the embeddings of different users (or items). Besides, ALS is known to require few iterations to converge in practice, making it particularly suitable for privacy preserving~LRMC.

Indeed, we present a differentially private variant of ALS, which we refer to as \dpals, and demonstrate that it enjoys much tighter error rates (see Table~\ref{tab:my_label}) and better empirical performance than the current SOTA, the differentially private Frank-Wolfe (\dpfw) method of \citet{jain2018differentially}. Furthermore, on the large scale benchmark of MovieLens 20M, \dpals produces the first realistic DP embedding model with competitive recall metric under moderate privacy loss.

More specifically, our contributions are the following.

\mypar{Private alternating least squares for matrix completion} We provide the first differentially private version of alternating least squares (\dpals) for matrix completion with user-level privacy guarantee (Section~\ref{sec:privAlt}). The algorithm is conceptually simple, efficient, and highly scalable. We provide rigorous analysis on its privacy guarantee under the notion of Joint R\'enyi Differential Privacy.

\mypar{Initizlization via noisy power iteration} For convergence of DPALS algorithm, we need it to be initialized with a $\hV^{0}$ close to $\boldV^*$ in spectral norm. The standard approach based on private PCA~\cite{dwork2014analyze} would require $n=\widetilde\Omega\left(\frac{m\sqrt m}{\epsilon}\right)$ to achieve the initialization condition. Instead, we show that with a careful analysis, initializing with noisy power iteration only requires $n=\widetilde\Omega\left(\frac{m}{\epsilon}\right)$. Our analysis shows in particular that it suffices that the top-$r$ eigenspace of $\boldA := \prA(\boldM)^\top\prA(\boldM)$ be incoherent, and that there be a $\Omega(\log^2 m)$ gap between the top-$r$ eigenvalues and the rest. This result improves on~\cite{hardt2013noisy} which required all the eigenvectors of $\boldA$ to be incoherent, a condition that is hard to guarantee in our setting~\cite{dekel2011eigenvectors,vu2015random,rudelson2015delocalization}.

\mypar{Tighter privacy/utility/computation trade-offs} We prove theoretical guarantees on the sample complexity and the error bounds of \dpals under standard assumptions (Section~\ref{sec:util}). These bounds are much tighter than the current SOTA, the \dpfw method~\cite{jain2018differentially}. In particular, we show the following. First, \dpals requires only $O(\log^{O(1)} n)$ samples per user to guarantee its convergence. In contrast, \dpfw requires $\sqrt{m}$ ratings per user.
Second, to achieve a Frobenius norm error of $\zeta$, \dpals requires $n=\widetilde{\Omega}\left(\frac{m\sqrt{m}}{\zeta\epsilon}+m\right)$ users, which is nearly optimal in terms of $\zeta$ and $\epsilon$. In contrast,  DPFW's sample complexity is $n=\tilde\Omega\left(m^{5/4}/(\zeta^{5}\epsilon)\right)$; note a significant improvement in terms of $\zeta$. Finally, Private SVD~\cite{mcsherry2009differentially} is not even {\em consistent}, i.e., for a fixed $\epsilon, m, |\Omega|=n\sqrt{m}$, even if we scale $n\rightarrow \infty$, the Frobenius norm error bound does not converge to $0$ (see Theorem B.3 of \citet{jain2018differentially}).  
\begin{table}[t]
\centering
\caption{Sample complexity bounds for various algorithms, assuming constant Frobenius norm error. Here, $n$ is the number of users, $m$ is the number of items, and $\widetilde{\Omega}(\cdot)$ hides ${\sf polylog}(n,m,1/\delta)$. (*) assumes additional property of $\boldM$ being incoherent.
}
\label{tab:my_label}
\resizebox{\columnwidth}{!}{\begin{tabular}{ |c|c|c|c|} 
\hline
{\bf Algorithm} & {\bf Bound on} $n$ & {\bf Bound on} $|\Omega|/n$ & {\bf Iterations}\\
\hline
\hline
\makecell{Trace Norm (*) (non-priv.)\\\cite{candesrecht}} & $\widetilde{\Omega}(m)$ & $\widetilde{\Omega}(\log^2 n)$&  ${\sf poly}(n,m)$\\
\hline
ALS (*) (non-priv.)~\cite{jain2013low} & $\widetilde{\Omega}(m)$ & $\widetilde{\Omega}(\log^2 n)$& ${\sf polylog}(n,m)$\\ 
\hline
\hline
\makecell{Private SVD(*)\\\cite{mcsherry2009differentially}} & - & - & - \\
\hline
Private SGLD~\cite{liu2015fast} & - & - & - \\
\hline
Private FW~\cite{jain2018differentially} & $\widetilde{\Omega}(m^{5/4})$ & $\widetilde{\Omega}(\sqrt{m})$ & ${\sf poly}(n,m)$\\
\hline
{\textbf{Private ALS (*) (this work)}} & $\widetilde{\Omega}(m)$  & $\widetilde{\Omega}(\log^3 n)$ & ${\sf polylog}(n,m)$\\
\hline
\end{tabular}}
\end{table}

\mypar{Practical techniques to improve accuracy} One main difficulty in applying \dpals to practical problems comes from a heavy skew in the item distribution. We propose two heuristics to reduce the skew while preserving privacy (Section~\ref{sec:practCons}). Experiments on real-world benchmarks show that these techniques can significantly improve model quality.

\mypar{Strong empirical results using \dpals} 
We carry out an extensive study of \dpals on synthetic and real-world benchmarks. Aided by the aforementioned practical techniques, \dpals achieves significant gains over the current SOTA method. In particular, on the MovieLens 10M rating prediction benchmark, \dpals achieves the same error rate as the current SOTA even when trained on a fraction (23\%) of users. When trained on all users, it achieves a relative decrease in RMSE of at least 7\%. \dpals also achieves remarkably good performance on the MovieLens 20M item recommendation benchmark with modest privacy loss, and remains competitive even with non-private ALS, the first DP private embedding model to achieve such strong results.

\section{Background}
\label{sec:background}

\subsection{Notation}
Let $[m]$ denote the set $\{1, 2, \cdots, m\}$. Let $\Rnm$ denote the set of $n\times m$ matrices. Throughout the paper, we use bold face uppercase letters to represent matrices and lowercase letters for vectors. For any matrix $\boldA=(\boldA_{ij})\in\Rnm$, let $\boldA_i$ be the $i$-th \emph{row} vector of $\boldA$. Denote by $\left\| \boldA \right\|_F, \left\| \boldA \right\|_\infty$ the Frobenius norm and the max norm of $\boldA$. For $\Omega\subseteq [n]\times[m]$, define the projection $\prA(\boldA)\in \Rnm$ as $\prA(\boldA)_{ij}=\boldA_{ij}$ if $(i,j)\in \Omega$ and $0$ otherwise. For $i\in [n]$, define $\Omega_i:=\{j:(i, j)\in\Omega\}$. Similarly, for $j\in[m]$, let $\Omega_j=\{i:(i,j)\in\Omega\}$. For $\boldu, \boldv\in\mathbb{R}^r$, we use $\boldu\cdot \boldv\in\mathbb{R}$ to denote their dot product, and $\boldu\otimes \boldv\in\mathbb{R}^{r\times r}$ for their outer product.

\subsection{Matrix Completion, Alternating Least Squares}\label{sec:als}

Let $\boldM\in\Rnm$ be a rank $r$ matrix, such that each entry $\boldM_{ij}$ ($i\in[n]$, $j\in[m]$) represents the preference/affinity of user $i$ for item $j$. Given a set of observed entries $\prA(\boldM)$, $\Omega\subseteq [n]\times[m]$, the goal of LRMC is to reconstruct $\boldM$ with minimal error.
This can be achieved by finding $\hU\in \Rnr$ and $\hV\in \Rmr$ such that the regularized squared error $\|\prA\big(\boldM-\hU\hV{}^\top\big)\|_F^2+\lambda\|\hU\|_F^2 + \lambda\|\hV\|_F^2$ is minimized. This minimization problem is NP-hard in general~\cite{hardt2014computational}. But the alternating least squares (ALS) algorithm has proved to work well in practice.

ALS alternatingly computes $\hU, \hV$ by minimizing the above objective while assuming the other embeddings fixed. Each step can be solved efficiently through the standard least squares algorithm with the following closed form solution.
{\begin{align}
\forall i\quad &\hU^{t}_i = (\lambda \boldI + \sum_{j\in\Omega_i} \hV^{t}_j \otimes \hV^{t}_j)^{-1} \sum_{j\in\Omega_i} \boldM_{ij} \hV^{t}_j, \label{algo:alsf1}\\
\forall j\quad &\hV^{t+1}_j = (\lambda\boldI + \sum_{i\in\Omega_j} \hU^{t}_i \otimes \hU^{t}_i)^{-1} \sum_{i\in\Omega_j} \boldM_{ij} \hU^{t}_i. \label{algo:alsf2}
\end{align}}%
While ALS does not guarantee convergence to the global optimum in general, it works remarkably well in practice and often produces $\hU$ and $\hV$ such that $\hU \hV{}^\top$ is a good approximation of $\boldM$. 
The practical success of ALS has inspired many theoretical analyses, which make the following additional assumptions on $\boldM$ and $\Omega$.
\begin{assumption}[$\mu$-incoherence]\label{def:incoherence}
Let $\boldM = \Uo \So \Vot$ be the singular value decomposition of $\boldM$, i.e. $\Uo\in\Rnr, \Vo\in\Rmr$ are orthonormal matrices, and $\So\in\mathbb{R}^{r\times r}$ is the diagonal matrix of the singular values of $\boldM$.\\
We assume that $\boldM$ is $\mu$-incoherent, that is, $\forall i\in[n]$, $\ltwo{\Uo_i}\leq \frac{\mu\sqrt{r}}{\sqrt{n}}$; and $\forall j\in[m]$, $\ltwo{\Vo_j}\leq\frac{\mu\sqrt{r}}{\sqrt{m}}$.
\end{assumption}

\begin{assumption}[Random $\Omega$]\label{def:random}
We assume that $\Omega$ are random observations with probability $p$, that is, $\Omega=\{(i,j) \in [n]\times[m] : \delta_{ij}=1\}$, where $\delta_{ij}\in\{0,1\}$ are i.i.d. random variables with $\Pr[\delta_{ij}=1]=p$.
\end{assumption}

\citet{jain2013low,hardt2014fast} showed that ALS converges to $\boldM$ with high probability if $\boldM$ is $\mu$-incoherent and $p=\tilde{\Omega}\left(\frac{\log n}{m}\right)$, where $n\geq m$ and $\tilde{\Omega}$ hides polynomial dependence on $\mu$, $r$, and the condition number of $\boldM$.
In this work, we make the same assumptions on $\boldM$ and $\Omega$. Our key theoretical contribution is a similar convergence result for \dpals, under the additional requirements of user-level differential privacy.

\subsection{Joint Differential Privacy}
\label{sec:diffPri}

Differential privacy~\cite{DMNS,ODO} is a widely adopted privacy notion. We use the variant of \emph{user-level} joint differential privacy (Joint DP). Intuitively, Joint DP requires any information which may cross different users to be differentially private, but allows each individual user to use her own private information to her full advantage, for example, when computing the embeddings for generating recommendations to herself. This notion was already implicit in~\cite{mcsherry2009differentially} and made formal in~\cite{kearns2014mechanism,jain2018differentially}. 

Let $D=\{d_1,\ldots,d_n\}$ be a data set of $n$ records, where each sample $d_i$ is drawn from a domain $\dom$ and belongs to individual $i$ (which we also refer to as a \emph{user}). Let $\calA:\tau^*\to\calS^n$ be an algorithm that produces $n$ outputs in some space $\calS$, one for each user $i$. 
Let $D_{-i}$ be the data set with the $i$-th user removed, and let $\calA_{-i}(D)$ be the set of outputs without that of the $i$-th user. 
Also, let $(d_i;D_{-i})$ be the data set obtained by adding $d_i$ (for user $i$) to the data set $D_{-i}$. Joint DP and its R\'enyi differential privacy~\cite{mironov2017renyi} (Joint RDP) variant are defined as follows.
\begin{definition}[Joint Differential Privacy~\cite{kearns2014mechanism}]
An Algorithm $\calA$ is $(\epsilon,\delta)$-jointly differentially private if for any user $i$, for any possible value of data entry $d_i,d'_i\in\dom$, for any instantiation of the data set for other users $D_{-i}\in\dom^{n-1}$, and for any set of outputs $S\subseteq\calS^n$, the following two inequalities hold simultaneously:
\begin{small}
\begin{align*}
&\Pr_\calA\left[\calA_{-i}((d_i;D_{-i}))\in S\right]\leq e^\epsilon\Pr_\calA[\calA_{-i}(D_{-i})\in S]+\delta\\
&\Pr_\calA\left[\calA_{-i}(D_{-i})\in S\right]\leq e^\epsilon\Pr_\calA[\calA_{-i}((d_i;D_{-i}))\in S]+\delta.
\end{align*}
\end{small}%
An algorithm $\calA$ is $(\alpha,\epsilon)$-joint R\'enyi differentially private (Joint RDP) if $D_{\alpha}\left(\calA_{-i}((d_i;D_{-i}))||\calA_{-i}(D_{-i})\right)\leq \epsilon$ and $D_{\alpha}\left(\calA_{-i}(D_{-i})||\calA_{-i}((d_i;D_{-i}))\right)\leq \epsilon$, where $D_{\alpha}$ is the R\'enyi divergence of order $\alpha$. 
\label{def:diffPriv}
\end{definition}

If we replace $\calA_{-i}$ with $\calA$ in the definition, we would recover the standard definition of DP and RDP. We note that the joint DP (resp. joint RDP) enjoys the same composability properties as the notion of DP (resp. RDP).

\section{\dpals: Private Alternating Least Squares}
\label{sec:privAlt}

We now provide the details of the \dpals algorithm and prove its privacy guarantee in the joint DP model.

\begin{algorithm}[t!]
\SetAlgoLined
\DontPrintSemicolon
\SetKwProg{myproc}{Procedure}{}{}
{\bf Required}: $\prA(\boldM)$: observed ratings, $\sigma$: noise standard deviation, $\rclip$: row clipping parameter, $\vclip$: entry clipping parameter, $T$: number of steps, $\reg$: regularization parameter, $r$: rank, $k$: maximum number of ratings per user in $\aglobal$, $\hV^{0}$: initial $\hV$. \\
\nl Clip entries in $\prA(\boldM)$ so that $\|\prA(\boldM)\|_\infty \leq \vclip$ \label{line:clipentry}\\
\For{$0\leq t\leq T$}{
    \For{$1\leq i\leq n$}{
        \nl $\Uht_i\leftarrow \alocal(\hV^t, \Omega_i, \prA(\boldM)_i, T, \reg, \rclip)$
    }
\nl $\hU^{t}\leftarrow [\Uht_1, \cdots, \Uht_n]^\top$\\
\nl $\hV^{t+1} \leftarrow \aglobal(\hU^t, \Omega, \prA(\boldM), k, \reg, \rclip, \vclip)$
}
\nl \KwRet $\hU^T, \hV^T$

\medskip
\myproc{$\aglobal$($\boldU$, $\Omega$, $\prA(\boldM)$, $k$, $\reg$, $\rclip$, $\vclip$)}{
\nl $\Omega'\leftarrow$ up to $k$ random samples of $(i, j)\in\Omega$, $\forall i\in[n]$. \label{line:sample}\\
\For{$1\leq j\leq m$}{
        \nl $\boldG_j \leftarrow \mathcal{N}_{\sf sym}\left(0,\rclip^4\cdot\sigma^2\right)^{r \times r}$\label{line:noise1}\\
        \nl $\boldg_j \leftarrow \mathcal{N}\left(0,\rclip^2\vclip^2\cdot\sigma^2\right)^r$\label{line:noise2}\\
        \nl $\boldX_j \leftarrow \reg \boldI + \sum_{i\in\Omega'_j} \boldU_i \otimes \boldU_i +\boldG_j$\label{line:LHS}\\
        \nl $\boldV_j\leftarrow \psd{\boldX_j}^+\left(\sum_{i\in\Omega'_j} \boldM_{ij}\cdot\boldU_i+\boldg_j\right)$ \label{alg:step psd}\label{line:Vsol}\\
    }
	\nl $\widetilde{\boldV}=[\boldV_1, \cdots, \boldV_m]^\top$\\
    \nl \KwRet $\boldV = \widetilde{\boldV}(\widetilde{\boldV}^\top \widetilde{\boldV})^{-1/2}$
  }
  \medskip
\myproc{$\alocal$($\boldV$, $\Omega_i$, $\prA(\boldM)_i$, $T$, $\reg$, $\rclip$)}{
  \nl $\Omega_i'\leftarrow$ random samples of $1/T$ fraction of $j\in \Omega_i$\label{line:usample}\\
  \nl $\small \boldu\leftarrow (\reg \boldI + \sum_{j\in\Omega_i'} \boldV_j \otimes \boldV_j) ^{-1} \sum_{j\in\Omega_i'} \boldM_{ij} \boldV_j$\label{line:Usol}\\
 \nl \KwRet $\clip{\boldu}{\rclip}$\label{line:clip}}
  \caption{\dpals: Private Matrix Completion via Alternating Minimization}\label{Alg:alt}
\end{algorithm}

\mypar{Notation} Let $\calN(0,\sigma^2)$ be the Gaussian distribution of variance $\sigma^2$, and
$\calN_{\sf sym}(0, \sigma^2)^{r\times r}$ be the distribution of symmetric matrices where each entry in the upper triangle is drawn i.i.d. from $\calN(0,\sigma^2)$.
For a symmetric $\boldA$, let $\psd{\boldA}$ be its projection to the positive semi-definite cone, obtained by replacing its negative eigenvalues with $0$.
Define $\clip{\boldu}{c} = \boldu \cdot \max(1, c/\|\boldu\|_2)$, i.e., the projection of $\boldu$ on an $\ell_2$ ball of radius $c$.
Let $\boldA^+$ be the pseudoinverse of~$\boldA$.

\begin{figure}
\centering
\includegraphics[width=0.45\textwidth]{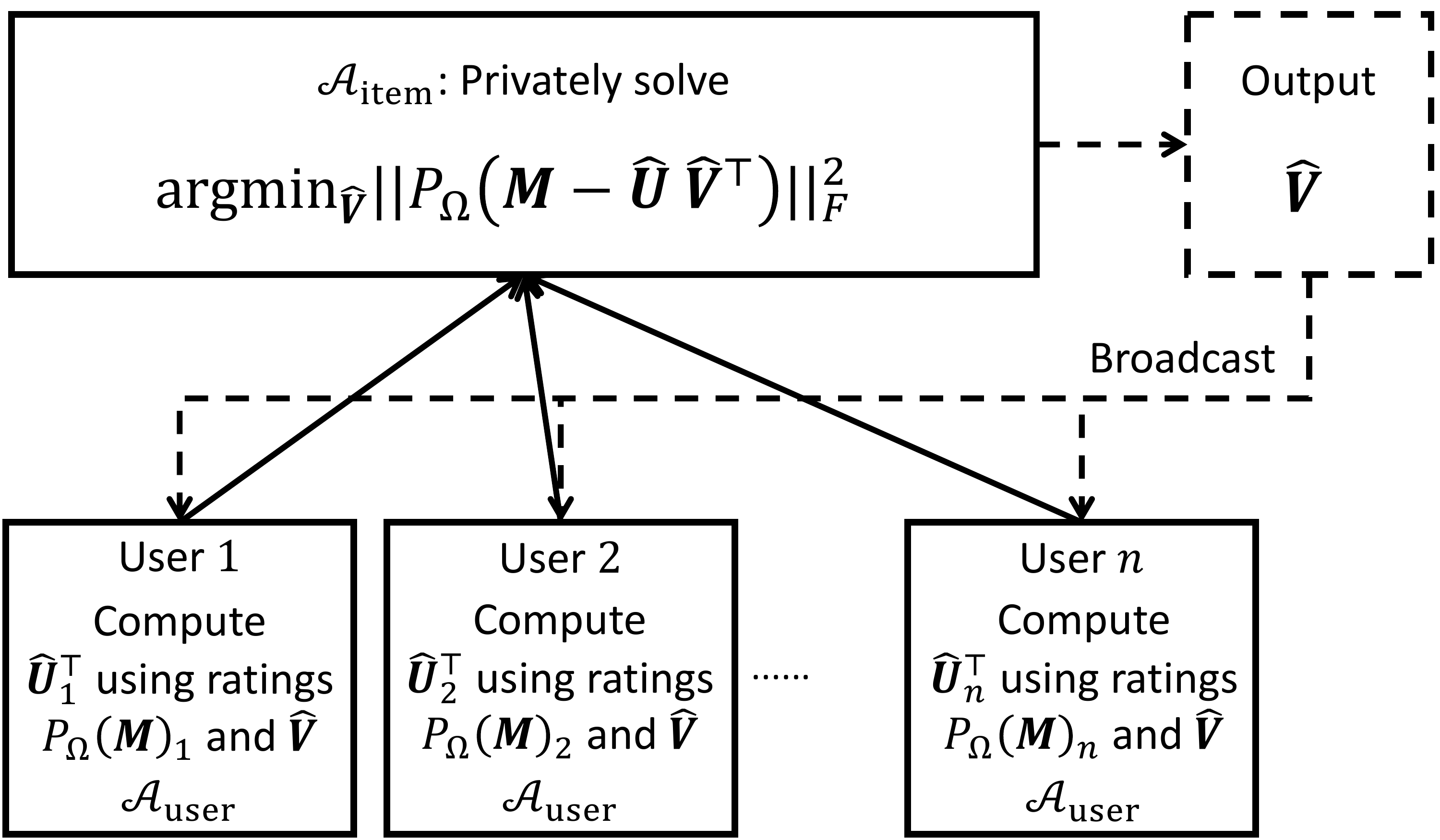}
\vspace{-0.3cm}
\caption{Block schematic of Joint differentially private alternating least squares algorithm. 
Solid lines and boxes represent privileged computations not visible to an adversary or other users.
Dashed boxes and lines are public information accessible to anyone.}
\label{fig:blockSchematic}
\end{figure}

\subsection{Algorithm}
The private alternating least squares algorithm, \dpals, is described in Algorithm~\ref{Alg:alt}.
It follows the standard ALS steps, i.e. it alternatingly solves the least squares problem to obtain $\hU$ and $\hV$ using \eqref{algo:alsf1} and \eqref{algo:alsf2}. 
To guarantee joint DP, we compute differentially private item embeddings $\hV{}^{t+1}$ (using procedure $\aglobal$) by solving a private variant of~\eqref{algo:alsf2}, and compute each row of
$\hU{}^{t+1}$ \emph{independently without any noise} using procedure $\alocal$. 
A block schematic of the algorithm is presented in Figure~\ref{fig:blockSchematic}. 

Here we describe how the privacy is guaranteed in $\aglobal$; see Theorem~\ref{thm:privG} for a formal statement. For a given $j\in[m]$, write $\boldH^t_j = \lambda \boldI + \sum_{i\in\Omega_j} \hU{}^t_i \otimes \hU{}_i^t$ and $\boldw^t_j = \sum_{i\in\Omega_j} \boldM_{ij} \hU{}^t_i$. Then the non-private update~\eqref{algo:alsf2} can be written as $\hV{}^{t+1}_j = \left(\boldH{}_j^t\right)^{-1} \boldw^t_j$. In the private version, we need to add noise to protect both $\boldH^t_j$ and $\boldw^t_j$.
To ensure sufficient noise, we limit the influence of each user by ``clipping'' each $\hU{}^t_i$ to a bounded $\ell_2$ norm $\rclip$ (Line~\ref{line:clip} in $\alocal$) and resampling $\Omega$ such that each user participates in at most~$k$ items' computation (Line~\ref{line:sample} in $\aglobal$). 
We then apply the Gaussian mechanism to $\boldH^t_j$ and $\boldw^t_j$ before using them to compute $\hV{}^{t+1}_j$ in $\aglobal$ (Lines 9--12).

While the above procedure is sufficient to guarantee privacy, we need a few additional modifications for utility analysis.

\mypar{Initialization} Random initialization has worked well for our empirical study. For the utility analysis, we need $\hV^0$ to be reasonably close to $\Vo$ (in terms of spectral norm). We show that by using the noisy power iteration process, we are able to obtain $\hV^0$ within the required bound with $n$ almost linear in $m$, an improvement compared to private PCA~\cite{dwork2014analyze}, which would require $n=\widetilde{\Omega}(m\sqrt{m})$. See Section~\ref{sec:initialization} for details.

\mypar{Sampling from $\Omega$} To ease the analysis, we require the observed values to be independent across different steps. This is achieved by resampling from $\Omega$ at the beginning of $\aglobal$ (Line~\ref{line:sample}) and $\alocal$ (Line~\ref{line:usample}). The sampling in $\aglobal$ is more important as it also limits the number of items per user, for privacy purposes. In practice, we omit the sampling in $\alocal$, and sample only once for $\aglobal$. The sampling distribution used in the latter has a significant impact in practice, as discussed in Section~\ref{sec:practCons-1}.

\subsection{Computational Complexity}
The computational complexity of \dpals is comparable to that of ALS. More precisely, the $V$ step of ALS involves computing $\boldH^t_j$ and $\boldw^t_j$, in $O(|\Omega'|r^2)$, then solving the $m$ linear systems $\hV{}^{t+1}_j = \left(\boldH{}_j^t\right)^{-1} \boldw^t_j$ in $O(mr^3)$, for a total complexity of $O(|\Omega'|r^2 + mr^3)$ (and similarly for the $U$ step). This scales linearly in the number of observations $|\Omega'|$ and the number of items $m$. In the private version ($\alocal$), the only additional operations are forming the noise matrices (Lines~\ref{line:noise1}--\ref{line:noise2}) in $O(mr^2)$, and projecting $\boldX_j$ (Line~\ref{line:Vsol}), in $O(mr^3)$, so the complexity per iteration is the same as ALS.
In comparison, the complexity of DPFW is $O(\Gamma(m + |\Omega'|))$, using Oja's method. The per-iteration complexity also scales linearly in $m$ and $|\Omega'|$. Even though the per-iteration complexity of DPFW and \dpals are comparable, \dpals converges in much fewer iterations (see Appendix~\ref{app:emp-experiments} for an example), which makes it more scalable in practice.

\subsection{Privacy Guarantee}
We now provide the privacy guarantee for \dpals. As each subroutine in \dpals is a variant of the Gaussian mechanism, we can apply the R\'enyi accounting~\cite{mironov2017renyi} and convert to $(\epsilon,\delta)$-DP. See Appendix~\ref{app:priv} for the proof.
\begin{thm}[Privacy guarantee]\label{thm:privG}
Excluding the initialization of $\hV^0$, 
Algorithm~\ref{Alg:alt} is $\left(\alpha,\alpha\rho^2\right)$-joint RDP with $\rho^2=\frac{kT}{2\sigma^2}$. Hence for any $\epsilon > 0$ and $\delta \in (0, 1)$, Algorithm~\ref{Alg:alt} is $(\epsilon, \delta)$-joint DP if we set $\sigma=\frac{\sqrt{(2kT)(\epsilon + \ln(1/\delta))}}{\epsilon}$. 

\end{thm}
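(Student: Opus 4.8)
The plan is to reduce the joint-RDP analysis to privatizing only the shared item embeddings, and then to treat each invocation of $\aglobal$ as a Gaussian mechanism whose noise scales are calibrated to the per-user sensitivity.

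First I would establish the joint-DP reduction. Fix a user $i$; by Definition~\ref{def:diffPriv} we must bound $D_\alpha$ between $\calA_{-i}((d_i;D_{-i}))$ and $\calA_{-i}(D_{-i})$. The decisive observation is that the only quantities in Algorithm~\ref{Alg:alt} that aggregate across users are the item embeddings $\hV^1,\ldots,\hV^T$ produced by $\aglobal$; every row $\hU^t_{i'}$ is computed by $\alocal$ from user $i'$'s own ratings and the current $\hV^t$ alone (Lines~\ref{line:usample}--\ref{line:clip}), without cross-user interaction and without noise. Hence $\calA_{-i}(D)=\{\hU^t_{i'}\}_{i'\ne i}$ is a randomized post-processing of the sequence $(\hV^t)_t$ together with the fixed records $\{d_{i'}\}_{i'\ne i}$, none of which depends on $d_i$ except through $(\hV^t)_t$. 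By the data-processing inequality for Rényi divergence it therefore suffices to bound the joint RDP of the released sequence $(\hV^t)_t$ with respect to the presence or absence of user $i$. This is exactly the feature of ALS — that the user step decouples completely across users — that makes the noiseless $\alocal$ compatible with joint privacy.

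Second I would analyze a single invocation of $\aglobal$ as a Gaussian mechanism. Entry clipping (Line~\ref{line:clipentry}) guarantees $|\boldM_{ij}|\le\vclip$, the $\ell_2$ clip in $\alocal$ (Line~\ref{line:clip}) guarantees $\ltwo{\boldU_i}\le\rclip$, and the resampling (Line~\ref{line:sample}) guarantees each user contributes to the sums of at most $k$ distinct items $j$. For a fixed $j$, adding or removing user $i$ changes the quadratic term $\boldX_j$ by $\boldU_i\otimes\boldU_i$, of Frobenius norm $\ltwo{\boldU_i}^2\le\rclip^2$, and changes the linear term by $\boldM_{ij}\boldU_i$, of norm at most $\vclip\rclip$. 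The noise in Lines~\ref{line:noise1}--\ref{line:noise2} is chosen precisely to whiten these: dividing $\boldX_j$ by $\rclip^2$ and the linear term by $\vclip\rclip$ turns both releases into additions of unit-scale noise $\calN(0,\sigma^2)$ to queries of per-user sensitivity at most $1$. Since a user affects at most $k$ items and the noise is independent across items, the Gaussian-mechanism RDP bound with linear RDP composition yields that one call to $\aglobal$ is $(\alpha,\alpha k/(2\sigma^2))$-joint RDP. Matching the constant $\tfrac{1}{2}$ is the one delicate point: it requires exploiting the symmetric structure of $\boldG_j$ (noise placed only on the upper triangle, matching the symmetric rank-one sensitivity $\boldU_i\otimes\boldU_i$) and jointly accounting for the quadratic and linear releases, rather than naively summing their individual costs.

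Third, composing the $T$ calls to $\aglobal$ by the Rényi composition theorem gives that $(\hV^t)_t$ is $(\alpha,\alpha\rho^2)$-joint RDP with $\rho^2=kT/(2\sigma^2)$, and by the reduction of the first step this is the joint RDP of the whole algorithm (excluding the initialization of $\hV^0$). Finally I would convert to $(\epsilon,\delta)$-DP: optimizing $\alpha$ in $\alpha\rho^2+\tfrac{\ln(1/\delta)}{\alpha-1}$ gives the standard bound $\epsilon_{\mathrm{DP}}\le\rho^2+2\rho\sqrt{\ln(1/\delta)}$ with $\rho=\sqrt{kT/(2\sigma^2)}$, which a direct substitution shows is at most $\epsilon$ for the stated $\sigma=\sqrt{2kT(\epsilon+\ln(1/\delta))}/\epsilon$. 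The main obstacle, in my view, is the first step — rigorously arguing that releasing every $\hU^t_i$ in the clear is consistent with joint privacy, i.e. that $\calA_{-i}$ is a post-processing of the privatized $\hV$-sequence alone; everything downstream is standard once the noise calibration is matched to the clipped sensitivities, and the only genuinely fiddly bookkeeping is pinning down the constant in $\rho^2$ via the symmetric noise structure and the correct count of the $k$ items per user across $T$ rounds.
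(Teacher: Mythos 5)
Your proposal follows the same route as the paper's Appendix~\ref{app:priv}: reduce the joint-RDP analysis to the privatized $\hV$-sequence, treat each call to $\aglobal$ as a Gaussian mechanism whose per-user sensitivity is controlled by the entry/row clipping and the $k$-item cap, compose over the $T$ rounds by R\'enyi composition, and convert to $(\epsilon,\delta)$-DP by optimizing the order $\alpha$ (the paper performs exactly the optimization you describe, arriving at the same root $\rho^+=\sqrt{\ln(1/\delta)+\epsilon}-\sqrt{\ln(1/\delta)}$ and the same bound on $\sigma$). Your first step --- the post-processing argument that releasing every $\hU^t_{i'}$ in the clear is consistent with joint privacy because $\calA_{-i}$ depends on $d_i$ only through the $\hV^t$'s --- is spelled out more explicitly than in the paper, which compresses it into an appeal to ``the generalization of the standard composition property of RDP to joint RDP''; that is a useful elaboration, not a deviation.

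The one place where your argument does not deliver what you claim is the constant in $\rho^2$. Joint accounting of the quadratic and linear releases for a single item $j$ is \emph{not} cheaper than summing their individual costs: for Gaussian mechanisms with independent noise the R\'enyi divergence is exactly additive, $\tfrac{\alpha}{2}\left(\Delta_1^2/s_1^2+\Delta_2^2/s_2^2\right)$, so with both queries normalized to unit sensitivity the two releases together cost $\alpha/\sigma^2$ per affected item, hence $\alpha k/\sigma^2$ per round --- a factor of $2$ worse than the claimed $\alpha k/(2\sigma^2)$. The symmetric (upper-triangle) noise structure does not rescue this in the worst case either: the squared $\ell_2$ sensitivity of the upper-triangle vector of $\boldU_i\otimes\boldU_i$ is $\tfrac12\left(\|\boldU_i\|_2^4+\|\boldU_i\|_4^4\right)$, which equals $\rclip^4$ when $\boldU_i$ is concentrated on a single coordinate, so the quadratic release alone can already cost $\alpha/(2\sigma^2)$ before the linear release is accounted for. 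To be fair, the paper's own proof invokes ``the Gaussian mechanism'' in a single line and does not establish the constant $\tfrac12$ any more rigorously; but the mechanism you propose for recovering it does not close the gap, and as written your argument proves the theorem with $\rho^2=kT/\sigma^2$ rather than $kT/(2\sigma^2)$ (which, of course, changes nothing asymptotically).
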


The guarantee holds for all values of the parameters $\rclip$, $\vclip$, $T$, $\reg$, $r$, $k$. Note in particular that the scale of the noise (Lines~\ref{line:noise1}--\ref{line:noise2} in Algorithm~\ref{Alg:alt}) is normalized so that the expression of $\sigma$ in Theorem~\ref{thm:privG} does not depend on $\rclip$, $\vclip$.

In the above guarantee, we have excluded the initialization process. With random initialization, which we use in practice, there is no extra privacy cost. However, for the utility guarantee, we need an extra DP procedure, detailed in Section~\ref{sec:initialization}, so that $\hV^0$ is sufficiently close to $\Vo$. This can be done within the same privacy bound as the main procedure.

\section{Convergence Guarantee for \dpals}
\label{sec:util}

We now show that under standard low-rank matrix completion assumptions (Assumptions~\ref{def:incoherence} and~\ref{def:random}), Algorithm~\ref{Alg:alt}, initialized with the noisy power method, solves the matrix completion problem. We will first present the results assuming that $\hV^0$ is close to $\Vo$. We will then present the guarantee of the initialization procedure.

\begin{thm}[Utility guarantee]\label{thm:util}
Suppose that $\boldM$ is a $\mu$-incoherent rank-$r$ matrix, and $\Omega$ consists of random observations with probability $p$. Let $\sigma_1^\ast \geq \cdots \sigma_r^\ast>0$ be the singular values of $\boldM$ and $\kappa:=\sigma_1^*/\sigma_r^*$ its condition number.

There exists a universal constant $C>0$, such that for all $\delta \in (0, 1)$, $\epsilon \in (0,  \ln(1/\delta))$, if $p\geq \mu^{6}\kappa^{12} r^6 \cdot \frac{\log^3 n}{m}$ and $\sqrt{p}n\geq C\frac{\gamma\sqrt{\log (1/\delta)}}{\epsilon}$,  where $\gamma=C\ko^6\mu^3r^2\sqrt{m}\cdot \log^2 (\ko\cdot n)$, then \dpals, initialized with $\hV^{0}$ s.t. $\|(I-\Vo(\Vo)^\top)\hV^{0}\|\leq \frac{C}{\ko^2 r^2 \ln n}$, with parameters $k=C\cdot m\cdot p\log n$, $T=\log(\mu \kappa n/\epsilon)$, $\sigma=\frac{C \sqrt{kT \ln(1/\delta)}}{\epsilon}$, $\rclip=\frac{C \mu \sigma_1^\ast \sqrt{r}}{\sqrt{n}}$, $\vclip=\frac{\mu^2 r \sigma_1^\ast}{\sqrt{mn}}$ and $\lambda=0$, returns $\Uh^T$ and $\Vh^T$ such that the following holds: 
\setlist{nolistsep}
\begin{itemize}[leftmargin=*, noitemsep]
\item The distribution of $(\Uh^T, \Vh^T)$ satisfies $(\epsilon, \delta)$-joint DP.
\item $\|\boldM-\Uh^T (\Vh^T)^\top\|_F\leq C  \cdot \frac{\sqrt{m\log (1/\delta)}}{\epsilon\cdot n}\cdot \frac{\ko \gamma}{\sqrt{p}}\|\boldM\|_F$, with probability $\geq 1-1/n^{10}$.
\item Similarly, $\|\boldM-\Uh^T (\Vh^T)^\top\|_\infty \leq C  \cdot \frac{\sqrt{m\log(1/\delta)}}{\epsilon\cdot n}\cdot \frac{\ko \gamma}{\sqrt{p}}\cdot \frac{\mu^2 r\|M\|_2}{\sqrt{mn}}$, with probability $\geq 1-1/n^{10}$.
\end{itemize}
\label{thm:utilGM}
\end{thm}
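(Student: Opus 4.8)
The plan is to track the principal-angle distance $d_t := \|(\boldI - \Vo\Vot)\hV^t\|$ between the running item subspace and the true one, and to show that it obeys a \emph{noisy contraction} of the form $d_{t+1}\le \tfrac12 d_t + \eta$, where the noise floor $\eta$ matches the target error bound. The $(\epsilon,\delta)$-joint DP claim is immediate from Theorem~\ref{thm:privG} with the stated $\sigma$, so the entire effort is the utility statement. I would decompose each outer iteration into the noise-free $\boldU$-solve (procedure $\alocal$) followed by the noisy $\boldV$-solve (procedure $\aglobal$), and analyze the composite map $\hV^t \mapsto \hV^{t+1}$ as a single perturbed ALS step.

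First I would establish an inductive invariant that every iterate stays $\mu'$-incoherent for some $\mu' = O(\mu\cdot\mathrm{poly}(\kappa,r))$. This is the workhorse of all ALS analyses: given $\hV^t$ close to $\Vo$ and incoherent, concentration over the random sampling (Assumption~\ref{def:random}) together with the incoherence of $\boldM$ shows that each row $\hU^t_i$ produced in $\alocal$ concentrates around the clean least-squares value, has $\ell_2$ norm $\Theta(\rclip)$, and remains incoherent — so that the clipping in Line~\ref{line:clip} is (whp) inactive and introduces no bias. The choice $\rclip = C\mu\sigma_1^\ast\sqrt r/\sqrt n$ is exactly the incoherent row-norm scale, and $\vclip$ matches $\|\boldM\|_\infty$. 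I would then invoke the clean contraction from the non-private theory~\cite{jain2013low,hardt2014fast}: absent noise, one alternating step contracts $d_t$ by a constant factor, which is where the sampling lower bound $p\ge \mu^6\kappa^{12}r^6\log^3 n/m$ is consumed.

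Second, I would bound the perturbation injected by the Gaussian mechanism. For each item $j$, the matrix $\boldX_j = \reg\boldI + \sum_{i\in\Omega'_j}\boldU_i\otimes\boldU_i + \boldG_j$ (Line~\ref{line:LHS}) is the clean Gram matrix plus a symmetric Gaussian of entrywise scale $\rclip^2\sigma$. The projection $\psd{\cdot}$ is crucial: since the signal Gram has smallest eigenvalue bounded below (by incoherence and the $\approx k$ incoherent rows in $\Omega'_j$), I would show that with high probability the projected $\boldX_j$ stays well-conditioned, so its pseudoinverse is stable and the noise propagates linearly into $\boldV_j$. Aggregating the independent per-item perturbations into $\widetilde{\boldV}$ and orthonormalizing, a matrix-concentration bound on the operator norm of a sum of $m$ independent Gaussian-driven terms yields $\eta = O\!\big(\tfrac{\sqrt m}{\epsilon n}\cdot\tfrac{\ko\gamma}{\sqrt p}\big)$ — the $\sqrt m$ arising precisely from summing $m$ independent noise contributions and the $1/(\epsilon n)$ from $\sigma$ and the $n$ clipped rows. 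The size condition $\sqrt p\, n \ge C\gamma\sqrt{\log(1/\delta)}/\epsilon$ guarantees this floor sits below the initialization radius, so the recursion makes progress.

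Finally, unrolling $d_{t+1}\le \tfrac12 d_t + \eta$ for $T = \log(\mu\kappa n/\epsilon)$ steps drives the signal term $2^{-T}d_0$ below the floor, giving $d_T = O(\eta)$. I would then convert this subspace accuracy into reconstruction error: writing $\boldM - \hU^T(\hV^T)^\top$ in terms of $(\boldI-\Vo\Vot)\hV^T$ and the residual of the final least-squares solve, the Frobenius bound follows by scaling $d_T$ by $\|\boldM\|_F$, while the max-norm bound follows from the same argument carried out row/column-wise via incoherence (the source of the extra $\mu^2 r/\sqrt{mn}$ factor). I expect the main obstacle to be the coupling inside the second step: maintaining the incoherence invariant \emph{through} the noisy, PSD-projected, pseudoinverse update, since incoherence is exactly the property on which both the clean contraction and the noise bound rely, yet the injected noise and the projection threaten to destroy it. Showing the PSD projection keeps each $\boldX_j$ well-conditioned without inflating incoherence is the technical crux.
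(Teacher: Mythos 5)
Your proposal follows essentially the same route as the paper's proof: an inductive incoherence invariant showing the clipping and sampling steps are inactive with high probability, a noise-free contraction for the $\boldU$-step inherited from the non-private ALS analysis of \citet{jain2013low}, a decomposition of the noisy $\boldV$-step into a sampling-error term and a Gaussian term whose Frobenius norm supplies the $O\bigl(\sqrt{m}\,\gamma/(\epsilon n\sqrt{p})\bigr)$ noise floor, and unrolling the resulting recursion for $T$ steps before converting subspace error to Frobenius and max-norm error via incoherence. The only small difference is that the paper sidesteps your stated ``technical crux'' by proving $\boldX_j$ is already positive definite with high probability (so $\Pi_{\sf PSD}$ is a no-op in the analysis) rather than analyzing the projection itself.
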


\begin{remark} 
The choice of hyper-parameters in Theorem~\ref{thm:utilGM} assumes knowledge of certain quantities such as $r, \mu, \kappa$. In practice, these quantities are unknown, but one can use standard DP hyper-parameter search techniques~\cite{liu2019private} to search for optimal hyper-parameter values.
\end{remark}

\begin{remark} The number of samples needed per user is about $p\cdot m = O(\mu^6 \kappa^{12} r^6 \log^3 n)$ which is nearly optimal with respect to $m$ and $n$. This represents a significant improvement over the \dpfw algorithm in \cite{jain2018differentially} which requires $\Omega(\sqrt{m})$ samples per user.
\end{remark}

\begin{remark}
We did not optimize bounds for dependence on the rank $r$ and condition number $\kappa$. Prior work tends to focus on the dependence on the size ($m$ and $n$) and polynomial dependence on $r, \kappa$ is common even in the non-private setting, for example~\cite{jain2013low,sun2015guaranteed,ge2016matrix}. Our main goal is to provide a guarantee in the private setting that is competitive with the non-private setting, so we inherit the focus on the size $m, n$. Furthermore, dependence on $\kappa$ can be removed (up to log factors) by using a {\em stagewise} ALS method similar to~\cite{hardt2014fast}. However, this further complicates the proof and the practical performance of standard ALS is comparable to such stagewise methods.
\end{remark}

\begin{remark} Our Frobenius norm error bound is significantly smaller than the bound for the \dpfw algorithm, which is given by $\|\boldM-\Uh^T (\Vh^T)^\top\|_F\leq \big(\frac{m^{5/4}}{n\epsilon}\big)^{1/5}\|\boldM\|_F$. In particular, to ensure an error $\|\boldM-\Uh^T (\Vh^T)^\top\|_F \leq \zeta\|\boldM\|_F$, \dpals requires $n\geq \frac{C m}{\zeta\cdot \epsilon}$, while \dpfw requires $n\geq \frac{C m^{5/4}}{\zeta^5\cdot \epsilon}$, which is significantly worse in terms of $\zeta$. Furthermore, the DPFW bound is a generalization bound, i.e., there is an additional bias term which
can be large, and to the best of our knowledge, existing techniques (even in the non-private setting) require incoherence to control this term.
\end{remark}

\begin{remark}  Consider a set of $m$ linear regression problems in $r$-dimensions: $\big\{\boldy_{(i)}=\boldX\theta^*_{(i)}\big\}_{i=1}^m$, with $\boldX\in\mathbb{R}^{n\times r}$. One can use a single iteration of \dpals with ($\hU=\boldX$ and $\prA(\boldM)=[\boldy_{(1)},\ldots,\boldy_{(m)}]$) to solve these linear regression problems. Assuming the conditions on $\boldM$ are satisfied, we can obtain an excess empirical risk of $\widetilde{O}(\sqrt{m}/(\epsilon n))$. This matches the best known upper bound for solving a set of linear regressions with privacy~\cite{sheffet2019old,smith2017interaction}. So, a better convergence rate of \dpals would lead to a tighter bound on solving a set of linear regressions with a common feature matrix. For $m=O(1)$, we know that the lower bound for private linear regression is $\widetilde{\Omega}(1/\epsilon n)$~\cite{smith2017interaction}. Thus, we conjecture that the error for DPALS is tight w.r.t. $m$ and $\epsilon n$.
\end{remark}

\begin{remark}
Instead of using the perturbed objective function to estimate $\hV^t$ in DPALS, one can  use DPSGD~\cite{BST14} to do the same (solving a least 
squares problem with $\hU^t$ fixed). We leave the empirical comparison of this approach to future work. However, we know that for least-square losses, perturbing the objective is known to be theoretically optimal~\cite{smith2017interaction}.
\end{remark}

{\bf Proof sketch}: 
First, we show that under the assumptions in Theorem~\ref{thm:utilGM}, w.h.p., clipping and sampling operations in \dpals have no effect. Note, using $k\geq C p\cdot m \log n$, w.p. $\geq 1-1/n^{100}$, $\forall i, |\Omega_i|\leq k$. Furthermore, using Lemma~\ref{lem:incoh}, $\|\Uht_i\|\leq \rclip$. Similarly, using Lemma~\ref{lem:incoh}, $\sigma_{\min}(\boldX)\geq p/4-\|\boldG\|_2\geq p/4-\rclip^2 \sigma \sqrt{r}\geq p/8$. That is, $\boldX \succ 0$.

The above observation implies that, under the assumptions of the theorem, Algorithm~\ref{Alg:alt} is essentially performing the following iterative steps:\\
i) $\Uht=\arg\min\limits_{\Uh}\|\prA(\boldM-\Uh (\Vht)^\top)\|_F^2$, and\\
ii) 
$\Vhto_j=\Big(\boldI + \sum\limits_{i\in\Omega'_j} \Uht_i \otimes \Uht_i +\boldG \Big)^{-1} \Big(\sum\limits_{ i\in\Omega_j'} \boldM_{ij}\Uht_i+\boldg\Big)$.

Let $\Ut$ (resp. $\Vt$) be the Q part in the QR decomposition of $\Uht$ (resp. $\Vht$). Using Lemma~\ref{lem:disdec}, we get
$\err(\Vo, \Vto)\leq \frac{1}{4}\err(\Vo, \Vt)+\alpha$, 
where $\err(\Vo, \boldV)=\|(\boldI-\Vo\Vot)\boldV\|_F$ and $\alpha\leq \frac{C\ko^6\cdot \mu^3 r^2\sqrt{\log n}}{\sqrt{p}n}  \frac{\sqrt{m\log n}\cdot T\log 1/\delta}{\epsilon}$.
That is, after $T$ iterations, $\err(\Vo, \boldV^T)\leq 2\alpha$. The second claim of the theorem now follows from the above observation and Lemma~\ref{lem:incoh}. Similarly, the third claim follows by using the bound on $\err(\Vo, \boldV^T)$ and incoherence of $\boldU^T$, $\boldV^T$ (Lemma~\ref{lem:incoh}). See Appendix~\ref{app:proof} for a detailed proof.

\begin{lem}\label{lem:incoh}
Suppose the assumptions of Theorem~\ref{thm:utilGM} hold. Then, w.p. $\geq 1-5T/n^{100}$, we have: a) each iterate $\Uht$, $\Vht$ is $16\ko\mu$-incoherent, b) $1/2\leq \sigma_q(\Uht(\So)^{-1})\leq 2$ for all $q\in [r]$, c) $1/4 \leq \sigma_q(\frac{1}{p}\sum_{i:(i,j)\in \Omega^{v,t}} \uht_i (\uht_i)^\top)\leq 4$.
\end{lem}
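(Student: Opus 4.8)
The plan is to establish (a)--(c) by induction on the iteration index $t$, carrying as a single combined hypothesis both the structural bounds at step $t$ and a bound on the subspace distance $\err(\Vo,\Vt)$, the quantity whose contraction is the content of Lemma~\ref{lem:disdec}. The base case is the initialization guarantee, which supplies a $\Vh^{0}$ with $\|(\boldI-\Vo\Vot)\Vh^{0}\|$ small and incoherent, so its (already orthonormal) columns are $O(\mu)$-incoherent. The two kinds of statement must be bundled because the closeness $\err(\Vo,\Vt)\ll 1$ is exactly what lets every sampled Gram matrix arising in the two least-squares solves be treated as a small perturbation of its idealized counterpart (full observation and $\Vt=\Vo$), while the structural bounds (a)--(c) are in turn what Lemma~\ref{lem:disdec} consumes to contract $\err$.

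First I would analyze the noiseless user step. For a fixed user $i$, Line~\ref{line:Usol} gives the closed form $\Uht_i=(\sum_{j\in\Omega_i'}\Vt_j\otimes\Vt_j)^{-1}\sum_{j\in\Omega_i'}\boldM_{ij}\Vt_j$, which I would compare against the ideal value $\So\Uo_i$ obtained in the noiseless, fully-observed, exact-$\Vo$ case, using $\boldM_{ij}=\ip{\So\Uo_i}{\Vo_j}$ and the orthonormality of the columns of $\Vht$ (which $\aglobal$ returns in orthonormalized form). Because $\Omega_i'$ is an i.i.d.\ $1/T$-subsample (Line~\ref{line:usample}) of a rate-$p$ set, $\sum_{j\in\Omega_i'}\Vt_j\otimes\Vt_j$ is a sum of independent rank-one matrices whose spectral norms are bounded by the inductive incoherence $\ltwo{\Vt_j}^2\lesssim\ko^2\mu^2 r/m$; matrix Chernoff then shows it concentrates multiplicatively around $(p/T)\boldI$ once $p\geq\mu^6\ko^{12}r^6\log^3 n/m$, giving invertibility and, after the $(p/T)$ factors cancel and the $\err(\Vo,\Vt)$ perturbation in the right-hand side is absorbed, claim (b) that the singular values of $\Uht(\So)^{-1}$ lie in $[1/2,2]$. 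Incoherence of $\Uht$ (the $\Uht$ half of (a)) then follows row by row: the dominant term $\So\Uo_i$ has norm at most $\sigma_1^\ast\mu\sqrt r/\sqrt n$, and a vector-Bernstein bound on the fluctuation shows $\ltwo{\Uht_i}\le\rclip$ and the $16\ko\mu$-incoherence bound hold, so the clipping on Line~\ref{line:clip} is inactive.

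Next I would treat the item step, which repeats the argument but carries the DP noise. Given the incoherence of $\Uht$ just established, the same rank-one concentration applied to the $\Uht$-rows yields claim (c): the appropriately normalized sampled second-moment matrix $\frac1p\sum_{i:(i,j)\in\Omega^{v,t}}\uht_i(\uht_i)^\top$ is, by (b), close to the orthonormal system $(\Uo)^\top\Uo=\boldI$, so its singular values lie in $[1/4,4]$. The one new ingredient is the Gaussian perturbation: a standard operator-norm bound for symmetric Gaussian matrices gives $\|\boldG_j\|_2\lesssim\rclip^2\sigma\sqrt r$, and the parameter choices force this below $p/8$, so $\boldX_j$ stays positive definite, the projection $\psd{\boldX_j}$ acts as the identity, and $\boldG_j,\boldg_j$ enter the solution only as lower-order additive terms. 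Propagating incoherence to $\Vhto$ and its orthonormalization $\Vto$ mirrors the user-side computation, with the additive $\boldg_j$ contribution shown to be $O(\mu\sqrt r/\sqrt m)$ under the stated $\sigma,\rclip,\vclip$. A union bound over the $n$ users, $m$ items, and $T$ iterations, with the per-event $n^{-100}$ tail from the concentration inequalities, gives the stated $1-5T/n^{100}$ probability.

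The main obstacle is the \emph{uniform} conditioning of the inverse Gram matrices: both (b) and the well-conditioning inside (c) require $\sum_{j\in\Omega_i'}\Vt_j\otimes\Vt_j$ (and its $\Uht$ analogue) to be simultaneously well-conditioned for all $n$ users (resp.\ $m$ items), and driving the per-term spectral norm small enough for the concentration tails to beat an $n^{-100}$ union bound is precisely what forces the large powers of $\mu,\ko,r$ in the sampling threshold $p\geq\mu^6\ko^{12}r^6\log^3 n/m$. The second delicate point is preventing the incoherence constant from inflating across iterations: one has to show the composite map $\Vht\mapsto\Uht\mapsto\Vhto$ contracts $\err$ fast enough (via Lemma~\ref{lem:disdec}) that the constant stays pinned at $16\ko\mu$ rather than growing geometrically, which is exactly why (a)--(c) are proved jointly with the $\err$-contraction rather than in isolation.
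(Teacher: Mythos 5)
Your proposal follows essentially the same route as the paper's proof: an induction over iterations that jointly maintains the incoherence/conditioning bounds and the contraction of $\err(\Vo,\Vt)$, shows the sampling and clipping steps are inactive under the stated parameters, treats the noiseless $U$-step as a perturbation of the ideal $\So\Uo_i$ via concentration of the sampled Gram matrices, and handles the $V$-step by bounding $\|\boldG_j\|_2$ so that $\boldX_j$ stays positive definite and the noise enters only as a lower-order additive error. The only difference is presentational: the paper imports the noiseless-step concentration and incoherence-propagation facts from the non-private ALS analysis of \citet{jain2013low} (their Lemmas 5.5--5.7, C.2, and Theorem 5.1), whereas you re-derive the same bounds directly from matrix Chernoff/Bernstein.
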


\begin{lem}\label{lem:disdec}
Suppose the assumptions of Theorem~\ref{thm:utilGM} hold. Also, let $\Vt$ be $16 \ko \mu$-incoherent s.t. $\err(\Vo, \Vt)\leq \frac{1}{\ko^2 \log^2 n}$. Then, w.p. $\geq 1-5T/n^{100}$, we have 
$\err(\Uo, \Ut)\leq \frac{1}{2}\err(\Vo, \Vt)$, and $\err(\Vo, \Vto)\leq \frac{1}{2}\err(\Uo, \Ut)+\frac{C\ko^6\cdot \mu^3 r^2\sqrt{\log n}}{\sqrt{p}n}  \frac{\sqrt{m\log n}\cdot T\log 1/\delta}{\epsilon},$ 
where $\err(\Vo, \boldV)=\|(\boldI-\Vo\Vot)\boldV\|_F$. 
\end{lem}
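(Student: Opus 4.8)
The plan is to prove the two inequalities as the two halves of a single step of alternating minimization, reading them as the by-now-standard statement that an ALS update contracts the subspace distance $\err(\cdot,\cdot)$ toward the truth, with the only new ingredient being the Gaussian noise injected for privacy in the $\hV$-update. For both halves I would begin from the closed-form least-squares solution and substitute the factorization $\boldM=\Uo\So\Vot$. For the (noiseless) $\hU$-step this gives, row by row,
\[
\Uht_i=\left(\sum_{j\in\Omega_i}\Vht_j\otimes\Vht_j\right)^{-1}\left(\sum_{j\in\Omega_i}\Vht_j(\Vo_j)^\top\right)\So\Uo_i,
\]
so that when $\Vht=\Vo$ the two bracketed factors coincide and cancel, leaving $\Uht_i=\So\Uo_i$, i.e. $\err(\Uo,\Ut)=0$. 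In general the column space of $\Uht$ differs from that of $\Uo$ only through (i) the part of $\Vht$ perpendicular to $\Vo$, whose size is governed by $\err(\Vo,\Vt)$, and (ii) the deviation of the partial sums $\sum_{j\in\Omega_i}(\cdot)$ from their expectations.

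Concretely, I would write $(\boldI-\Uo\Uot)\Ut$ as a ratio whose numerator is linear in the perpendicular part $(\boldI-\Vo\Vot)\Vht$ and whose denominator is the least singular value of the ``parallel'' factor; bounding the numerator by $\err(\Vo,\Vt)$ and the denominator from below via Lemma~\ref{lem:incoh}(b) then yields $\err(\Uo,\Ut)\le\tfrac12\err(\Vo,\Vt)$. The warm-start hypothesis $\err(\Vo,\Vt)\le 1/(\ko^2\log^2 n)$ is precisely what lets me drop the quadratic-in-$\err$ remainder and keep the clean contraction factor. The same computation, applied to the noiseless part of the $\hV$-update, produces the leading term $\tfrac12\err(\Uo,\Ut)$ of the second inequality.

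The heart of the argument, and what I expect to be the main obstacle, is controlling the sampling-deviation terms tightly enough that only $\widetilde{O}(1)$ observed entries per user are needed; this is exactly where Assumptions~\ref{def:incoherence} and~\ref{def:random} and the requirement $p\ge \mu^6\ko^{12}r^6\log^3 n/m$ are consumed. For each sum $\sum_{j\in\Omega_i}(\cdot)$ I would apply a matrix Bernstein inequality, using $\mu$-incoherence of $\Vo$ together with the $16\ko\mu$-incoherence of the iterate (Lemma~\ref{lem:incoh}(a)) to bound each summand's operator norm by $O(\mu^2 r/m)$ and its variance accordingly, so that the empirical average concentrates around $p$ times its expectation once $pm\gtrsim \mathrm{poly}(\mu,\ko,r)\log^3 n$. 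Making this hold simultaneously for all $n$ rows and all $T$ rounds via a union bound (hence the $1-5T/n^{100}$ failure probability), while keeping the contraction constant strictly below one, is the delicate part.

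For the noise in the $\hV$-update I would first certify that the perturbed curvature matrix $\boldX_j=\boldI+\sum_{i\in\Omega'_j}\Uht_i\otimes\Uht_i+\boldG_j$ stays positive definite with $\sigma_{\min}(\boldX_j)\ge p/8$ --- combining Lemma~\ref{lem:incoh}(c) with the Gaussian tail bound $\|\boldG_j\|_2=O(\rclip^2\sigma\sqrt r)$ and the chosen parameters --- so that the projection $\psd{\boldX_j}$ is inert and $\|\psd{\boldX_j}^+\|\le 8/p$. The perturbation of the solution then splits into the right-hand-side term $\boldX_j^{+}\boldg_j$ and the left-hand-side term $-\boldX_j^{+}\boldG_j\,\overline{\boldV}_j$ (with $\overline{\boldV}_j$ the solution of the noiseless system), whose norms I bound by $O(p^{-1}\rclip\vclip\sigma\sqrt r)$ and $O(p^{-1}\rclip^2\sigma\sqrt r\,\|\overline{\boldV}_j\|)$ via Gaussian concentration. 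Stacking over $j\in[m]$ into a single noise matrix, bounding its Frobenius norm, and dividing by $\sigma_{\min}(\widetilde{\boldV})$ to account for the final orthonormalization $\boldV=\widetilde{\boldV}(\widetilde{\boldV}^\top\widetilde{\boldV})^{-1/2}$, I would finally substitute $\rclip=C\mu\sigma_1^\ast\sqrt r/\sqrt n$, $\vclip=\mu^2 r\sigma_1^\ast/\sqrt{mn}$, $k=Cmp\log n$ and $\sigma=C\sqrt{kT\ln(1/\delta)}/\epsilon$ and verify that the result equals the claimed additive term $\tfrac{C\ko^6\mu^3 r^2\sqrt{\log n}}{\sqrt p\,n}\cdot\tfrac{\sqrt{m\log n}\,T\log(1/\delta)}{\epsilon}$.
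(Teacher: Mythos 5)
Your proposal follows essentially the same route as the paper: reduce to the closed-form (noisy) ALS update, substitute $\boldM=\Uo\So\Vot$ to split the error into a sampling-deviation part (controlled by matrix Bernstein plus incoherence of $\Vo$ and the iterate) and a Gaussian-noise part (controlled by certifying $\sigma_{\min}$ of the perturbed curvature matrix and then bounding $\boldX_j^{+}\boldg_j$ and $\boldX_j^{+}\boldG_j\overline{\boldV}_j$), and finally convert the stacked error matrix into a subspace-distance bound by dividing by the least singular value of the parallel factor --- which is exactly the role of the paper's Lemma~\ref{lem:err1} and its bounds \eqref{eq:soe1}, \eqref{eq:soe2}, \eqref{eq:rvru}. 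The only cosmetic difference is that the paper outsources the noiseless concentration steps to Lemmas 5.5--5.7 of \citet{jain2013low} rather than rederiving them, so your plan is correct and matches the paper's argument.
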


\subsection{Noisy Power Iteration Initialization}\label{sec:initialization}

Theorem~\ref{thm:util} requires that \dpals be initialized with $\hV^{0}$ such that $\|(I-\Vo(\Vo)^\top)\hV^{0}\| =  O(1/\ln n)$. One may apply Algorithm 1 of \cite{dwork2014analyze}, i.e. compute the top-$r$ eigenvectors of $\boldA+\boldG$, where $\boldA :=\prA(\boldM)^\top \prA(\boldM)$ and $\boldG \sim \calN_{\sf sym}(0, \vclip^4\sigma^2)^{m\times m}$. This would require $n=\widetilde{\Omega}(m\sqrt{m}/\epsilon)$. However, this turns out to be suboptimal in our setting as it doesn't take advantage of the the sparsity of $\prA(\boldM)$. Instead, the noisy power iteration method, developed in~\cite{hardt2012beating,hardt2013beyond,hardt2013noisy} for per-entry privacy protection, turns out to be more suitable.

One difficulty in applying noisy power iteration is that prior work requires incoherence of $\boldA$, which may not hold in our setting. To overcome this difficulty, we show that it suffices to have incoherence of the top-$r$ eigenspace of $\boldA$, together with a (moderate) gap between the top eigenvalues and the rest, both of which we are able to establish. This gives a tighter analysis of noisy power iteration which may be of independent interest, detailed in Appendix~\ref{sec:inittighter}. We apply this result to our setting in the next theorem.

\begin{thm}[Initialization guarantee]
There exists constant $C_0, C_1, C_2>0$, such that for any $\delta\in(0,1), \epsilon\in (0, \log(1/\delta))$, if $p\geq C_1 \gamma \log^3 m/m$ and 
$\sqrt{p}n\geq C_2 \frac{\sqrt{\log(1/\delta)}}{\epsilon}\gamma\sqrt{m}\log^{5/2} m$, where $\gamma=(\mu\kappa r)^{C_0}$, the noisy power iteration method is $(\epsilon,\delta)$-differentially private, and with high probability returns a $\hV^0$ which is close to $\Vo$ as defined in Theorem~\ref{thm:util}.
\end{thm}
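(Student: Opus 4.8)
The plan is to prove the two conclusions---privacy and closeness of $\hV^{0}$ to $\Vo$---separately, viewing the noisy power iteration as $L=O(\log m)$ rounds in which we form $\boldY^{(\ell)}=\boldA\boldX^{(\ell)}+\boldG^{(\ell)}$, orthonormalize to obtain $\boldX^{(\ell+1)}$, and return the final basis as $\hV^{0}$, where $\boldA=\prA(\boldM)^\top\prA(\boldM)$ and each $\boldG^{(\ell)}\sim\calN_{\sf sym}(0,\vclip^4\sigma^2)^{m\times m}$.

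For privacy, I would first bound the per-user sensitivity. After the entry clipping $\|\prA(\boldM)\|_\infty\le\vclip$ and using that, w.h.p., each user contributes at most $O(pm)$ observed entries, a single user enters $\boldA=\sum_i \prA(\boldM)_i^\top\prA(\boldM)_i$ through exactly one rank-one term, so the Frobenius sensitivity of the map $\boldX^{(\ell)}\mapsto\boldA\boldX^{(\ell)}$ on orthonormal $\boldX^{(\ell)}$ is at most $2\ltwo{\prA(\boldM)_i}^2=O(pm\,\vclip^2)$, independent of the iterate. Each round is then an instance of the Gaussian mechanism, and I would invoke R\'enyi composition over the $L$ rounds and convert to $(\epsilon,\delta)$-DP exactly as in Theorem~\ref{thm:privG}. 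Calibrating the noise to this sensitivity and composing over $L=O(\log m)$ rounds yields the stated lower bound on $\sqrt{p}\,n$; the precise $\sqrt{\log(1/\delta)}/\epsilon$, $\sqrt{m}$, and $\log^{5/2}m$ factors come, respectively, from the Gaussian-mechanism calibration, the spectral norm of the injected noise, and the number of rounds.

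For utility I would first pin down the spectrum of $\boldA$. Since $\E[\boldA]$ equals, up to a lower-order diagonal correction, a positive multiple of $\boldM^\top\boldM=\Vo(\So)^2\Vot$, its top-$r$ eigenspace is exactly $\mathrm{span}(\Vo)$, the top-$r$ eigenvalues scale like $(\sigma_q^\ast)^2$, and the diagonal correction is dominated once $p\ge C_1\gamma\log^3 m/m$. A matrix Bernstein bound on $\boldA-\E[\boldA]$ under Assumption~\ref{def:random} then shows that $\boldA$ inherits this structure: its top-$r$ eigenspace lies within $o(1)$ of $\mathrm{span}(\Vo)$ and is separated from the tail by a multiplicative eigenvalue gap of $\Omega(\log^2 m)$. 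Because this eigenspace is close to $\mathrm{span}(\Vo)$ and $\Vo$ is $\mu$-incoherent (Assumption~\ref{def:incoherence}), the top-$r$ eigenspace of $\boldA$ is itself $O(\mu)$-incoherent. This is the key structural fact, and it is what lets us avoid the much stronger---and, in the random-sampling model, generally false---requirement that \emph{every} eigenvector of $\boldA$ be incoherent.

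Finally I would feed the eigenvalue gap and the incoherence of the top-$r$ eigenspace into the refined noisy power iteration analysis of Appendix~\ref{sec:inittighter}. That analysis shows that, as long as the injected noise $\boldG^{(\ell)}$ is small relative to the gap, $L=O(\log m)$ rounds suffice to drive $\|(I-\Vo\Vot)\hV^{0}\|$ below the threshold $C/(\ko^2 r^2\ln n)$ required by Theorem~\ref{thm:util}; the condition $\sqrt{p}\,n\ge C_2\frac{\sqrt{\log(1/\delta)}}{\epsilon}\gamma\sqrt{m}\log^{5/2}m$ is exactly what forces the per-round noise to be this small. I expect the main obstacle to be the structural step above and its use inside the iteration: standard noisy power iteration analyses control the harmful noise--signal cross terms $\Vot\boldG^{(\ell)}$ via incoherence of \emph{all} relevant singular vectors, so the delicate part is showing that incoherence of the top-$r$ eigenspace alone bounds these terms. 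This forces a careful, separate treatment of the projection of $\boldG^{(\ell)}$ onto $\mathrm{span}(\Vo)$ and onto its orthogonal complement, propagated across the $O(\log m)$ rounds while preserving the gap-driven contraction.
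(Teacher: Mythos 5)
There is a genuine gap, and it sits at the heart of why this theorem improves on private PCA. You bound the per-user sensitivity of $\boldX^{(\ell)}\mapsto\boldA\boldX^{(\ell)}$ by $2\ltwo{\prA(\boldM)_i}^2=O(pm\,\vclip^2)$, ``independent of the iterate.'' That bound is correct but far too loose: calibrating Gaussian noise to it and composing over $O(\log m)$ rounds forces $n=\widetilde\Omega(m^{3/2}/\epsilon)$, which is exactly the private-PCA rate the theorem is designed to beat, and the stated condition $\sqrt{p}\,n\gtrsim\sqrt{m}\log^{5/2}m$ does not follow. The paper's sensitivity analysis instead exploits the \emph{incoherence of the iterate itself}: if $\linfty{\boldw_t}\le\nu/\sqrt{m}$, then the change in $\boldA\boldw_t$ from one user is $\ltwo{\prA(\boldM)_i}\cdot\abs{\prA(\boldM)_i\cdot\boldw_t}\le\sqrt{s}\,\vclip\cdot s\vclip\nu/\sqrt{m}$, giving $\rho^2=\frac{Ts^3\vclip^4\nu^2}{2m\sigma^2}$ (Theorem~\ref{thm:priv_inittighter}) --- smaller than your bound by a factor of about $m/\log^4 m$ in $\rho^2$ when $s\approx pm=O(\log^3 m)$ and $\nu=O(\sqrt{\log m})$. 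This is why Algorithm~\ref{Alg:init_tighter} explicitly tests each $\boldw_t$ for $\nu$-incoherence and aborts on failure (so the privacy guarantee holds unconditionally), and why the bulk of the utility proof is devoted to showing the iterates \emph{remain} $O(\sqrt{\log m})$-incoherent so that the abort never triggers. Your proposal frames incoherence purely as a utility device for controlling noise--signal cross terms, and misses that it is load-bearing for privacy.

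A second, smaller gap is in your structural step: you infer that the top-$r$ eigenspace of $\boldA$ is $O(\mu)$-incoherent because it is $o(1)$-close in operator norm to $\mathrm{span}(\Vo)$. Operator-norm closeness only controls row norms up to an additive $o(1)$, which is enormous compared to the target $\mu\sqrt{r}/\sqrt{m}$; it does not yield incoherence. The paper instead proves delocalization of the principal eigenvector of $\boldA=\prA(\boldM)^\top\prA(\boldM)$ directly (Claim~\ref{claim:incoherence}) via a moment/path-counting argument in the style of Erd\H{o}s et al., handling the dependence among the entries of $\boldA$ without resampling; this, together with the spectral gap $\lambda_1/\lambda_2=\Omega(\log^2 m)$ and noise calibrated between $\lambda_2$ and $\lambda_1/\log^{3/2}m$, is what drives both the preserved incoherence of the iterates and the $O(\log m)$-round convergence.
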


\section{Heuristic Improvements to \dpals}
\label{sec:practCons}

We introduce heuristics to improve the privacy/utility trade-off of \dpals in practice. We describe each heuristic, its motivation, and how to implement it differentially privately.

\subsection{Reducing Distribution Skew}
\label{sec:practCons-1}
The first heuristics are motivated by the observation that, in practice, the elements of $\Omega$ are not sampled uniformly at random~\cite{marlin07collaborative}. In particular, the number of observed ratings per item typically follows a power-law distribution, and is heavily skewed towards popular items. For example, Figure~\ref{fig:ml10m_count_dist} shows the fraction of observations vs. fraction of top movies in the MovieLens 10M data set. It shows, for instance, that the top 20\% of the movies account for more than 85\% of the observations.
\begin{figure}[t]
    \centering
    \includegraphics[width=0.31\textwidth]{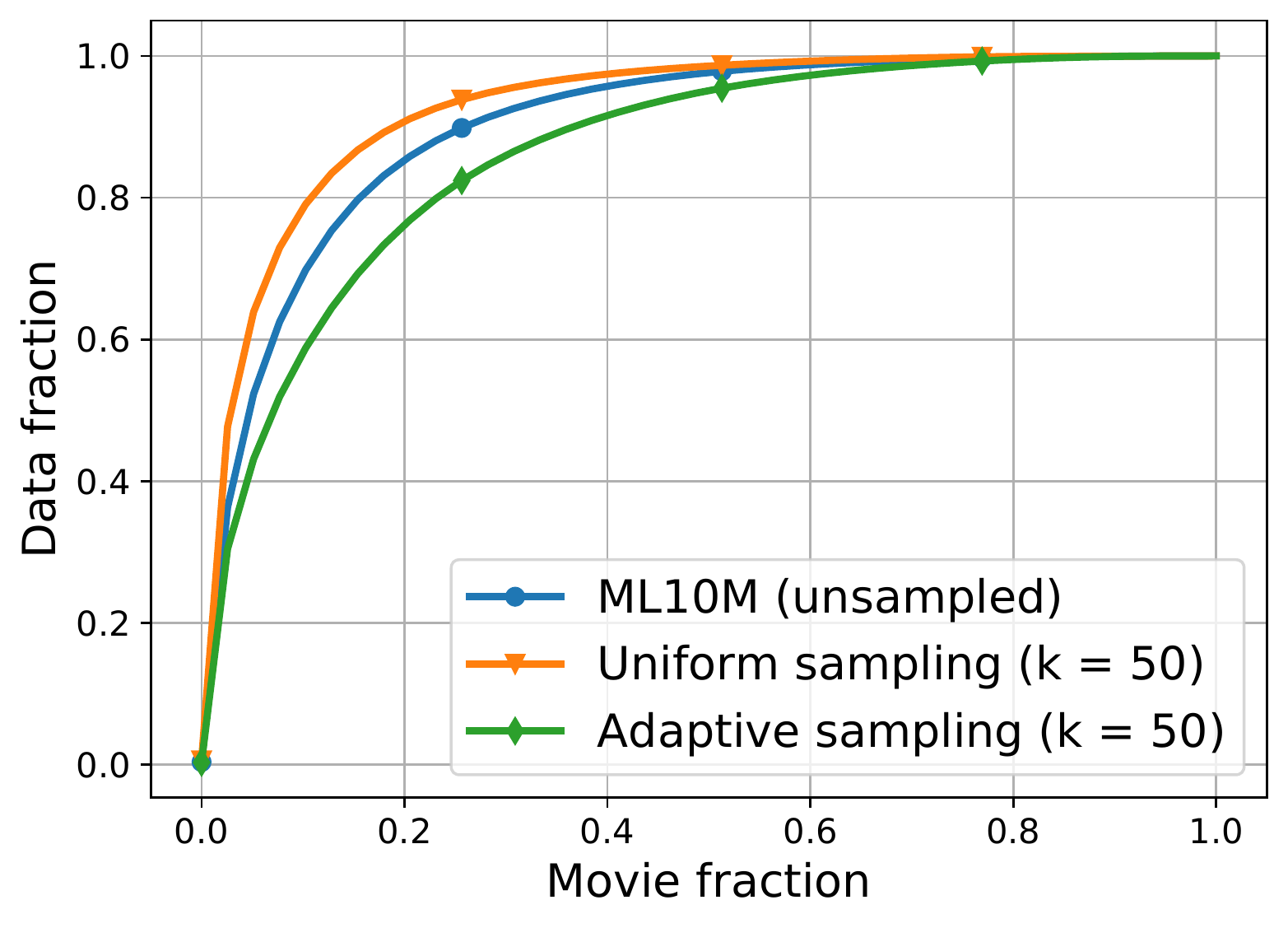}\vspace*{-10pt}
    \caption{Fraction of observations contributed by the top movies in MovieLens 10M. Adaptive sampling reduces popularity bias.}
    \label{fig:ml10m_count_dist}
\end{figure}

Due to this popularity bias, some items may have very few observations, and for such rare items $j$, the embedding $\boldV_j$ learned by \dpals may not be useful: The noise terms in Line~\ref{line:Vsol} of Algorithm~\ref{Alg:alt} do not scale with the number of observations $|\Omega'_j|$ -- for otherwise we may lose the protection on users who rated rare items -- thus, items with a smaller $|\Omega'_j|$ have a lower signal-to-noise ratio. In our experiments, we found that such noisy embeddings may have a further cascading effect and lead to quality degradation in the embeddings of other movies and users. To alleviate this issue, we propose two techniques.

\mypar{Learning on frequent items}
The first strategy is to partition the items into two sets, based on an estimate of the item counts, which we denote by $\countsP \in \mathbb R^n$. We introduce a hyper-parameter $\beta$ representing the fraction of movies to train on. Define the set $\head$ to be the $\lceil m\beta \rceil$ items with the largest $\countsP$, and let $\tail$ be its complement. We learn embeddings $\hV_j$ only for $j \in \head$, by running Algorithm $\aaltmin$ on those items. When making predictions for any missing entry $\boldM_{ij}$, if $j \in \head$, we use the dot product $\hU_i \cdot \hV_j$, and if $j \in \tail$ we use the average observed rating of $\prA(\boldM)_i$.

To compute $\countsP$ privately, notice that since each user contributes at most $k$ items, the exact item count $\counts$ has $\ell_2$ sensitivity $\sqrt{k}$. Thus, $\countsP := \counts + \calN(0,k\sigma^2)$ guarantees $\left(\alpha,{\alpha}/{2\sigma^2}\right)$-RDP.

\mypar{Adaptive sampling}
To further reduce the popularity bias, we propose to use an adaptive distribution when sub-sampling $\Omega$. Recall that in Line~\ref{line:sample} of $\aglobal$, we pick $k$ items per user in $\Omega$, in order to limit the privacy loss. We propose to sample rare items with higher probability, as follows. Given the count estimate $\countsP$, for each user $i$, we pick the $k$ items in $\Omega_i \cap \head$ with the lowest count estimates. This heuristic effectively reduces the distribution skew and gives a significant utility gain compared to uniform sampling, see Section~\ref{sec:practCons-3}. Figure~\ref{fig:ml10m_count_dist} illustrates the resulting distribution for a sample size of $k = 50$ per user. It's interesting to observe that under uniform sampling, the popularity bias is worse than in the unsampled data set, this is due to a negative correlation between user counts and item counts: conditioned on a light user, the probability to observe a rare item is lower; see Appendix~\ref{app:heuristics} for further discussion.

\subsection{Additional Heuristics}\label{sec:practCons-2}
A common heuristic, used for example by~\cite{mcsherry2009differentially}, is to center the observed matrix $\prA(\boldM)$, by subtracting an estimate of the global average, denoted by $\av$. To compute $\av$ privately, since $\|\prA(\boldM)\|_\infty \leq \vclip$ and each user contributes at most $k$ items,
publishing $\av=\frac{\sum_{(i,j)\in \Omega}M_{ij}+\calN(0,k\vclip^2\sigma^2)}{|\Omega|+\calN(0,k\sigma^2)}$ guarantees $\left(\alpha,\alpha/\sigma^2\right)$-RDP.

Another practice, commonly used in some benchmarks, is to modify the loss function in Section~\ref{sec:als} by adding the term $\lambda_0 \|\hU \hV{}^\top\|_F^2$, where $\lambda_0$ is a hyper-parameter. This is particularly important for item recommendation tasks, such as the MovieLens 20M benchmark. This modification introduces an additional term $\boldK := \lambda_0 \sum_{i \in [n]} \hU_i\otimes\hU_i$ to $\boldX$ in Line~\ref{line:LHS} of $\aglobal$. To maintain privacy, we use a noisy version $\tilde \boldK$ obtained by adding Gaussian noise to $\boldK$. Since $\boldK$ is independent of $j$, we reuse the same $\tilde \boldK$ for all $j\in[m]$, thus limiting the additional privacy loss due to this term.

Finally, we project the matrix $\boldX_j = \boldH_j + \boldG_j$ to the PSD cone (Line \ref{line:LHS}) to improve stability. In our analysis, we show that $\boldX_j$ is positive definite with high probability, but in practice, the projection improves performance.

We account for the privacy cost in the computation of $\av$,~$\countsP$, and $\tilde \boldK$, along with that in Theorem~\ref{thm:privG}, by standard composition properties of RDP~\cite{mironov2017renyi}. For completeness, the privacy accounting of the full algorithm including data pre-processing, is given in Appendix~\ref{app:heuristics}.

\def\colwidth{0.3\textwidth}
\begin{figure*}[h!]
\centering
\includegraphics[width=0.95\textwidth]{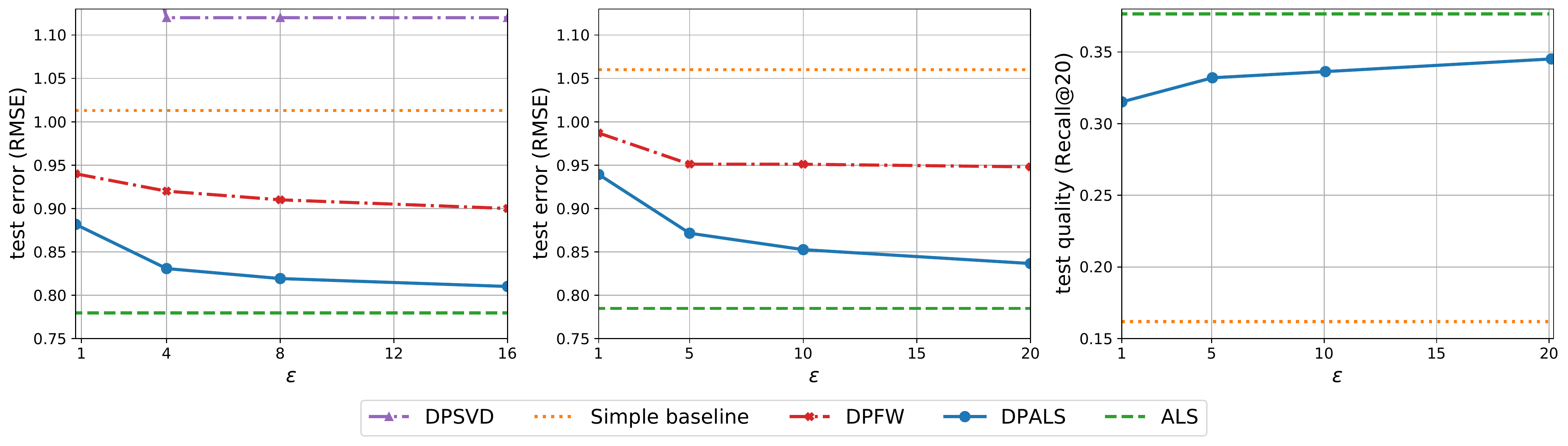}\vspace*{5pt}
\begin{subfigure}[b]{\colwidth}
\caption{ML-10M (top 400 movies)}
\label{fig:tradeoff_ml10mtop}
\end{subfigure}%
\begin{subfigure}[b]{\colwidth}
\caption{ML-10M}
\label{fig:tradeoff_ml10m}
\end{subfigure}%
\begin{subfigure}[b]{\colwidth}
\caption{ML-20M}
\label{fig:tradeoff_ml20m}
\end{subfigure}\vspace*{-10pt}
\caption{Privacy/utility trade-off of different methods. We observe that \dpals is significantly more accurate than \dpfw method, and the loss in accuracy for \dpals compared to ALS is  relatively small, especially for $\epsilon\geq 10$.}
\label{fig:tradeoff}
\end{figure*}

\section{Empirical Evaluation}
\label{sec:empEval}

We run experiments on synthetic data and two benchmark tasks on the widely used MovieLens data sets~\cite{harper16movielens}. The synthetic task follows the assumptions of our theoretical analysis, and serves to illustrate the guarantees of Theorem~\ref{thm:utilGM}. The MovieLens benchmark tasks serve as an evaluation of the empirical privacy/utility trade-off on a more realistic application, and to provide some practical insights into \dpals. We use current SOTA method \dpfw as the main baseline as it is already demonstrated to be more accurate than techniques like Private SVD  \cite{mcsherry2009differentially}. Similar to \cite{jain2018differentially}, we do not compare against  \cite{liu2019private} as the privacy parameters are unclear, and might require (exponential time) Markov chain based sampling methods to compute them.

\subsection{Metrics and Data Sets}
\mypar{Metrics} The quality of a learned model $(\hU, \hV)$ will be measured either using the RMSE or the Recall@k, depending on the benchmark. The RMSE is defined as $\text{RMSE} = \|\prAtest(\hU\hV{}^\top - \boldM)\|_F/\sqrt{|\Omega^{\text{test}}|}$, where $\Omega^{\sf test}$ is the set of test ratings held out from $\Omega$. Recall@k is defined as follows. For each user $i$, let $R_i$ be the set of $k$ movies with the highest scores, where the score of movie $j$ is $\hU_i \cdot \hV_j$. Then $\text{Recall@k} = \frac{1}{n}\sum_{i = 1}^n |R_i \cap \Omega^{\sf test}_i| / \min(k, |\Omega^{\sf test}_i|)$.

\mypar{Synthetic data} We generate a rank $5$ ground truth matrix as the product of two random orthogonal matrices $\Uo \in \mathbb R^{n\times 5}, \Vo \in \mathbb R^{m \times 5}$, where $m = 1000$, and $n \in \{5000, 10000, 20000, 50000\}$. We scaled the ground truth matrix such that the standard deviation of the observations is $1$, in other words, a trivial model which always predicts the global average has a RMSE of 1. The observed entries $\Omega$ are obtained by sampling each entry independently with probability $p = 20 \log(n)/m$.

\begin{figure}[t]
\centering
\includegraphics[width=0.42\textwidth]{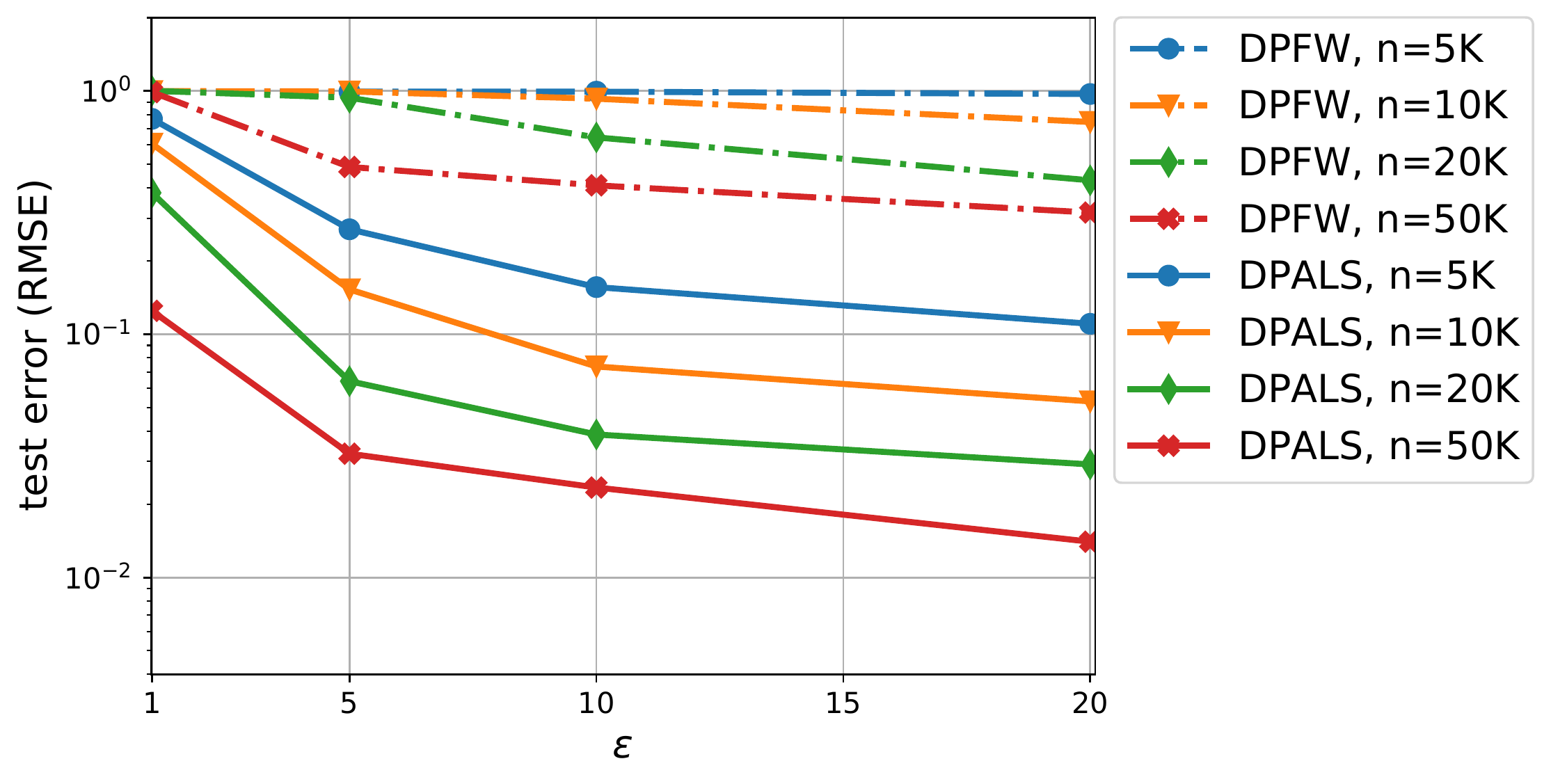}\vspace*{-10pt}
\caption{Comparison of \dpfw and \dpals on synthetic data with different number of rows/users $n$.}
\label{fig:tradeoff_synth}
\end{figure}

\mypar{MovieLens data sets} 
We apply our method to two common recommender benchmarks: 
(i) rating prediction on MovieLens 10M (ML-10M) following~\citet{lee13llorma}, where the task is to predict the value of a user's rating, and performance is measured using the RMSE, (ii) item recommendation on MovieLens 20M (ML-20M) following~\citet{liang18vae}, where the task is to select k movies for each user and performance is measured using Recall@k.
For comparison to \dpfw, we use a variant of the ML-10M task following~\citet{jain2018differentially}, where the movies are restricted to the 400 most popular movies (\dpfw did not scale to the full data set with all movies, unlike \dpals).

\mypar{Experimental protocol}
Each data set is partitioned into training, validation and test sets. Hyper-parameters are chosen on the validation set, and the final performance is measured on the test set. The privacy loss accounting is done using RDP, then translated to $(\epsilon, \delta)$-DP with $\delta = 10^{-5}$ for the synthetic data and ML-10M and $\delta=1/n$ for ML-20M. When training \dpals models on synthetic data, we use the basic Algorithm~\ref{Alg:alt}, without heuristics. When training on MovieLens, we use the heuristics described in Section~\ref{sec:practCons}. Note that even when training on $\head$ items (Section~\ref{sec:practCons-1}), evaluation is always done on the \emph{full set} of items, so that the reported metrics are comparable to previously published numbers. Additional details on the experimental setup are in Appendix~\ref{app:emp}, including a list of hyper-parameters and the ranges we used for each.

\subsection{Privacy-Utility Trade-Off}

\mypar{\dpals vs. \dpfw on synthetic data} 
On synthetic data (Figure~\ref{fig:tradeoff_synth}) we observe:
First, as expected, the trade-off of both algorithms improves as the number of users increases.
Second, for $\epsilon = 1$, the quality of the \dpfw models is no better than the trivial model (RMSE equal to 1), while \dpals has a lower RMSE, which significantly improves with larger~$n$. 
Third, for the largest data set ($n = 50K$), the relative improvement in RMSE between \dpals and \dpfw is at least 7-fold across all values of $\epsilon$. To further illustrate the difference between \dpals and \dpfw, we show in Appendix~\ref{app:emp-experiments} the RMSE against number of iterations, both for the private and non-private variants (Figure~\ref{fig:app:convergence}).

\mypar{\dpals vs. \dpfw on ML10M} 
Next, we compare the two methods on ML-10M-top400 (Figure~\ref{fig:tradeoff_ml10mtop}). For \dpfw and DPSVD, the numbers are taken directly from~\cite{jain2018differentially}.
For reference, we include the test RMSE of non-private ALS, and a simple baseline model that always predicts the global average rating. The performance of DPSVD is worse than that of the simple baseline. \dpals performs best, with a relative improvement in RMSE (compared to \dpfw) that ranges from 7\% to 11.6\%, and that increases with $\epsilon$. In Appendix~\ref{app:emp-experiments}, we show that \dpals achieves performance better than \dpfw even when trained on a small fraction of the users (23\%).

Finally, Figure~\ref{fig:tradeoff_ml10m} shows the privacy/utility trade-off on the full ML-10M data. In order to scale \dpfw to the the full data, we use the same procedure described in Section~\ref{sec:practCons}: \dpfw is trained on the top movies, and for remaining movies the model predicts the user's average rating. Compared to the restricted data set (ML-10M-top400), the privacy-utility trade-off is worse on the full data. This indicates that a smaller ratio between number of users and number of items makes the task harder -- a result that is in line with the theory.

The results on synthetic data and ML10M suggest that \dpals exhibits a much better privacy/utility trade-off than \dpfw, and a better dependence on the number of rows $n$, which is consistent with the theoretical analysis.

\mypar{\dpals on MovieLens 20M} Figure~\ref{fig:tradeoff_ml20m} shows the privacy/utility trade-off of \dpals on the ML-20M data set. We include as a reference the non-private ALS, and a simple baseline model that always returns the k most rated movies.

On this task, the performance of the private model is remarkably good. Indeed, the best previously reported Recall@20 numbers for \emph{non-private} models on this benchmark are 36.0\% for ALS~\cite{liang18vae} and 41.4\% using a sophisticated auto-encoder model~\cite{shenbin20recvae}. Our results show that \dpals can achieve performance comparable to the previously reported state of the art numbers for (non-private) matrix completion, and the utility does not significantly degrade, even at small $\epsilon$.

\subsection{Importance of Adaptive Sampling and Projection}
\label{sec:practCons-3}
In this section, we give additional insights into the effect of the heuristics introduced in Section~\ref{sec:practCons}.
We run a study on ML-10M for $\epsilon = 10$, $r=128$ and a sample size $k = 50$ (both correspond to the best overall model); other hyper-parameters are re-tuned. According to Section~\ref{sec:practCons-1}, we partition the set of movies into $\head$ and $\tail$ and train only on $\head$. The results are reported in Figure~\ref{fig:ml10m_adaptive}, where the movie fraction is simply $|\head|/n$.
\begin{figure}[t]
\centering
\includegraphics[width=0.33\textwidth]{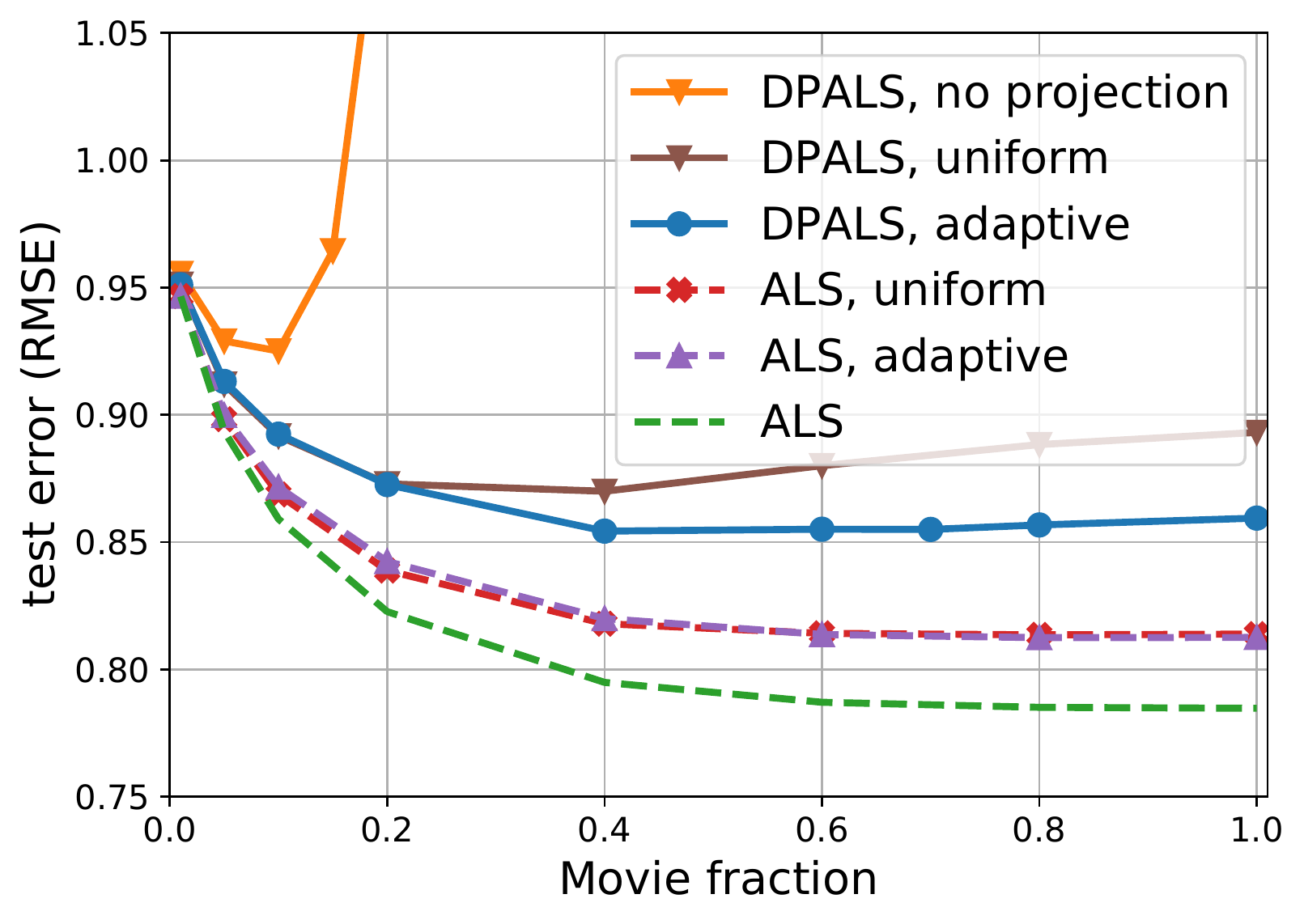}\vspace*{-10pt}
\caption{RMSE vs. movie fraction for $\epsilon = 10$ on ML-10M.}
\label{fig:ml10m_adaptive}
\end{figure}
We make the following observations. First, for non-private ALS, we get the highest RMSE by training on all movies, while there is a benefit for training on a subset of the movies for the private models. Second, when training the non-private model on sub-sampled data (red and purple lines), there is a considerable increase in RMSE, from $0.785$ to $0.812$. This gives an indication that part of the utility loss is due to sub-sampling, and not simply due to the addition of noise. Third, the sampling strategy has a significant impact on the performance of the private \dpals model: adaptive sampling improves the RMSE from 0.870 to 0.854, in contrast, the sampling strategy appears to have little effect on non-private models (i.e. models trained without noise). Finally, training the private model without PSD projection ($\Pi_{\text{PSD}}$ in Line~\ref{alg:step psd} of Algorithm~\ref{Alg:alt}) results in a terrible performance. We find that while the projection is not technically necessary for the theoretical analysis, it is essential in practice.

Training on a subset of the movies appears to  have only a marginal effect when combined with adaptive sampling. However, as detailed in the appendix, the effect is much more significant for smaller $\epsilon$, as well as on ML-20M.

Additional experiments are presented in Appendix~\ref{app:emp}, to explore the effect of other hyper-parameters, such as the rank and the regularization of the objective function.

\section{Conclusion}
We presented \dpals for solving low-rank matrix completion with user-level privacy protection. We show that \dpals provably converges to high accuracy outputs under standard assumptions and, with careful implementation, significantly outperforms existing privacy preserving matrix completion methods. In fact, \dpals achieves competitive metrics on benchmark data compared to non-private models and scales well with data set size.

The efficiency of \dpals shows that by taking advantage of the structure of the problem, one can achieve a much higher utility for privacy-preserving model training.
In this case, the alternating structure of ALS, along with the decoupling of the least squares solution, were essential in the design of an efficient method. These insights may be applicable to a broader class of problems and optimization algorithms.

\section*{Acknowledgments}
We would like to thank Om Thakkar and the anonymous reviewers for insightful comments and discussion.

\normalsize
\bibliography{reference}
\bibliographystyle{icml2021}
\appendix
\onecolumn
\section{Proof of Theorem~\ref{thm:privG}}
\label{app:priv}

\begin{proof}
\def\L{\ln(1/\delta)}
\def\sqL{\sqrt{\L}}

To prove the guarantee for Algorithm~\ref{Alg:alt}, it suffices to show the following claim: that at each time step $t\in[T]$, the computations of $\boldX$ and $\sum_{i\in\Omega'_{j}}\boldM_{ij}\cdot\boldU_i$, for all $j\in[m]$ satisfy $\left(\alpha,\alpha\frac{ k}{2\sigma^2}\right)$-RDP. One can then compose the privacy losses via simple R\'enyi composition~\cite{mironov2017renyi} to obtain the overall RDP-cost to be $\left(\alpha,\alpha\frac{kT}{2\sigma^2}\right)$.

To prove the claim, notice that at each time step $t\in[T]$, there are $m$ computations of $\boldX$ and $\sum_{i\in\Omega'_{j}}\boldM_{ij}\cdot\boldU_i$. Since each user $i\in[n]$ can affect only $k$ of those computations, by Gaussian mechanism~\cite{ODO,mironov2017renyi} and the generalization of standard composition  property of RDP~\citep[Proposition 1]{mironov2017renyi} to the joint RDP, we have the required guarantee. 

We now translate joint-RDP to join-DP. By the first part of the theorem, Algorithm~\ref{Alg:alt} is $(\alpha, \alpha \rho^2)$-joint RDP with $\rho^2 = \frac{kT}{2\sigma^2}$. Thus by \citep[Proposition 3]{mironov2017renyi} it is $(\epsilon, \delta)$-joint DP with $\epsilon = \alpha \rho^2 + \frac{\L}{\alpha-1}$ for any $\alpha > 1$. The latter expression is minimized when $\alpha = 1 + \frac{\sqL}{\rho}$, which yields $\epsilon^{\min}(\rho) = \alpha \rho^2 + \frac{\L}{\alpha-1} = 2\sqL \rho + \rho^2$. Now fix $\epsilon>0, \delta \in (0, 1)$. To guarantee $(\epsilon, \delta)$-joint DP while minimizing the noise (which scales as $1/\rho$ by definition of $\rho$), it suffices to maximize $\rho$ subject to $\epsilon^{\min}(\rho) \leq \epsilon$, but since $\epsilon^{\min}$ is increasing in $\rho$, $\rho$ is maximized when $\epsilon^{\min}(\rho) = \epsilon$. This is a second-order polynomial in $\rho$, and it has a positive root at $\rho^+ = \sqrt{\L + \epsilon} - \sqL$. Therefore, setting
\[
\sigma = \frac{\sqrt{kT/2}}{\rho^+} = \frac{\sqrt{kT/2}}{\sqrt{\L + \epsilon} - \sqrt{\L}} = \frac{\sqrt{kT/2}(\sqrt{\L + \epsilon} + \sqrt{\L})}{\epsilon} \leq \frac{\sqrt{2kT(\L + \epsilon)}}{\epsilon}
\]
suffices to guarantee $(\epsilon, \delta)$-joint DP. This completes the proof.
\end{proof}

\section{Proofs from Section~\ref{sec:util}}\label{app:proof}

Recall the problem setting. $\boldM=\Uo \So \Vot$ where $(\Uo)^\top \Uo=\boldI$ and $\Vot \Vo=\boldI$. Also, $\Uo$ and $\Vo$ are  $\mu$-incoherent by assumption. That is, $\|\Uo_i\|_2\leq \mu\sqrt{r}/\sqrt{n}$ and $\|\Vo_j\|_2\leq \mu\sqrt{r}/\sqrt{m}$. The set of observations is $\Omega=\{(i,j)\ s.t.\ \delta_{ij}=1\}$, where $\delta_{ij}$ are i.i.d.  Bernoulli random variables with $\Pr[\delta_{ij}=1]=p$. 
We sample a new set of observations before every update.

We now present a basic lemma. 
\begin{lem}
    \label{lem:rsc}
    Let $\Uo$ and $\Ut$ be $\mu$ and $\mu_1$-incoherent, orthonormal matrices where $\mu_1\geq \mu$ and $n\cdot p\geq \mu\mu_1 r^2$. Then, the following holds for all $j\in[m]$ (w.p. $\geq 1-m\beta$): 
    $$\lfrob{\frac{1}{p}\sum_{i=1}^n \delta_{ij}\Uo_i (\Ut_i)^\top-\Uot\Ut}\leq C\sqrt{\frac{\mu_1^2 r}{n\cdot p}}\cdot \ln \frac{r}{\beta}.$$
\end{lem}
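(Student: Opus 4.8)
The statement is a concentration bound for a sum of independent rank-one random matrices, so my plan is to apply the matrix Bernstein inequality one column at a time and then union bound over $j$. Fix $j\in[m]$ and write the deviation as $\sum_{i=1}^n Z_i$, where $Z_i := \frac{\delta_{ij}-p}{p}\,\Uo_i(\Ut_i)^\top$. Since the $\delta_{ij}$ are independent across $i$ with mean $p$, the matrices $Z_i$ are independent and mean-zero, and $\E\big[\tfrac1p\sum_i \delta_{ij}\Uo_i(\Ut_i)^\top\big] = \sum_i \Uo_i(\Ut_i)^\top = \Uot\Ut$; hence $\sum_i Z_i$ is exactly the centered quantity in the lemma.

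The two inputs to matrix Bernstein are a uniform bound on each term and a bound on the matrix variance, both of which I would control using incoherence together with orthonormality. For the uniform bound, $|\delta_{ij}-p|\le 1$ and $\mu,\mu_1$-incoherence give
\[ \ltwo{Z_i} \le \tfrac1p\,\ltwo{\Uo_i}\,\ltwo{\Ut_i} \le \tfrac{1}{p}\cdot\tfrac{\mu\sqrt r}{\sqrt n}\cdot\tfrac{\mu_1\sqrt r}{\sqrt n} = \tfrac{\mu\mu_1 r}{pn} =: L. \]
For the variance, $\E[(\delta_{ij}-p)^2]=p(1-p)\le p$ gives $\E[Z_iZ_i^\top] = \tfrac{1-p}{p}\ltwo{\Ut_i}^2\,\Uo_i(\Uo_i)^\top$; summing, bounding $\ltwo{\Ut_i}^2\le \mu_1^2 r/n$, and using $\sum_i \Uo_i(\Uo_i)^\top = \Uot\Uo = \boldI$ yields $\big\|\sum_i\E[Z_iZ_i^\top]\big\|_2 \le \mu_1^2 r/(pn)$, and the symmetric computation bounds $\big\|\sum_i\E[Z_i^\top Z_i]\big\|_2$ by $\mu^2 r/(pn)$. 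Thus the variance proxy is $\nu \le \mu_1^2 r/(pn)$ (using $\mu_1\ge\mu$).

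Feeding these into matrix Bernstein, $\Pr\big[\|\sum_i Z_i\|_2 \ge t\big] \le 2r\exp\big(\tfrac{-t^2/2}{\nu+Lt/3}\big)$, and setting the right-hand side to $\beta$ and solving gives $t \lesssim \sqrt{\nu\ln(r/\beta)} + L\ln(r/\beta)$. The hypothesis $np\ge \mu\mu_1 r^2$ is used precisely here: it forces $L/\sqrt{\nu} = \mu\sqrt r/\sqrt{pn} \le 1$, so the linear (heavy-tail) term is dominated by the sub-Gaussian term, and after folding $\sqrt{\ln(r/\beta)}$ into $\ln(r/\beta)$ one obtains $t \lesssim \sqrt{\mu_1^2 r/(pn)}\,\ln(r/\beta)$ for a single $j$. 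A union bound over the $m$ columns costs a factor $m$ in the failure probability, giving the claimed $1-m\beta$.

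The step I expect to be most delicate is matching the target norm and rate exactly. Matrix Bernstein naturally bounds the \emph{spectral} norm of the $r\times r$ deviation, whereas the lemma is phrased in Frobenius norm, so one must pass between the two norms (the deviation has rank at most $r$), which introduces dimension factors that have to be tracked carefully. Keeping the variance proxy as tight as possible — exploiting both incoherence and the orthonormality $\sum_i\Uo_i(\Uo_i)^\top=\boldI$ rather than a crude per-term bound — is what drives the final rate down toward $\sqrt{\mu_1^2 r/(pn)}$ up to the logarithmic factor. The remaining bookkeeping is to check that the constant $C$ and the base of the $\ln$ absorb the dimension factor $2r$ and the passage from $\sqrt{\ln}$ to $\ln$.
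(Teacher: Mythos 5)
Your proposal matches the paper's proof, which is stated in one line as "matrix Bernstein [Tropp, Theorem 6.1.1] plus incoherence of $\Uo$, $\Ut$" — you have simply filled in the details the paper omits, with the correct centering, uniform bound $L=\mu\mu_1 r/(pn)$, variance proxy $\mu_1^2 r/(pn)$, and the role of $np\geq\mu\mu_1 r^2$ in suppressing the linear term. The spectral-to-Frobenius passage you flag does cost an extra $\sqrt{r}$ relative to the stated bound, but the paper does not track this (it explicitly declines to optimize $r$-dependence and notes a similar redundant $\sqrt{r}$ in a cited lemma), so your argument is the intended one.
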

\begin{proof}
The proof follows from the matrix Bernstein inequality \citep[Theorem 6.1.1]{troppmatrix} and incoherence of $\Uo$, $\Ut$.
\end{proof}

\begin{lem}
    \label{lem:mat_bern}
    Let $\delta_{ij}$ be i.i.d. Bernoulli random variables with $\Pr[\delta_{ij}=1]=p$. Then, the following holds (w.p. $\geq 1-\delta$):
    $$\lfrob{\frac{1}{p}\prA(\boldM)-M}\leq C\left(\sqrt{\frac{n}{p}\ln \frac{1}{\beta}}+\frac{1}{p}\ln \frac{1}{\beta}\right)\cdot \|M\|_\infty.$$
\end{lem}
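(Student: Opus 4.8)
The plan is to mirror the proof of Lemma~\ref{lem:rsc} and invoke the matrix Bernstein inequality \citep[Theorem 6.1.1]{troppmatrix}, since the target expression $\sqrt{(n/p)\ln(1/\beta)}+(1/p)\ln(1/\beta)$ is precisely the two-term (sub-Gaussian plus sub-exponential) bound that inequality produces. First I would decompose the centred matrix entrywise. Since $\prA(\boldM)_{ij}=\delta_{ij}\boldM_{ij}$, the $(i,j)$ entry of $\tfrac1p\prA(\boldM)-\boldM$ equals $\big(\tfrac{\delta_{ij}}{p}-1\big)\boldM_{ij}$, so that
\[
\tfrac1p\prA(\boldM)-\boldM=\sum_{i\in[n],\,j\in[m]}\boldX_{ij},\qquad \boldX_{ij}:=\Big(\tfrac{\delta_{ij}}{p}-1\Big)\boldM_{ij}\,\bolde_i\bolde_j^\top,
\]
a sum of independent rank-one matrices, each mean-zero because $\E[\delta_{ij}/p]=1$.

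The two ingredients matrix Bernstein needs are a uniform spectral bound on each summand and a bound on the matrix variance statistic. For the first, since $\big|\tfrac{\delta_{ij}}{p}-1\big|\le\max\{1,\tfrac1p-1\}\le\tfrac1p$ and $|\boldM_{ij}|\le\linfty{\boldM}$, every $\boldX_{ij}$ satisfies $\ltwo{\boldX_{ij}}\le\tfrac1p\linfty{\boldM}=:L$. For the second, using $\bolde_j^\top\bolde_j=1$ and $\E\big[(\tfrac{\delta_{ij}}{p}-1)^2\big]=\tfrac{1-p}{p}$, the two one-sided second moments are
\[
\sum_{i,j}\E\big[\boldX_{ij}\boldX_{ij}^\top\big]=\tfrac{1-p}{p}\sum_i\Big(\textstyle\sum_j \boldM_{ij}^2\Big)\bolde_i\bolde_i^\top,\quad
\sum_{i,j}\E\big[\boldX_{ij}^\top\boldX_{ij}\big]=\tfrac{1-p}{p}\sum_j\Big(\textstyle\sum_i \boldM_{ij}^2\Big)\bolde_j\bolde_j^\top,
\]
both diagonal, with operator norms equal to the largest squared row-norm and column-norm of $\boldM$ respectively. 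Bounding each squared entry by $\linfty{\boldM}^2$, a row contributes at most $m\linfty{\boldM}^2$ and a column at most $n\linfty{\boldM}^2$; as $n\ge m$ the column term dominates, so the variance statistic obeys $v\le\tfrac{1-p}{p}\,n\linfty{\boldM}^2\le\tfrac np\linfty{\boldM}^2$.

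Plugging $v$ and $L$ into matrix Bernstein gives $\lfrob{\tfrac1p\prA(\boldM)-\boldM}\le C\big(\sqrt{v\ln\tfrac{n+m}\beta}+L\ln\tfrac{n+m}\beta\big)$; in the failure-probability regime used throughout the paper ($\beta=n^{-\Theta(1)}$, e.g. $1/n^{100}$) we have $\ln(n+m)\le\ln(1/\beta)$, hence $\ln\tfrac{n+m}\beta\le2\ln\tfrac1\beta$, and substituting the bounds on $v$ and $L$ yields exactly $C\big(\sqrt{\tfrac np\ln\tfrac1\beta}+\tfrac1p\ln\tfrac1\beta\big)\linfty{\boldM}$. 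The only step requiring genuine care — and the main obstacle — is the variance computation: one must form \emph{both} one-sided second-moment matrices, recognise that each is diagonal with entries given by the squared row/column norms of $\boldM$, and correctly identify that the longer dimension $n$ (not $m$) controls the leading $\sqrt{n/p}$ term. The remaining pieces are immediate bookkeeping: independence and the mean-zero property follow from the i.i.d.\ Bernoulli structure, and the per-summand bound from $|\delta_{ij}/p-1|\le1/p$.
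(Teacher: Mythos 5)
Your proof is correct and takes essentially the same route as the paper: the paper's proof is a one-line citation to the matrix Bernstein inequality \citep[Theorem 6.1.1]{troppmatrix} via Theorem 7 of Recht (2011), and your decomposition into independent mean-zero rank-one summands $\boldX_{ij}=(\delta_{ij}/p-1)\boldM_{ij}\,\bolde_i\bolde_j^\top$ with uniform bound $L=\linfty{\boldM}/p$ and variance statistic $v\leq (n/p)\linfty{\boldM}^2$ (the longer dimension dominating since $n\geq m$) is precisely the calculation behind that citation, including the absorption of the dimensional $\ln(n+m)$ factor in the $\beta=n^{-\Theta(1)}$ regime the paper works in. One caveat you inherit from the paper's own statement rather than introduce yourself: matrix Bernstein controls the spectral norm, so the $\lfrob{\cdot}$ in the conclusion should properly be read as an operator norm (as in Recht's theorem), since a genuine Frobenius-norm bound would require an extra rank or dimension factor.
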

\begin{proof}
The lemma is similar to Theorem 7 of (Recht, 2011) and follows by the matrix Bernstein inequality \citep[Theorem 6.1.1]{troppmatrix}. 
\end{proof}

\subsection{Rank-$1$ Case}
Simplifying the notation, denote $\boldM=\so \uo \vot$ where $(\uo)^\top \uo=1$ and $\vot \vo=1$. 

Note that as $k=C \cdot p \cdot m \ln n$ w.p. $\geq 1-T/n^{100}$, we do not throw any tuples in Line~\ref{line:sample} of Algorithm~\ref{Alg:alt}. Similarly, using incoherence we have:   $\|\boldM\|_\infty\leq \vclip$. So, we do not clip any sample in Line~\ref{line:clipentry} of Algorithm~\ref{Alg:alt}. 

Now, we use mathematical induction to show the incoherence of resulting $\vt$ and $\uht$, and to show that the clipping operations do not really apply in our setting with the selected hyper-parameters. 

For the base case ($t=0$), initialization of $\v^0$ ensures that $\err(\vo, \v^0)\leq \frac{C}{\ln n}$. Now, using \citep[Lemma C.2]{jain2013low} that uses clipping only in the first step to ensure incoherence, we get that $\v^0$ is $16\mu$-incoherent. 

In the induction step, assuming the Lemma holds for $\vt$, we prove the claim for $\uht$ and $\vto$. 
Dropping superscripts of $\Omega'_i$ for notation simplicity and using $\lambda=0$, we have: $\uht=\arg\min\limits_{\u} \lfrob{\prA\left(\boldM-\u(\vt)^\top\right)}^2$. 
The update of $\ut=\uht/\|\uht\|_2$. So using \citep[Lemma 5.5, Lemma 5.7, Theorem 5.1]{jain2013low}, we get w.p. $\geq 1-1/n^{100}$: 
\begin{align}
&\ut\ \text{ is }\ 16\mu\text{-incoherent},\nonumber\\
&\|\uht\|_2\geq \so /16,\nonumber\\
&\err(\ut, \uo)\leq \frac{1}{4} \err(\vt, \vo).\label{eq:ruminone}
\end{align}

To complete the claim, we only need to study the update for $\vto$, which is a noisy version of the ALS update: 
$$\vhto=(\boldD+\boldG^t)^{-1}\left(\prA(M)\uht+\boldg\right),$$
where $\boldD$ and $\boldG^t$ are diagonal matrices s.t. $\boldD_{jj}=\sum_{(i,j)\in \Omega}(\uht_i)^2$ and $\boldG^t_{jj} \sim \rclip^2\sigma\cdot \mathcal{N}(0,1)$. 

We first prove that $\boldD+\boldG^t$ is indeed invertible, and has lower-bounded smallest eigenvalue. Using Lemma~\ref{lem:rsc}, and $pn\geq \mu^2 \log n \log (1/\delta)$, we have w.p. $\geq 1-m\delta$, $$\frac{1}{p} \boldD_{jj} \geq \|\uht\|_2^2 \left(1-\sqrt{\frac{1}{\log n}}\right).$$
Also, using maximum of Gaussians, we have w.p. $\geq 1-m\beta$, $$\frac{1}{p}\snorm{\boldG^t}{2} \leq\frac{\rclip^2\sigma\sqrt{\log (n/\beta)}}{p}\leq \frac{\mu^2(\so)^2\sigma\sqrt{\log (n/\beta)}}{np}\leq \frac{\|\uht\|^2_2}{16\times 256},$$
where the final inequality follows by the assumption on $p$. 

So, 
\begin{equation}\label{eq:dg_one_op}
\|(\boldD+\boldG^t)^{-1}\|_2\leq \frac{2}{ p\cdot \|\uht\|_2^2}.
\end{equation}

We now conduct error analysis for $\vhto$: 
$$\vhto=\alpha\cdot\vo-\boldE,$$
where $\alpha=\frac{\so\cdot \uot\ut}{\|\uht\|_2}$. 
Furthermore, for a matrix $\boldC$ with $\boldC_{jj}=\sum_{(i,j)\in \Omega}\uht_i\uo_i$, we have $\boldE=\boldE^1+\boldE^2$ with $$\boldE^1_j=(\boldD_{jj}+\boldG^t_{jj})^{-1} (\alpha \boldD_{jj}-\so \boldC_{jj})\vo_j,\ \ \boldE^2_j=(\boldD_{jj}+\boldG^t_{jj})^{-1} (\alpha \boldG^t_{jj}\vo_j-\boldg_j).$$
This step follows from the observation that $(\prA(\boldM)\uht)_j=\so \boldC_{jj} \vo_j$. 
(We note that $\boldE$ is a vector but we use upper case to be consistent with Section~\ref{app:proof rank r}.)

Note that $\E[\alpha \boldD_{jj}-\so \boldC_{jj}]=0$. Furthermore, using incoherence of $\vo$, $\snorm{\uht}{2}\geq \so/16$, and the Bernstein's inequality, we have: 
\begin{equation}
\label{eq:rone_lem_1}
\|\boldE^1\|_2\leq \frac{1}{64} \err(\ut,\uo). 
\end{equation}

Now, 
\begin{equation}
\label{eq:rone_lem_2}
\|\boldE^2\|_2 \leq \frac{2\sqrt{\log n}}{p\cdot \snorm{\uht}{2}^2}\cdot \left(16 \rclip^2 \sigma+\vclip \rclip \sigma \sqrt{m}\right)\leq \frac{C\mu^4\sqrt{\log n}}{np}\cdot \sigma. 
\end{equation}
Using \eqref{eq:rone_lem_1} and \eqref{eq:rone_lem_2}, $\snorm{\vhto}{2}\geq 3/4$. 

Thus, we get: 
$$\err(\vo,\vto)\leq \frac{1}{32}\err(\uo,\uto)+\frac{C\mu^4\sqrt{\log n}}{np}\cdot \sigma.$$

Similarly, by incoherence of $\vo$ and using bound on $\boldE^1_j$ and $\boldE^2_j$, we get: 
$$\|\vhto\|_\infty\leq 3\mu.$$

Therefore
 \begin{align}
&\vto\ \text{ is }\ 16\mu\text{-incoherent},\nonumber\\
&\err(\vto, \vo)\leq \frac{1}{32} \err(\ut, \uo)+\frac{C\mu^4\sqrt{\log n}}{np}\cdot \sigma.\label{eq:rvminone}
\end{align}
So, the inductive hypothesis holds. Furthermore, we get Theorem~\ref{thm:utilGM}, by combining the error terms of $\ut$ and $\vto$. 

\subsection{Rank-$r$ Case}
\label{app:proof rank r}
\subsubsection{Proof of Lemma~\ref{lem:incoh}}
Note that as $k=C \cdot p \cdot m \ln n$, w.p. $\geq 1-T/n^{100}$, we do not throw any tuples in Line~\ref{line:sample} of Algorithm~\ref{Alg:alt}. 
Similarly, using incoherence we have:   $\|\boldM\|_\infty\leq \vclip$. So, we do not clip any samples in Line~\ref{line:clipentry} of Algorithm~\ref{Alg:alt}. 

Now, we use mathematical induction to show the incoherence of resulting $\Vht$ and $\Uht$, and to show that the clipping operations do not really apply in our setting with the selected hyperparamters.

For the base case ($t=0$), initialization of $\Vh^0$ ensures that $(\Vh^0)^\top \Vh^0 = \boldI$ and $\err(\Vo, \Vh^0)\leq \frac{C}{\ko^2 r^2 \ln n}$. Now, using \citep[Lemma C.2]{jain2013low} that uses clipping only in the first step to ensure incoherence, we get that $\Vh^0$ is $16\mu\sqrt{r}$-incoherent. 

For the induction step, assuming the Lemma holds for $\Vht$, we prove the claim for $\Uht$ and $\Vhto$. 
Dropping superscripts of $\Omega'_i$ for notation simplicity and using $\lambda=0$, we have: $\Uht=\arg\min\limits_{\boldU} \lfrob{\prA\left(\boldM-\boldU(\Vt)^\top\right)}^2$. 
That is, the update of $\Uht=\Ut \bR_U$, with $\boldU^t$ being the Q part of QR-decomposition, is identical to the standard non-noisy ALS. So using \citep[Lemma 5.5, Lemma 5.7, Theorem 5.1]{jain2013low}\footnote{Lemma 5.5 of \cite{jain2013low} has a redundant $\sqrt{r}$ term in incoherence claim}, we get w.p. $\geq 1-1/n^{100}$,
\begin{align}
&\Ut\ \text{ is }\ 16\ko \mu\text{-incoherent},\nonumber\\
&\|\So \bR_U^{-1}\|_2\leq 16 \ko,\ \ \text{i.e.},\ \ \|\bR_U^{-1}\|_2\leq \ltwo{(\So)^{-1}}\|\So \bR_U^{-1}\|_2\leq 16\|(\So)^{-1}\|_2\ko,\nonumber\\
&\err(\Ut, \Uo)\leq \frac{1}{4} \err(\Vt, \Vo).\label{eq:rumin}
\end{align}
That is, now to complete the claim we only need to study the update for $\Vhto$, which is a noisy version of the ALS update.  

Now consider, 
\begin{align}
    \boldX^t_j&=\Uhtt \left(\sum_{i:(i,j)\in \Omega} \bolde_i \bolde_i^\top\right) \Uht + \boldG^t_j=p\cdot \bR_U \left(\Utt  \left(\frac{1}{p}\sum_{i:(i,j)\in \Omega} \bolde_i \bolde_i^\top\right) \Ut + \boldN^t_j\right) \bR_U,
\end{align}
where $\boldG^t$ is the noise added in Line~\ref{line:LHS} of Algorithm~\ref{Alg:alt} at time step $t$, $\boldD^t_j=\Utt  \left(\frac{1}{p}\sum_{i:(i,j)\in \Omega} \bolde_i \bolde_i^\top\right) \Ut$ and $\boldN^t_j = \frac{1}{p} \bR_U^{-1} \boldG^t_j \bR_U^{-1}$. Note that using Gaussian eigenvalue bound \cite{vershynin} and Weyl's inequality \cite{bhatia}, we have w.p. $\geq 1-1/n^{100}$,
\begin{align}
\sigma_{\min}(\boldD^t_j+\boldN^t_j)\geq \left(1-C\sqrt{\frac{\mu^2\ko^2 r}{n\cdot p}}\cdot \ln n-\frac{2\rclip^2\sigma \sqrt{r}}{p\cdot \sigma_{\min}(\bR_U)^2} \right)\geq \frac{1}{2}, \label{eq:dnt}
\end{align}
where the last inequality follows from: $np\geq C \mu^2 \ko^2 r \ln^2 n$ and $n\sqrt{p}\geq C \mu^2 \ko^6 r\sqrt{r} \cdot \frac{\sqrt{m\ln n}\cdot( T\ln(1/\delta))}{\epsilon}$. 

Next, we argue that $\boldX^t_j$ is PSD. Observe that
\begin{align}
    \sigma_{\min}(\boldX^t_j)&\geq  \frac{1}{2} p\cdot \sigma_{\min}(\bR_U)^2 \geq C\frac{p\cdot \sigma_{\min}(\So)^2}{\ko^2}>0,
\end{align}
where the last inequality follows from \eqref{eq:rumin}.

This shows that $X$ used in update of $\Vhto$ is {\em PSD}, and hence the update for $\Vhto$ is given by: 
\begin{align}
&\bR_U\Vhtot_j\\
=&(\boldD_j+\boldN^t_j)^{-1} \left(\boldC_j\So\Vot_j+\bar{\boldg}^t_j\right)\\
=&\Utt\Uo\So \Vot_j\\
&-(\boldD_j+\boldN^t_j)^{-1}(\boldD_j\Utt\Uo - \boldC_j)\So\Vot_j-(\boldD_j+\boldN^t_j)^{-1} (\boldN^t_j\Utt\Uo\So \Vot_j-\bar{\boldg}^t_j),
\end{align}
where $\boldD_j=\Utt\left(\frac{1}{p} \sum_{i:(i,j)\in \Omega} \bolde_i \bolde_i^\top\right) \Ut$, $\boldC_j=\Utt\left(\frac{1}{p} \sum_{i:(i,j)\in \Omega} \bolde_i \bolde_i^\top\right) \Uo$, and $\bar{\boldg}^t_j = \frac{1}{p} \bR_U^{-1} \boldg^t_j$. 

That is, 
\begin{align}
    \label{eq:vup0}
&    \Vhto \bR_U = \Vo \So \Uot \Ut - \boldE^\top,\ \ \boldE_j=\boldE^1_j + \boldE^2_j,\nonumber\\
&\boldE^1_j=(\boldD_j+{\boldN}^t_j)^{-1}(\boldD_j\Utt\Uo - \boldC_j)\So\Vot_j,  \ \ \boldE^2_j=(\boldD_j+{\boldN}^t_j)^{-1} ({\boldN}^t_j\Utt\Uo\So \Vot_j-\bar{\boldg}^t_j).
\end{align}
Let $\Vhto=\Vto \bR_V$. Then,  
\begin{align}
    \label{eq:vup1}
&    \Vto \bR_V \bR_U = \Vo \So \Uot \Ut - \boldE^\top,\ \ \boldE_j=\boldE^1_j + \boldE^2_j,\nonumber\\
&\boldE^1_j=(\boldD_j+{\boldN}^t_j)^{-1}(\boldD_j\Utt\Uo - \boldC_j)\So\Vot_j,  \ \ \boldE^2_j=(\boldD_j+{\boldN}^t_j)^{-1} ({\boldN}^t_j\Utt\Uo\So \Vot_j-\bar{\boldg}^t_j).
\end{align}
Using the technique of \citep[Lemma 5.6]{jain2013low} and the bound on $\sigma_{\min}(\boldD^t_j+\boldN^t_j)$ (see \eqref{eq:dnt}), we get:  
\begin{equation}
    \lfrob{(\So)^{-1} \boldE^1}\leq \frac{C}{\ko} \err(\Ut, \Uo). \label{eq:soe1}
\end{equation}

Similarly, w.p. $\geq 1-1/n^{100}$: 
\begin{align}
    \|(\So)^{-1} \boldE^2\|_F&\leq \frac{2}{\sigma_{\min}(\So)}\cdot \left(\frac{\rclip^2\sigma}{p\sigma_{\min}(\bR_U)^2}\frac{\sigma_{\max}(\So)\mu\sqrt{r^2m\ln n}}{\sqrt{m}}+\frac{\rclip \vclip \sigma}{p\sigma_{\min}(\bR_U)}\cdot \sqrt{mr \ln n} \right),\nonumber\\
    &\leq  \frac{C\sigma\sqrt{\ln n}}{pn} \cdot \left(\ko^5\cdot \mu^3 r^2+ \mu^3 r^2 \ko^3 \right)\leq \frac{C\ko^5\cdot \mu^3 r^2\sqrt{\ln n}}{\sqrt{p}n}\cdot  \frac{\sqrt{m\ln n}\cdot T\sqrt{\ln( 1/\delta)}}{\epsilon}.\label{eq:soe2}
\end{align}
Let $\beta=\frac{C\ko^5\cdot \mu^3 r^2\sqrt{\ln n}}{\sqrt{p}n}  \frac{\sqrt{m\ln n}\cdot T\ln (1/\delta)}{\epsilon}$. Now, 
\begin{align}
    \sigma_{\min}(\bR_V \bR_U)\geq \sigma_{\min}(\So)\left(1- 2 \err(\Ut, \Uo)-\ko \beta\right)\geq \frac{\sigma_{\min}(\So)}{2}, \label{eq:rvru}
\end{align}
where the last inequality holds because: 
$$\sqrt{p}n\geq C \ko^6 \mu^3 r^2 \sqrt{m} \frac{T\ln n\sqrt{\ln( 1/\delta)}}{\epsilon}.$$

Using \eqref{eq:vup1}, we have:
\begin{align}
\max_j \snorm{\Vtot_j}{2}\leq \frac{2\mu\ko\sqrt{r}}{\sqrt{m}}+\frac{4\mu\ko\sqrt{r}}{\sqrt{m}}+\frac{2\mu \ko r\rclip^2\sigma}{\sqrt{m}p\sigma_{\min}(\bR_U)^2}+\frac{2\rclip \vclip\sigma \sqrt{r}}{p\sigma_{\min}(\So)\sigma_{\min}(\bR_U)}\leq \frac{16 \mu\ko\sqrt{r}}{\sqrt{m}},
\end{align}
where the last inequality follows from the assumption that $\sqrt{p}n\geq C \ko^6 \mu^3 r^2 \sqrt{m} \frac{T\ln n\sqrt{\ln (1/\delta)}}{\epsilon}$. This concludes the proof.

\subsubsection{Proof of Lemma~\ref{lem:disdec}}
The proof for this key Lemma follows technique similar to the above proof. That is, using previous lemma, the clipping operations do not have any effect, and hence we get noisy ALS updates. Now, using \eqref{eq:soe1}, \eqref{eq:soe2}, \eqref{eq:rvru}, and Lemma~\ref{lem:err1}, we have:
\begin{equation}
    \err(\Vo,\Vto)\leq \frac{1}{4} \err(\Uo, \Ut)+4\ko \beta. 
\end{equation}
This proves the lemma.

\subsection{Proof of Theorem~\ref{thm:utilGM}}
Using Lemma~\ref{lem:err1}, we have: 
$$\err(\Vo, \Vt)\leq \frac{1}{4}+\err(\Vo, \Vto)+\alpha,$$
where $\alpha\leq \frac{C\ko^6\cdot \mu^3 r^2\sqrt{\ln n}}{\sqrt{p}n}  \frac{\sqrt{m\ln n}\cdot T\ln 1/\delta}{\epsilon}$. 
So, after $T=\ln \frac{\err(\Vo, \boldV^0)}{\alpha}$ iterations, $\err(\Vo, \boldV^T)\leq 2\alpha$. 

As $\widehat{\boldU}^T=\arg\min_{\widehat{\boldU}}\|\boldM-\widehat{\boldU}^T (\boldV^T)^\top\|_F$, we have: 
\begin{align}
    \|\boldM-\widehat{\boldU}^T (\boldV^T)^\top\|_F\leq \|\boldM-\Uo \So (\boldV^T)^\top\|_F\leq \|\boldM\|_F \|\Vo-\boldV^T\|_2 \leq 2 \alpha \|\boldM\|_F,
\end{align}
where last inequality follows from the fact that $\|\Vo-\boldV^T\|_2\leq 2 \err(\Vo, \boldV)$. 

This shows the second claim of the theorem. The third claim follows similarly while using incoherence of $\boldV^T$. 

\begin{lem}\label{lem:err1}
	Let $\widehat{\boldU}=\Uo \So \boldW + \boldE$ and $\boldU=\widehat{\boldU} \bR^{-1}$ where $\So$ is a diagonal matrix, $\boldW\in \mathbb{R}^{r\times r}$, and $\bR^2=\widehat{\boldU}^\top \widehat{\boldU}$. Then, assuming $\sigma_{\min}(\So)\sigma_{\min}(\boldW)>\|\So\|_2\|\boldE(\So)^{-1}\|_2$, the following holds: 
	$$\|(\boldI-\Uo(\Uo)^\top)\boldU\|_2\leq \frac{\|\boldE \cdot (\So)^{-1}\|_2}{\frac{\sigma_{\min}(\So)}{\|\So\|_2} \sigma_{\min}(\boldW)-\|\boldE(\So)^{-1}\|_2}.$$
	That is, 
		$$\|(\boldI-\Uo(\Uo)^\top)\boldU\|_2\leq \frac{\kappa \|\boldE\|_2}{\sigma_{\min}(\So)\sigma_{\min}(\boldW)-\kappa \|\boldE\|_2}.$$
\end{lem}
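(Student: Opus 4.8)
The plan is to exploit the fact that the projection $\boldI - \Uo(\Uo)^\top$ annihilates the column space of $\Uo$, so that the ``signal'' term $\Uo\So\boldW$ vanishes entirely and only the perturbation $\boldE$ survives. Concretely, writing $P := \Uo(\Uo)^\top$ and using $(\Uo)^\top\Uo = \boldI$, we have $(\boldI - P)\Uo = 0$, hence $(\boldI - P)\widehat{\boldU} = (\boldI - P)\boldE$. Since $\boldU = \widehat{\boldU}\bR^{-1}$, this gives $(\boldI - P)\boldU = (\boldI - P)\boldE\bR^{-1}$, and inserting $(\So)^{-1}\So$ lets me rewrite it as $(\boldI - P)\boldE(\So)^{-1}\cdot\So\bR^{-1}$. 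Taking operator norms and using that $\boldI - P$ is a projection (norm $\leq 1$) then yields $\|(\boldI - P)\boldU\|_2 \leq \|\boldE(\So)^{-1}\|_2\cdot\|\So\bR^{-1}\|_2$, which isolates the two quantities I must control.

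Next I would control the second factor through $\bR$. Since $\bR^2 = \widehat{\boldU}^\top\widehat{\boldU}$, we have $\sigma_{\min}(\bR) = \sigma_{\min}(\widehat{\boldU})$, so $\|\So\bR^{-1}\|_2 \leq \|\So\|_2\|\bR^{-1}\|_2 = \|\So\|_2/\sigma_{\min}(\widehat{\boldU})$. The remaining work is a lower bound on $\sigma_{\min}(\widehat{\boldU})$: for any unit vector $\boldx$, the triangle inequality gives $\|\widehat{\boldU}\boldx\| \geq \|\Uo\So\boldW\boldx\| - \|\boldE\boldx\|$; orthonormality of $\Uo$ makes the first term equal to $\|\So\boldW\boldx\| \geq \sigma_{\min}(\So)\sigma_{\min}(\boldW)$, while writing $\boldE\boldx = \boldE(\So)^{-1}\So\boldx$ bounds the second by $\|\boldE(\So)^{-1}\|_2\|\So\|_2$. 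Hence $\sigma_{\min}(\widehat{\boldU}) \geq \sigma_{\min}(\So)\sigma_{\min}(\boldW) - \|\So\|_2\|\boldE(\So)^{-1}\|_2$, which the hypothesis guarantees is strictly positive (so $\bR$ is indeed invertible). Substituting this into the previous display and dividing numerator and denominator by $\|\So\|_2$ produces the first claimed bound.

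To obtain the second (cruder) form, I would bound $\|\boldE(\So)^{-1}\|_2 \leq \|\boldE\|_2/\sigma_{\min}(\So)$ and invoke that $x\mapsto x/(a-x)$ is increasing for $0 \le x < a$, so this substitution only enlarges the right-hand side; after clearing a factor of $\sigma_{\min}(\So)$ and multiplying through by $\kappa = \|\So\|_2/\sigma_{\min}(\So)$, the denominator $\sigma_{\min}(\So)^2\sigma_{\min}(\boldW)/\|\So\|_2$ collapses to $\sigma_{\min}(\So)\sigma_{\min}(\boldW)$ and the stated $\kappa$-inequality follows.

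I do not anticipate a genuine obstacle here, since the whole argument is elementary operator-norm manipulation; the one place to be careful is the bookkeeping of the $\So$ factors. The sharper first form hinges on measuring the error through $\boldE(\So)^{-1}$ rather than through $\boldE$ itself, and the conversion to the $\kappa$-form must track exactly where each $\sigma_{\min}(\So)$ and $\|\So\|_2$ lands, so that the monotonicity step is applied in the correct direction and nothing is loosened inadvertently.
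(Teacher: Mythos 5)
Your proposal is correct and follows essentially the same route as the paper's proof: project out the column space of $\Uo$ to reduce to $\|\boldE\bR^{-1}\|_2$, insert $(\So)^{-1}\So$ to split off $\|\boldE(\So)^{-1}\|_2\,\|\So\bR^{-1}\|_2$, and lower-bound $\sigma_{\min}(\bR)=\sigma_{\min}(\widehat{\boldU})$ by $\sigma_{\min}(\So)\sigma_{\min}(\boldW)-\|\So\|_2\|\boldE(\So)^{-1}\|_2$. Your write-up actually supplies more detail than the paper (the explicit annihilation argument and the derivation of the $\kappa$-form), but the underlying argument is identical.
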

\begin{proof}
	\begin{align*}
	\|(\boldI-\Uo(\Uo)^\top)\boldU\|_2\leq \|\boldE \cdot \bR^{-1}\|_2 \leq \|\boldE(\So)^{-1}\|_2 \|\So \bR^{-1}\|_2. 
	\end{align*}
	Furthermore, $\|\So \bR^{-1}\| \leq \|\So\|_2 \|\bR^{-1}\|_2$. Now, 
	\begin{align*}
		\frac{1}{\|\bR^{-1}\|_2}=\sigma_{\min}(\bR)\geq \sigma_{\min}(\So) \sigma_{\min}(\boldW)-\|\So\|_2\|\boldE(\So)^{-1}\|_2. 
	\end{align*}
	That is, 
		\begin{align*}
		\|(\boldI-\Uo(\Uo)^\top)\boldU\|_2\leq \frac{\|\boldE \cdot (\So)^{-1}\|_2}{\frac{\sigma_{\min}(\So)}{\|\So\|_2} \sigma_{\min}(\boldW)-\|\boldE\cdot (\So)^{-1}\|_2}. 
	\end{align*}
\end{proof}
 
 \commented{
 \subsection{Initialization}
 \label{sec:init}
 In this section, we describe the initialization routine used by our method. At a high level, similar to \cite{jain2013low}, we use the top eigen-vectors of $\boldA=\prA(\boldM)^\top \prA(\boldM)$ to obtain $\hV^0$. However, to ensure privacy, we need to add noise to $\boldA$. That is, we compute top-$r$ eigenvectors of $\boldW=\boldA+\boldG$ where $\boldG$ is a symmetric Gaussian matrix with standard deviation $\sigma \vclip^2$.

 Now using Theorem 2 of \cite{dwork2014analyze} which is similar to applying Davis-Kahan theorem to $\boldW$, we get: 
 $$\err(\bar{\boldV}, \boldV^{(0)})\leq \frac{2\|\boldG\|_2}{\lambda_r(\boldA)-\lambda_{r+1}(\boldA)}\leq \frac{2\sqrt{m}\sigma \vclip^2}{\lambda_r(\boldA)-\lambda_{r+1}(\boldA)}.$$
 
 Furthermore, $\|\frac{1}{p}\prA(\boldM)-\boldM\|_2\leq \frac{\|\boldM\|_2\mu^2 r}{\sqrt{pm}}$. That is, $\prA(\boldM)=p \boldM + p \boldE$ where $\|\boldE\|_2\leq \frac{\|\boldM\|_2\mu^2 r}{\sqrt{pm}}$. This implies, $\boldA=p^2 \boldM^\top \boldM + \bar{\boldE}$ where $\|\bar{\boldE}\|_2\leq p^2 \|\boldM\|_2 \|\boldE\|_2\leq p^2 \frac{\|\boldM\|_2^2\mu^2 r}{\sqrt{pm}}$. 
 
 Using Weyl's inequality: $\lambda_r(\boldA)-\lambda_{r+1}(\boldA)\geq p^2 (\so_r)^2 - 2\| \bar{\boldE}\|_2\geq (1-\frac{1}{\log n})p^2 (\so_r)^2 $ due to assumption on $p$. So, using the bound above, we get:  $$\err(\bar{\boldV}, \boldV^{(0)})\leq \frac{2\sqrt{m}\sigma \vclip^2}{p^2 (\so_r)^2}\leq \frac{2  \sqrt{m} \sigma \|\boldM\|_2^2}{p^2mn}.$$

Similarly using Davis-Kahan on $\boldA=p^2 \boldM^\top \boldM + \bar{\boldE}$, we get: 
$$\err(\Vo, \bar{\boldV})\leq \frac{2\|\bar{\boldE}\|_2}{p^2 (\so_r)^2}\leq 2\ko^2 \frac{\mu^2 r}{\sqrt{pm}}.$$

That is, 
$$\err(\Vo, \boldV^{(0)})\leq \frac{2  \sqrt{m} \sigma \|\boldM\|_2^2}{p^2mn}+2\ko^2 \frac{\mu^2 r}{\sqrt{pm}}.$$
The initialization condition is now matched by the assumption on $p$  (Theorem~\ref{thm:utilGM}) combined with the assumption that $n\geq \widetilde{\Omega}(m\sqrt{m\log 1/\delta}/\epsilon)$.}

\subsection{Noisy Power Iteration Initialization}
 \label{sec:inittighter}
 
In this section, we derive a tighter initialization routine through the noisy power iteration procedure. We will show that it only requires $n=\widetilde{O}(m)$ and can succeed with high probability.

Prior work on noisy power iteration requires $\boldA=\projm^\top\projm$ to be incoherent~\cite{hardt2012beating,hardt2013beyond,hardt2013noisy}. But whether this is true under our sparsity condition is a difficult problem. For example, related bounds, first conjectured in~\cite{dekel2011eigenvectors}, have only been shown to hold for constant $p$~\cite{vu2015random,rudelson2015delocalization} but still open for $p=O(\log^c m/m)$ for constant $c>0$, the range interesting to us. To overcome this difficulty, we show that we actually do not need $\boldA$ to be fully incoherent. Instead, we just need $\boldA$'s top-$r$ eigenspaces to be incoherent, and the existence of a (moderate) gap between the top eigenspaces and the rest, both of which we are able to establish. Given these two conditions, we then add the proper amount of noise, with a magnitude in between the top-$r$ eigenvalues and the rest, such that i) it does not interfere with the ``boosting'' of the top-$r$ eigenspace; and ii) it ``randomizes'' the remaining eigenvectors such that their incoherence is preserved through the power iteration.

For simplicity, we will present a detailed proof in the rank-$1$ case. Here, we say a vector $\boldw\in\mathbb{R}^m$ is $\mu$-incoherent if $\left\|\boldw\right\|_\infty\leq\frac{\mu}{\sqrt{m}}$.
 
 \hspace{.1\linewidth}\begin{minipage}{.8\linewidth}
\centering
\begin{algorithm}[H]
\SetAlgoLined
\setcounter{AlgoLine}{0}
\DontPrintSemicolon
\SetKwProg{myproc}{Procedure}{}{}
{\bf Required}: $\prA(\boldM)\in\mathbb{R}^{n\times m}$, number of iterations: $T$, incoherence parameter: $\nu$, $s$: threshold for maximum number of ratings per user,  entry clipping parameter: $\vclip$.\\
\nl $\boldw_1\leftarrow$ Random unit vector in $m$-dimensions.\\
\For{$1\leq t\leq T$}{
\nl {\bf If} $\boldw_t$ is not $\nu$-incoherent, {\bf then} report {failure} and {stop}.\\
\nl Compute $\boldz_t=\left(\prA(\boldM)^\top\prA(\boldM)\right)\cdot\boldw_t$, and $\wz\leftarrow\boldz_{t}+\boldg_t$, where $\boldg_t\sim\calN\left(0,\sigma^2\cdot\mathbb{I}\right)$.\\
\nl Normalize $\wz_{t}$ to obtain $\boldw_{t+1}$. 
}
\KwRet $\boldw_{T+1}$.
\medskip

\caption{Noisy power iteration.}
\label{Alg:init_tighter}
\end{algorithm}
\end{minipage}

\begin{thm}[Privacy guarantee]
Algorithm~\ref{Alg:init_tighter} satisfies $(\alpha,\alpha\rho^2)$-RDP, where $\rho^2=\frac{Ts^3\vclip^4\nu^2}{2m\sigma^2}$.
\label{thm:priv_inittighter}
\end{thm}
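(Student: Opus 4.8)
The plan is to recognize each iteration of Algorithm~\ref{Alg:init_tighter} as an instance of the Gaussian mechanism applied to the linear map $\boldw_t \mapsto \boldz_t = \boldA \boldw_t$, where $\boldA = \prA(\boldM)^\top \prA(\boldM)$, and then compose the per-iteration R\'enyi costs. Writing $\boldr_i$ for the $i$-th row of $\prA(\boldM)$ (a vector in $\mathbb{R}^m$ supported on user $i$'s rated items), we have $\boldA = \sum_{i=1}^n \boldr_i \otimes \boldr_i$, so the contribution of a single user $i$ to $\boldz_t$ is exactly $(\boldr_i \cdot \boldw_t)\, \boldr_i$. Since privacy is at the user level and $\boldw_t$ enters only as a (public) multiplier, the entire analysis reduces to bounding the $\ell_2$ sensitivity $\Delta = \sup_{\boldr} |\boldr \cdot \boldw_t|\,\ltwo{\boldr}$ of this contribution, where the supremum is taken over all admissible user rows $\boldr$.

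The key step is the sensitivity bound, and this is precisely where the $\nu$-incoherence check (Line~2) and the clipping/threshold parameters enter. By the entry clipping (parameter $\vclip$) each admissible row satisfies $\linfty{\boldr} \le \vclip$, and since each user contributes at most $s$ ratings (the threshold $s$), $\boldr$ has at most $s$ nonzero coordinates; hence $\ltwo{\boldr} \le \sqrt{s}\,\vclip$ and $\norm{\boldr}_1 \le s\vclip$. Because the algorithm proceeds past Line~2 only when $\boldw_t$ is $\nu$-incoherent, i.e. $\linfty{\boldw_t} \le \nu/\sqrt{m}$, H\"older's inequality gives $|\boldr \cdot \boldw_t| \le \norm{\boldr}_1 \linfty{\boldw_t} \le s\vclip\cdot \nu/\sqrt{m}$. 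Combining,
\[
\Delta \;\le\; \frac{s\vclip\nu}{\sqrt{m}}\cdot \sqrt{s}\,\vclip \;=\; \frac{s^{3/2}\vclip^2\nu}{\sqrt{m}},
\qquad\text{so}\qquad \Delta^2 \le \frac{s^3\vclip^4\nu^2}{m}.
\]
The incoherence bound is essential here: replacing it by the naive Cauchy--Schwarz estimate $|\boldr \cdot \boldw_t| \le \ltwo{\boldr}\,\ltwo{\boldw_t} = \ltwo{\boldr}$ (using $\ltwo{\boldw_t}=1$) would lose the factor $\nu\sqrt{s}/\sqrt{m}$ and fail to produce the claimed rate whenever $s \ll m/\nu^2$, which is the regime of interest.

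With the sensitivity in hand, the Gaussian mechanism $\wz_t = \boldz_t + \boldg_t$ with $\boldg_t \sim \calN(0, \sigma^2 \mathbb{I})$ is $\left(\alpha, \tfrac{\alpha \Delta^2}{2\sigma^2}\right) = \left(\alpha, \tfrac{\alpha s^3\vclip^4\nu^2}{2m\sigma^2}\right)$-RDP per iteration~\cite{mironov2017renyi}. To finish, I would compose over the $T$ iterations via adaptive R\'enyi composition~\citep[Proposition 1]{mironov2017renyi}, yielding $\left(\alpha, \tfrac{T\alpha s^3\vclip^4\nu^2}{2m\sigma^2}\right)$-RDP and matching $\rho^2 = \tfrac{Ts^3\vclip^4\nu^2}{2m\sigma^2}$. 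The one point requiring care is the adaptivity of the composition: the query at step $t$ depends on the data only through $\boldw_t$, which is a deterministic post-processing of the previously released noisy vectors $\wz_1,\dots,\wz_{t-1}$ (the normalization in Line~4), and likewise the incoherence test and the ``report failure and stop'' branch in Line~2 depend on the data only through $\boldw_t$. Hence conditioning on $\boldw_1,\dots,\boldw_t$ in the composition is legitimate and incurs no extra cost, so that the $\nu$-incoherence guarantee on $\boldw_t$ may indeed be assumed when bounding the sensitivity of the $t$-th release. The main obstacle is thus not any single hard estimate but organizing this adaptive-composition bookkeeping correctly so that the incoherence precondition is available at each step.
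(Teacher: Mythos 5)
Your proposal is correct and is exactly the argument the paper compresses into its one-line proof (``follows immediately from $\ell_2$-sensitivity analysis and the RDP guarantee for Gaussian mechanism''): the per-user contribution to $\boldA\boldw_t$ is $(\boldr_i\cdot\boldw_t)\boldr_i$, the incoherence of $\boldw_t$ together with the row-sparsity bound $s$ and entry clip $\vclip$ give $\Delta^2\leq s^3\vclip^4\nu^2/m$, and adaptive R\'enyi composition over $T$ rounds yields $\rho^2=\frac{Ts^3\vclip^4\nu^2}{2m\sigma^2}$. Your explicit treatment of the adaptivity (conditioning on the released $\wz_1,\dots,\wz_{t-1}$ so that the $\nu$-incoherence precondition is legitimately available at each step) is a detail the paper leaves implicit, and it is handled correctly.
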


The proof follows immediately from $\ell_2$-sensitivity analysis and the RDP guarantee for Gaussian mechanism~\cite{mironov2017renyi}.

For the utility guarantee, consider the case where $\Omega$ is randomly sampled with probability $p$ (so setting $s\approx mp$ is sufficient). Furthermore, since we are in the rank-$1$ case, we will assume $\boldM=\sqrt{mn}\cdot \boldu\otimes \boldv$, where both $\boldu$ and $\boldv$ are $\mu$-incoherent so $\vclip$ can be set as $\mu^2$. Below we assume $\mu,\vclip=O(1)$ for simplicity. Now we will show that we can set $p=O(\log^3 m/m)$ and $n=O(m\log m\sqrt{\log(1/\delta)}/\epsilon)$, with proper choices of $\nu,\sigma$, such that $\rho^2 \leq (\epsilon + \log(1/\delta))/\epsilon^2$, and the above procedure returns a vector $w$ such that $|\boldw\cdot \boldv| > 0.6$ with probability $1-o(1)$, which can be boosted to high probability by the standard method. 

\begin{thm}[Utility guarantee]
There exists constant $C_1, C_2>0$, such that for any $\delta\in(0,1), \epsilon\in (0, \log(1/\delta))$, if $p\geq \frac{ C_1\log^3 m}{m}$ and 
$\sqrt{p}n\geq C_2\frac{\sqrt{\log(1/\delta)}}{\epsilon}\cdot\sqrt{m}\log^{5/2} m$, then we can choose settings of incoherence parameter $\nu$, noise standard deviation $\sigma$, and number of time steps $T$ in Algorithm~\ref{Alg:init_tighter} s.t., w.p. $1-o(1)$, we have $|\boldw_{T+1}\cdot\boldv|>0.6$, where $\boldv$ is the right singular vector of $\boldM$, and $\rho=\frac{\epsilon}{2\sqrt{\log(1/\delta)}}$, i.e. Algorithm~\ref{Alg:init_tighter} satisfies $(\epsilon,\delta)$-differential privacy. The probability guarantee can be boosted to high probability $1-m^{-c}$ for any $c>0$ with the private selection algorithm~\cite{liu2019private,zhu2020improving}.
\label{thm:util_inittighter}
\end{thm}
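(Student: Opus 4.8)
The plan is to couple a spectral analysis of $\boldA=\prA(\boldM)^\top\prA(\boldM)$ with a gapped noisy-power-iteration argument in which the injected Gaussian noise plays a double role: it must be small enough along the signal direction $\boldv$ to let that direction be amplified, yet large enough in the orthogonal complement to re-randomize it, thereby keeping every iterate incoherent --- which is exactly the property that bounds the per-step $\ell_2$ sensitivity and hence the privacy cost through Theorem~\ref{thm:priv_inittighter}.

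First I would pin down the spectrum of $\boldA$ in the rank-$1$ model $\boldM=\sqrt{mn}\,\boldu\otimes\boldv$. A direct moment computation gives $\E[\boldA]=p^2mn\,\boldv\boldv^\top+pmn(1-p)\,\mathrm{diag}(v_j^2)$; since $\boldv$ is $\mu$-incoherent the diagonal correction has spectral norm $O(\mu^2 pn)$, and Lemma~\ref{lem:mat_bern} together with a matrix-Bernstein bound on $\|\boldA-\E[\boldA]\|_2$ controls the fluctuation. This yields a top eigenvalue $\lambda_1=\Theta(p^2mn)$ whose eigenvector lies within $O(\mu^2/(pm))$ of the $\mu$-incoherent $\boldv$, while every remaining eigenvalue satisfies $\lambda_2,\dots,\lambda_m=O(pn\cdot\mathrm{polylog}\,m)$. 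The multiplicative gap is therefore $\lambda_1/\lambda_2=\Omega(pm)=\Omega(\log^3 m)$ under $p\geq C_1\log^3 m/m$. Crucially, this needs only the top eigenspace of $\boldA$ to be incoherent, not $\boldA$ itself.

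Next I would fix $\nu=\Theta(\sqrt{\log m})$, $T=\Theta(\log m)$ and choose the noise scale $\sigma$ in the window $\lambda_2/\nu\lesssim\sigma\lesssim\lambda_1/\sqrt m$. The lower end guarantees $\|\boldg_t\|\approx\sigma\sqrt m\gg\lambda_2\geq\|(\boldI-\boldv\boldv^\top)\boldA\boldw_t\|_2$, so that after adding $\boldg_t$ and renormalizing, the part of $\boldw_{t+1}$ orthogonal to $\boldv$ is essentially a normalized Gaussian, hence $O(\sqrt{\log m})$-incoherent with high probability, while the component along $\boldv$ inherits $\boldv$'s $\mu$-incoherence; this is the inductive heart of the proof, maintaining the invariant that every $\boldw_t$ is $\nu$-incoherent so the incoherence test in Algorithm~\ref{Alg:init_tighter} never fires. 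The upper end ensures the noise seen along $\boldv$ (magnitude $\approx\sigma$) never swamps the signal $\lambda_1\cos\theta_t$, which is $\Theta(\lambda_1/\sqrt m)$ at the random start where $\cos\theta_0=|\langle\boldw_0,\boldv\rangle|=\Theta(1/\sqrt m)$. Running the standard gapped recursion then contracts $\tan\theta_t$ by a factor $\approx\lambda_2/\lambda_1=O(1/(pm))$ per step down to a floor $\approx\sigma\sqrt m/\lambda_1$; from $\tan\theta_0=O(\sqrt m)$, after $T=\Theta(\log m)$ steps $\tan\theta_T$ reaches that floor, which the window keeps below a small constant, giving $|\boldw_{T+1}\cdot\boldv|>0.6$ with probability $1-o(1)$ (boosted to $1-m^{-c}$ by private selection). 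Substituting $\nu$, $T$, $s=\Theta(mp)$ and $\vclip=O(1)$ into $\rho^2=\tfrac{Ts^3\vclip^4\nu^2}{2m\sigma^2}$ and solving $\rho=\tfrac{\epsilon}{2\sqrt{\log(1/\delta)}}$ for $\sigma$, the utility constraint $\sigma\lesssim\lambda_1/\sqrt m=\Theta(p^2 n\sqrt m)$ becomes (after accounting for the polylog slack in $\lambda_2$ and in the incoherence bound) the stated requirement $\sqrt p\,n\geq C_2\tfrac{\sqrt{\log(1/\delta)}}{\epsilon}\sqrt m\,\log^{5/2}m$, and Proposition~3 of \cite{mironov2017renyi} converts the RDP guarantee into $(\epsilon,\delta)$-DP.

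I expect the incoherence-preservation step to be the main obstacle. Because $\boldA$ need not be incoherent, a naive iteration could accumulate mass on a few coordinates of a spiky bulk eigenvector, which would simultaneously inflate the sensitivity (breaking privacy) and stall convergence. The delicate part is proving, coordinate by coordinate, that the isotropic Gaussian of carefully tuned magnitude dominates the small-norm-but-possibly-concentrated bulk image at every coordinate while remaining negligible along $\boldv$; it is precisely this requirement that forces the two-sided window on $\sigma$ and, after propagating through the privacy accounting, the extra polylog factors in the final sample-complexity bound.
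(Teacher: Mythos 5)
Your high-level architecture matches the paper's: establish incoherence of the top eigenspace of $\boldA=\prA(\boldM)^\top\prA(\boldM)$ plus an eigengap, run noisy power iteration with noise calibrated to re-randomize the bulk while preserving the signal, and close with the RDP accounting. However, the two places you yourself flag as delicate are exactly where the proposal has genuine gaps, and one of them makes the argument fail quantitatively as stated.

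First, the incoherence of the principal eigenvector $\boldh_1$ does not follow from your spectral perturbation step. Davis--Kahan applied to $\boldA=p^2mn\,\boldv\otimes\boldv+(\text{diagonal}+\text{fluctuation})$ gives only an $\ell_2$ bound $\ltwo{\boldh_1-\boldv}=O(\mathrm{polylog}(m)/(pm))=O(1/\mathrm{polylog}(m))$, which is enormous compared to the $1/\sqrt{m}$ scale required for incoherence: the perturbation could in principle be concentrated on a single coordinate, giving $\linfty{\boldh_1}\gg 1/\sqrt{m}$. Since the iterate converges to $\boldh_1$ (not to $\boldv$), a spiky $\boldh_1$ would trip the incoherence check and break both the sensitivity bound and the utility argument. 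The paper spends the bulk of its proof on precisely this point (Claim~\ref{claim:incoherence}): it writes $\boldh_1=(1\pm o(1))(\mathbb{I}-\tfrac{1}{\lambda_1}\cenm^\top\cenm)^{-1}\boldw$ for an explicitly incoherent $\boldw$, expands the resolvent, and bounds $\linfty{(\tfrac{1}{\lambda_1}\cenm^\top\cenm)^k\boldw}$ by a moment/path-counting argument in the style of delocalization proofs, adapted to the dependent entries of $\prA(\boldM)^\top\prA(\boldM)$. Your proposal needs this (or an equivalent) and does not supply it.

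Second, your two-sided window on $\sigma$ is empty in the regime the theorem covers. With $\lambda_2=O(np\sqrt{\log m})$ and $\nu=\Theta(\sqrt{\log m})$, the lower end $\sigma\gtrsim\lambda_2/\nu$ reads $\sigma\gtrsim np$, while the upper end $\sigma\lesssim\lambda_1/\sqrt{m}=p^2n\sqrt{m}=np\cdot(p\sqrt{m})$; for $p=\Theta(\log^3m/m)$ one has $p\sqrt{m}=o(1)$, so the upper end lies strictly below the lower end. The lower end is forced by your worst-case assumption that the bulk image $(\boldI-\boldh_1\boldh_1^\top)\boldA\boldw_t$ might concentrate all of its $\ell_2$ mass (up to $\lambda_2$) on one coordinate. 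The paper escapes this by showing the bulk image is itself delocalized: writing $\boldw_t=\sum_i\alpha_{ti}\boldh_i$, it maintains $\max_{i\geq2}|\lambda_i\alpha_{ti}|=O(\sigma)$ by induction, uses the uniform randomness of the signs of $\alpha_{ti}$ (as in Hardt--Price) together with $\sum_ih_i(j)^2\leq1$ to get a per-coordinate bound $|\sum_{i\geq2}(\lambda_i\alpha_{ti}+g_{ti})h_i(j)|=O(\sigma\sqrt{\log m})$, and therefore only needs $\sigma=\Omega\bigl(\lambda_1/(\sqrt{m}\log^{3/2}m)\bigr)$, which is compatible with the utility-side upper bound. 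Without this eigenbasis concentration argument (or a substitute), the coordinatewise domination you describe cannot be achieved with any admissible $\sigma$, so the proof as proposed does not go through.
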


\begin{proof}
We will prove this theorem through a sequence of claims.  Write $\lambda = p^2 mn$. The following claim is from previous work, e.g.~\cite{recht2011simpler}.

\begin{claim}\label{claim:approx}
For $n=\Omega(m), p=\Omega(\frac{\log{m}}{m})$, with high probability, $\boldA=\lambda (\boldv\otimes \boldv) + \boldB$, where $\ltwo{\boldB}=O(np\sqrt{\log{m}})$. Hence, if $\lambda_1, \boldh_1$ are the principal eigenvalue and eigenvector, respectively, of $A$, then $\lambda_1 = (1\pm o(1)) \lambda$ and $|\boldh_1\cdot \boldv|= 1- o(1)$.
\end{claim} 

One key fact we need is that $\boldh_1$ is not only close to $\boldv$, but also incoherent. The proof essentially follows the arguments in the proof of Theorem~2.16 in~\cite{erdHos2013spectral}. One difference in our case is that the entries in $A$ are not independent because $A=\projm^\top \projm$. This difficulty was overcome in~\cite{jain2015fast}(Lemma 6) by using the resampling technique. Here we present a direct argument, which might be of independent interest.
\begin{claim}\label{claim:incoherence}
For $n=\Omega(m), p=\Omega(\frac{\log^2{m}}{m})$, with high probability, the principal eigenvector of $\boldA$ is $C$-incoherent for some absolute constant $C>0$.
\end{claim}
\begin{proof}
We treat $\projm$ as the adjacency matrix of a random bipartite graph and apply the techniques similar to~\cite{erdHos2013spectral,jain2015fast}. Recall $\boldh_1$ is the principal eigenvector of $\boldA$ with the eigenvalue $\lambda_1$. Denote by $\cenm = \projm - p\sqrt{mn} (\boldu\otimes \boldv)$. We will first show that when $np \gg \log m$, with high probability, there exists $\boldw$, where $\|\boldw\|_{\infty} = O(1/\sqrt{m})$, such that
\begin{equation}\label{eq:h1}
\boldh_1 = (1\pm o(1))\left(\mathbb{I} - \frac{1}{\lambda_1} \cenm^\top \cenm\right)^{-1} \boldw\,.
\end{equation}

Since $\lambda_1 = (1\pm o(1))\lambda$, $\ltwo{\cenm^\top \cenm} = o(\lambda_1)$, we can expand the above equation to:
\[\boldh_1 = (1\pm o(1)) \sum_{k\geq 0} \left(\frac{1}{\lambda_1} \cenm^\top  \cenm\right)^k \boldw\,.\]

We then apply the method in~\cite{erdHos2013spectral,jain2015fast} to show that there exists constant $c_0<1$ such that with high probability:
\begin{equation}\label{eq:x}
\left\|\left(\frac{1}{\lambda_1} \cenm^\top  \cenm\right)^k w \right\|_\infty \leq c_0^k \|\boldw\|_\infty\,.
\end{equation}

These would imply that $\boldh_1$ is $O(1)$-incoherent. We first prove~(\ref{eq:h1}). Since $\cenm=\projm - p\sqrt{mn} (\boldu\otimes \boldv)$, we can write
\[\boldA=\projm^\top  \projm = \cenm^\top  \cenm + p\sqrt{mn} (\boldv\otimes \boldu) \projm + p\sqrt{mn} \projm^\top (\boldu\otimes \boldv) - p^2 mn (\boldv\otimes \boldv)\,.\]

We first observe that with high probability $\wv = \frac{1}{p\sqrt{mn}}\projm^\top \boldu$ satisfies that
\begin{equation}\label{eq:v}
    |\wv_j - \boldv_j| = O\left(\sqrt{\frac{\log m}{pn}}|\boldv_j|\right)\quad\mbox{for all $j\in[m]$.}
\end{equation}

Since $|\boldh_1\cdot \boldv| = 1-o(1)$, we have that when $pn\gg \log m$, $|h_1 \cdot \wv_j - \boldh_1 \cdot \boldv|=o(1)$. Hence
\begin{equation}\label{eq:t1}
p\sqrt{mn}(\boldv\otimes \boldu)\projm \boldh_1 = p\sqrt{mn}(\projm^\top  \boldu \cdot \boldh_1) \boldv = p^2 mn ((\boldh_1 \cdot v) \pm o(1)) \boldv\,.
\end{equation}

In addition,
\begin{equation}\label{eq:t2}
p\sqrt{mn}\projm^\top(\boldu\otimes \boldv) \boldh_1 = p^2 mn (\boldh_1 \cdot v) \wv\,.
\end{equation}

Applying~(\ref{eq:t1}) and~(\ref{eq:t2}), we have that
\begin{align*}
&{}\boldA\boldh_1\\
=&{} (\cenm^\top \cenm + p\sqrt{mn} (\boldv\otimes \boldu) \projm + p\sqrt{mn} \projm^\top (\boldu\otimes \boldv) - p^2 mn (\boldv\otimes \boldv))h_1\\
=&{}\cenm^\top \cenm \boldh_1 + p^2 mn ((\boldh_1 \cdot \boldv) \pm o(1)) \boldv + p^2 mn (\boldh_1 \cdot \boldv) \wv - p^2 mn (\boldh_1 \cdot \boldv) \boldv\\
=&{}\cenm^\top \cenm \boldh_1 + \lambda (\boldh_1\cdot \boldv)(\wv \pm o(1) \boldv)\,.
\end{align*}

Let $\boldw = (\boldh_1\cdot v)(\wv \pm o(1) \boldv)$. Recall $\|\boldv\|_\infty = O(1/\sqrt{m})$. When $pn\gg \log m$, using~(\ref{eq:v}), we have $\|\wv\|_\infty = O(1/\sqrt{m})$ too. Hence $\|\boldw\|_\infty = O(1/\sqrt{m})$. Since $Ah_1 = \lambda_1 \boldh_1$, we have that $\cenm^\top \cenm \boldh_1 +\lambda \boldw = \lambda_1 \boldh_1$, hence $(\lambda_1\mathbb{I} - \cenm^\top\cenm)h_1 = \lambda \boldw$, which implies (\ref{eq:h1}).

Now we prove~(\ref{eq:x}). Since we $\lambda_1 = (1\pm o(1))\lambda$, it suffices to prove~(\ref{eq:x}) by replacing $\lambda_1$ with $\lambda$ instead. Furthermore, since $\|\cenm^\top \cenm\|_2=o(\lambda)$, it suffices to consider $k=O(\log m)$. Let $\cenm' = \frac{1}{p\sqrt{mn}}\cenm$. Write $w'=\left(\frac{1}{\lambda} \cenm^\top  \cenm\right)^k \boldw=(\cenm'^\top \cenm')^k \boldw$. 
Following the proof of Lemma~7.10 in \cite{erdHos2013spectral}, we will bound the $q$-th moment of $|w'_j|$ and apply the Markov inequality. We treat $\cenm'$ as the adjacency matrix of a bipartite graph $[n]\times [m]$ where each edge is labeled with a random variable $\xi_{ij} = (\frac{1}{p}\chi_{ij} - 1)u_i v_j$ where $\chi_{ij}$'s are independent Bernoulli random variables taking value $1$ with probability $p$. By its definition $\E[\xi_{ij}]=0$, and for $r\geq 2$,
\begin{equation}\label{eq:moments}
\E[|\xi_{ij}|^r] = p((\frac{1}{p} - 1)|u_i v_j|)^r + (1-p)(|u_i v_j|)^r = O\left(p\frac{1}{(p\sqrt{mn})^r}\right)\,.
\end{equation}

Let $G=[n]\times[m]$ denote the complete bipartite graph with each edge $ij$ labeled with $\xi_{ij}$. For $j_1, j_2\in [m]$, let $\cP_k(j_1, j_2)$ denote all the length $2k$ paths in $G$ starting from the node $j_1$ and ending at node $j_2$. Then,
\[w'_j = \sum_{j_1} w_{j_1} \sum_{P\in\cP_k(j_1, j)} \prod_{e\in P}\xi_e\,.\]

And
\[{w'_j}^q = \sum_{j_1, j_2, \cdots, j_q} w_{j_1} w_{j_2}\cdots w_{j_q} \sum_{\forall \ell\, P_\ell\in\cP_k(j_\ell, j)} \prod_{e\in \cup P_\ell} \xi_e\,.\]

Here $\cup_\ell P_\ell$ is understood as a multiple set. Hence
\begin{align*}
\E[{w'_j}^q] &= \sum_{j_1, j_2, \cdots, j_q} w_{j_1} w_{j_2}\cdots w_{j_q} \sum_{\forall \ell\, P_\ell\in\cP_k(j_\ell, j)} \E[\prod_{e\in \cup_\ell P_\ell} \xi_e] \\
&\leq \|w\|_\infty^q \sum_{j_1, j_2, \cdots, j_q} \sum_{\forall \ell\, P_\ell\in\cP_k(j_\ell, j)} |\E[\prod_{e\in \cup_\ell P_\ell} \xi_e]|\,.
\end{align*}

We will now bound 
\begin{equation}\label{eq:s}
\sum_{j_1, j_2, \cdots, j_q} \sum_{\forall \ell\, P_\ell\in\cP_k(j_\ell, j)} |\E[\prod_{e\in \cup_\ell P_\ell} \xi_e]|\,.
\end{equation}

Since $\E[\xi_{ij}] = 0$ and $\xi_{ij}$'s are independent random variables, for $\E[\prod_{e\in \cup_\ell P_\ell} \xi_e]\neq 0$, it must be that every edge in $\cup_\ell P_\ell$ appears at least twice. The bound on $\E[|w'_j|^q]$ is by counting the number of such paths and applying the moments bound (\ref{eq:moments}).

The argument follows~\cite{erdHos2013spectral}. We give it here for completeness. Let $P$ denote the set of edges, without multiplicity, in $\cup_\ell P_\ell$. Write $t=|P|$. Then $t\leq kq$ since every edge in $\cup_\ell P_\ell$ has to appear at least twice. In addition, the edges in $P$ form a connected component because all the paths are connected to node $j$. Hence there are at most $t+1$ vertices. Since the set must include $j$, there are at most ${m+n\choose t}$ choices of the set of vertices. Among $t+1$ vertices, we can make at most ${t+2k\choose 2k} (2k)!$ different paths of length $2k$. Hence the total number of paths is bounded by
\[{m+n\choose t}\left({t+2k\choose 2k} (2k)!\right)^q\leq n^t (c_1 kq)^{2kq}\,,\]
for some $c_1>0$. In the last inequality, we used that $n=\Omega(m)$ and $t\leq kq$.

Suppose that $P=\{e_1, e_2, \cdots, e_t\}$, and the multiplicity of these edges in $\cup_\ell P_\ell$ are $s_1, s_2, \cdots, s_t$ respectively. So $\sum_i s_i = 2kq$. Using (\ref{eq:moments}),
\begin{align*}
\E\left[\prod_{e\in \cup_\ell P_\ell} |\xi_e|\right] &= \E[|\xi_{e_1}|^{s_1}]\E[|\xi_{e_2}|^{s_2}]\cdots\E[|\xi_{e_t}|^{s_t}]\\
&= \left(p\frac{1}{(p\sqrt{mn})^{s_1}}\right) \cdots \left(p\frac{1}{(p\sqrt{mn})^{s_t}}\right)\\
&=p^t \frac{1}{(p\sqrt{mn})^{2kq}}\,.
\end{align*}

Hence the contribution to (\ref{eq:s}) by the case of $|P|=t$ is bounded by:
\[n^t (c_1 kq)^{2kq} p^t \frac{1}{(p\sqrt{mn})^{2kq}}=\frac{(c_1 kq)^{2kq}}{(pm)^{kq} (pn)^{kq-t}}\,.\]

Since $t\leq kq$, (\ref{eq:s}) is bounded by
\[O\left(\frac{(c_1kq)^{2kq}}{(pm)^{kq}}\right)\,.\]

Fix $c_0 = 1/2$. For any $c>0$ and $k=O(\log m)$, we can choose $q=\log m/(2k)$ and $p\geq c' \log^2 m/m$ for some $c'>0$ such that (\ref{eq:s}) is bounded by $c_0^{kq} m^{-c}$. Applying Markov inequality, we have that with probability $1-m^{-c}$, $|w'_j|=O(c_0^k \|w\|_\infty)$. Since $c$ can be chosen arbitrarily, we have that with high probability this holds for all $k=O(\log m)$. This completes the proof.

\end{proof}

The following summarizes the above claims and the conditions we need in our proof.
\begin{claim}
Assume $n=\Omega(m), p=\Omega(\frac{\log^3 m}{m})$. Let $\boldA=\sum\limits_i \lambda_i (\boldh_i \otimes \boldh_i)$ be the eigen-decomposition of $A$ where $\lambda_1\geq \lambda_2 \geq \ldots \lambda_m \geq 0$. Then $\lambda_1=(1\pm o(1)) p^2 mn$, and for $i\geq 2$, $\lambda_i=O(np\sqrt{\log{m}}) = o(\lambda_1/\log^2 m)$. In addition $\boldh_1 \cdot \boldv=1-o(1)$, and $\|\boldh_1\|_\infty \leq C /\sqrt{m}$.
\end{claim}

Now we show that
\begin{claim}
There exists $c_2, c_3>0$, such that if $\sigma \geq c_2 \frac{\lambda}{\sqrt{m} \log^{3/2} m}$, then $\forall t\in[T]$, $\boldw_{t}$'s are $c_3\sqrt{\log m}$-incoherent.
\end{claim}

\begin{proof}
For notation convenience, we set $\boldw_0=0$, hence $\boldw_1$ is a random unit vector. Write $\boldw_t= \sum_i \alpha_{ti} \boldh_i$. Then $Aw_t=\sum_i \lambda_i \alpha_{ti} \boldh_i$ and $\wz_t = \boldA\boldw_t + \boldg_t = \sum_i (\lambda_i \alpha_{ti} + g_{ti}) \boldh_i$, where $g_{ti} = g_t \cdot h_i$. Hence $\alpha_{(t+1)i} = (\lambda_i \alpha_{ti} + g_{ti})/ \|\wz_t\|_2$. Since $\boldg_t$ is sampled from $\mathcal{N}(0, \sigma^2\mathbb{I})$, $g_{ti}$'s are i.i.d Gaussian from $\mathcal{N}(0, \sigma^2)$. In addition $\alpha_{0i}=0$. By induction, we have that $\E[\alpha_{ti}]=0$, and the signs $\sign(\alpha_{t1}), \cdots, \sign(\alpha_{tm})$ are uniformly distributed in $\{-1, 1\}^m$, independent of their values, by the same argument in~\cite{hardt2013noisy} Lemma 4.13. We will now first show, by induction, that with high probability,  $\max_{i\geq 2}|\lambda_i\alpha_{ti}| = O(\sigma)$.

When $t=0$, this is clearly true. Now suppose this holds for $w_t$. All the following statements hold with high probability. Note in the following $\alpha_{ti}$'s are  random variables. By $\|\wz_t\|^2 = \sum_i (\lambda_i \alpha_{ti} + g_{ti})^2$ and $\max_{i\geq 2}|\lambda_i\alpha_{ti}| = O(\sigma)$ with high probability, we have that with high probability,
\begin{equation}\label{eq:norm}
    \|\wz_t\|_2^2 = \Omega(\lambda_1^2 \alpha_{t1}^2 + m\sigma^2)\,.
\end{equation}

By induction hypothesis for any $i\geq 2$,  $|\lambda_i\alpha_{ti}| = O(\sigma)$, hence $|\lambda_i \alpha_{ti} + g_{ti}|= O(\sqrt{\log m} \sigma)$. Since $\|\wz_t\|=\Omega(\sqrt{m}\sigma)$, $|\alpha_{(t+1),i}| = |\lambda_i \alpha_{ti} + g_{ti}|/\|\wz_t\|=O(\frac{\sqrt{\log m}}{\sqrt{m}})$, hence  $|\lambda_i\alpha_{(t+1),i}| = O(\frac{\lambda}{\log^2 m}\cdot \frac{\sqrt{\log m}}{\sqrt{m}})=O(\frac{\lambda}{\log^{3/2} m\sqrt{m}})=O(\sigma)$. We can clearly choose $c_3$ large enough to make sure the induction goes through.

The $j$-th coordinate of $\wz_t$ is $\wz_t(j) = \sum_i (\lambda_i \alpha_{ti} + g_{ti}) h_i(j)$. Consider $\wz_t(j)'=\sum_{i\geq 2} (\lambda_i \alpha_{ti} + g_{ti}) h_i(j)$. Clearly $g_{ti}$'s are independent Gaussian variables. In addition, $\sign(\alpha_{t1}),\cdots, \sign(\alpha_{tm})$ are uniformly distributed over $\{-1, 1\}^m$. Since $|\lambda_i\alpha_{ti}|=O(\sigma)$ for $i\geq 2$, and $\sum_i h_i(j)^2 \leq 1$, by applying the concentration bound, we have that with high probability 
\[|\wz_t(j)'| = O\left(\sqrt{\log m \textstyle\sum_{i\geq 2} \sigma^2 h_i(j)^2}\right) = O( \sigma\sqrt{\log m})\,.\]

Hence, with high probability, 
\begin{equation}\label{eq:ub}
\wz_t(j)^2 = O(\lambda_1^2\alpha_{t1}^2h_1(j)^2 + (\log m)\sigma^2) = O(\lambda_1^2\alpha_{t1}^2/m + (\log m)\sigma^2)\,.
\end{equation}

The last inequality is by $\|h_1\|_\infty\leq C /\sqrt{m}$. Combining with Equation~(\ref{eq:norm}), by distinguishing the cases of $|\lambda_1\alpha_{t1}|\leq \sqrt{m}\sigma$ and $|\lambda_1\alpha_{t1}|\geq \sqrt{m}\sigma$, we have that $\frac{|\wz_t(j)|}{\|\wz\|_2}=O(\frac{\sqrt{\log m}}{\sqrt{m}})$, i.e $w_t$ is $O\left(\sqrt{\log m}\right)$-incoherent.
\end{proof}

Now, we show that we need only $O(\log m)$ round to get a constant approximation to $v$. Note that here we cannot get high probability bound because we need the initial $|\alpha_{01}|$ to be $\Omega(1/\sqrt{m\log m})$ to bootstrap the process. But it does happen with probability $1-o(1)$. 
\begin{claim}
There exists $c_4, c_5>0$, such that with $n,p,\sigma,\nu$ as set above, if $T\geq c_4 \log m$, $|w_T\cdot v|\geq 1-c_5/\log m$ w.p. $1-o(1)$.
\end{claim}
\begin{proof} 
It suffices to show that $|\alpha_{T1}|=\Omega(1)$. Because $\lambda_2 = O(\lambda_1/\log^2 m)$ and $\sqrt{m\log m}\sigma=O(\lambda_1/\log m)$, once $|\alpha_{T1}|=\Omega(1)$, with one more round we would have $|\alpha_{T+1,1}| = 1 - O(1/\log m)$. 

Suppose that $|\alpha_{01}|=\Omega(1/\sqrt{m\log m})$, which happens with probability $1-o(1)$. Then $|\lambda_1\alpha_{01}| = \Omega(\log m) \sigma$. We can prove by induction that $|\lambda_1\alpha_{t1}| = \Omega(\log m)\sigma$ with high probability. By $\alpha_{(t+1)1} = (\lambda_1\alpha_{t1} + g_{t1})/\|\wz_t\|_2$, we have $|\alpha_{(t+1)1}|\geq \frac{1}{2} |\lambda_1\alpha_{t1}|/\|\wz_t\|_2$ with high probability. By Equation~(\ref{eq:ub}), we can bound
\[\|\wz_t\|_2^2 = O\left(\lambda_1^2\alpha_{t1}^2 + (m\log m)\sigma^2\right)\,.\]

Hence,
\begin{equation}
|\alpha_{(t+1)1}| \geq \frac{1}{2} |\lambda_1\alpha_{t1}|/\|\wz_t\|_2=\Omega(|\lambda_1\alpha_{t1}|/(|\lambda_1\alpha_{t1}| + \sqrt{m\log m}\sigma))\,.
\end{equation}

Now if $|\lambda_1\alpha_{t1}|\geq \sqrt{m\log m}\cdot{\sigma}$, then $|\alpha_{(t+1)1}| = \Omega(1)$ so we are done. Otherwise, $|\alpha_{(t+1)1}|=\Omega(|\lambda_1\alpha_{t1}|/(\sqrt{m\log m}\cdot\sigma))=\Omega(\log m)|\alpha_{t1}|$, by $\sigma = O\left(\frac{\lambda}{\sqrt{m}\log^{3/2} m}\right)$. Hence within $O(\log m)$ rounds, $\alpha_{t1}\geq c$ for some constant $c>0$. 
\end{proof}

Now, gather the assumption $\vclip=\mu^2=O(1)$ and the conditions $\nu=O(\sqrt{\log m}), \sigma=\Omega(\frac{p^2 mn}{\sqrt{m}\log^{3/2} m}), T=O(\log m)$ and plug them into the formula in Theorem~\ref{thm:priv_inittighter}. We have that
\begin{equation}
\rho^2 = \frac{T (mp)^3\vclip^4 \nu^2}{2m \sigma^2} = O\left(\frac{m\log^5 m}{p n^2}\right)\,.
\end{equation}
Hence we can set $n=O\left(\frac{\sqrt{\log(1/\delta)}}{\epsilon}\sqrt{m/p}\log^{5/2} m\right)$, such that $\rho^2=\frac{\epsilon^2}{4\log(1/\delta)}$. This completes the proof. We note that when $p=O(\log^3 m/m)$, $n=\widetilde{O}(m\log m)$ which is nearly optimal.

Note that the only reason that prevents the high probability guarantee is due to the choice of the initial random vector. It can be boosted to high probability guarantee by running the process $O(\log m)$ times and privately releasing the vector $w$ with $\|\projm w\|^2$ above a threshold. This can be done through the private selection algorithm~\cite{liu2019private} with an extra constant factor in $\epsilon$.
\end{proof}

\newpage
\section{Additional Details on Heuristic Improvements}
\label{app:heuristics}

Algorithm~\ref{Alg:preprocessing} summarizes the data pre-processing and sampling heuristics described in Section~\ref{sec:practCons}.

\hspace{.15\linewidth}\begin{minipage}{.7\linewidth}
\centering
\begin{algorithm}[H]
\SetAlgoLined
\setcounter{AlgoLine}{0}
\DontPrintSemicolon
\SetKwProg{myproc}{Procedure}{}{}
{\bf Required}: $\prA(\boldM)$: Observed ratings, $\vclip$: entry clipping parameter, $k$: maximum number of ratings per user, $\sigma_p$: standard deviation of the pre-processing noise, $\beta$: fraction of movies to train on. \\
\nl Clip entries in $\prA(M)$ so that $\|\prA(M)\|_\infty \leq \vclip$ \\
\nl Uniformly sample $\Omega'$:\\
\For{$1\leq i\leq n$}{
$\Omega_i' \leftarrow $ sample $k$ items from $\Omega_i$ uniformly.
}
\nl Compute movie counts $\countsP \leftarrow \text{Counts}(\Omega')$.\label{line:app:count1}\\
\nl Partition movies:\\
Let $\head$ be the $\lceil\beta m\rceil$ movies with the largest $\countsP$, and let $\tail$ be the rest. \label{line:app:partition}\\
\nl Adaptively sample $\Omega''$:\\
\For{$1\leq i\leq n$}{
$\Omega_i'' \leftarrow $ the $k$ items in $(\Omega_i \cap \head)$ with the lowest count $\countsP$. \label{line:app:adaptive}
}
\nl Recompute movie counts $\countsP \leftarrow \text{Counts}(\Omega'')$\label{line:app:count2}\\
\nl Center the data $\prApp(M)\leftarrow \prApp(M) - \av$, where
$\av=\frac{\sum_{(i,j)\in \Omega''}M_{ij}+\calN(0,k\vclip^2\sigma_p^2)}{|\Omega''|+\calN(0,k\sigma_p^2)}$\\
\KwRet $\prApp(\boldM), \countsP$
\medskip
\medskip
\\
\myproc{$\text{Counts}(\Omega)$}{
\For{$1\leq j\leq m$}{
$\countsP_j \leftarrow |\Omega_j| + \calN(0,\sigma_p^2)$\\
\KwRet $\countsP$
}
}
\caption{Data pre-processing heuristics.}
\label{Alg:preprocessing}
\end{algorithm}
\end{minipage}

First, we compute differentially private movie counts (Line~\ref{line:app:count1}) using a uniform sample $\Omega'$, and use it to partition the movies (Line~\ref{line:app:partition}) and to perform adaptive sampling (Line~\ref{line:app:adaptive}). The final subset used for training is $\Omega''$, which consists only of $\head$ movies. Finally, to have a more accurate estimate of the counts, we recompute $\countsP$ on $\Omega''$ (Line~\ref{line:app:count2}). We redo this computation as the counts are also used during optimization, as described in the next section. Note that in both computations of $\countsP$, we use a subset of $\Omega$ that contains at most $k$ movies per user, in order to guarantee user-level differential privacy.

\mypar{Privacy accounting} As we saw in Theorem~\ref{thm:privG}, Algorithm~\ref{Alg:alt} with random initialization satisfies $\big(\alpha,\frac{\alpha(kT)}{2\sigma^2}\big)$-joint RDP. The data processing heuristics in Algorithm~\ref{Alg:preprocessing} satisfy $\big(\alpha,\frac{\alpha(2k+2)}{2\sigma_p^2}\big)$-RDP. So, by standard composition of RDP, we have the total privacy cost at any order $\alpha>1$ to be: $\big(\alpha,\alpha\cdot\big(\frac{kT}{2\sigma^2}+\frac{k+1}{\sigma_p^2}\big)\big)$. We can obtain the final $(\epsilon,\delta)$-joint differential privacy guarantee by optimizing for $\alpha$, similarly to Appendix~\ref{app:priv}.

\mypar{Loss function}
We minimize the following loss in practice.
\begin{equation}
\label{eq:objective}
f(\Uh, \Vh) = \|\prA\big(\boldM-\hU\hV{}^\top\big)\|_F^2 + \lambda_0\|\hU\hV{}^\top\|_F^2 + \lambda \sum_{i = 1}^n \frac{\counts_i^{\nu}}{Z} \|\hU_i\|^2 + \lambda \sum_{j = 1}^m \frac{\countsP_j^\mu}{Z'} \|\hV_j\|^2,
\end{equation}
where $\lambda_0$, $\lambda$, $\mu$, and $\nu$ are hyper-parameters. The loss function used in the description of Algorithm~\ref{Alg:alt} is a special case of~\eqref{eq:objective} where $\lambda_0 = \mu = \nu = 0$. The additional terms in~\eqref{eq:objective} do not change the essence of the algorithm, but we find that they make a significant difference in practice.

First, the term $\lambda_0 \|\hU\hV{}^\top\|_F^2$ is often used in problems with implicit feedback, as in~\cite{hu2008ials}. In such problems, the observed entries are often binary, and minimizing the objective $\|\prA\big(\boldM-\hU\hV{}^\top\big)\|_F^2$ can yield a trivial solution -- the matrix of all ones. The addition of the second term penalizes non-zero predictions outside of $\Omega$, leading to better generalization. One of the benchmarks we use is an implicit feedback task, in which the use of the second term is necessary. As described in Section~\ref{sec:practCons-2}, this results in an additional term $\boldK$ in Line~\ref{line:LHS} of $\aglobal$, and care is needed when adding privacy protection to this term, since it involves a sum over all user embeddings. 
The key observation is that this term is constant for all items, so we only need to compute a noisy version of $\boldK$ once and use it for all items, thus limiting the privacy loss it incurs.

Second, we use a weighted $\ell_2$ regularization, where the weights are defined as follows. The weight of movie $j$ is $\countsP_j^\mu / Z'$, where $\countsP$ is the vector of approximate counts (computed in Algorithm~\ref{Alg:preprocessing}), $\mu$ is a non-negative hyper-parameter and $Z'$ is the normalizing constant $Z' = \frac{1}{m}\sum_{j=1}^m \countsP_j^\mu$. When $\mu$ is positive, this corresponds to applying heavier regularization to more frequent items, and we found in our experiments that this can significantly help generalization. The weights for the users are defined similarly, with one main difference: instead of using approximate counts $\countsP$, we use the exact counts $\counts$, as this term only affects the solution in $\alocal$, which is a privileged computation as illustrated in Figure~\ref{fig:blockSchematic}.

\begin{figure}[h]
\centering
\begin{subfigure}[b]{0.29\textwidth}
\includegraphics[width=\textwidth]{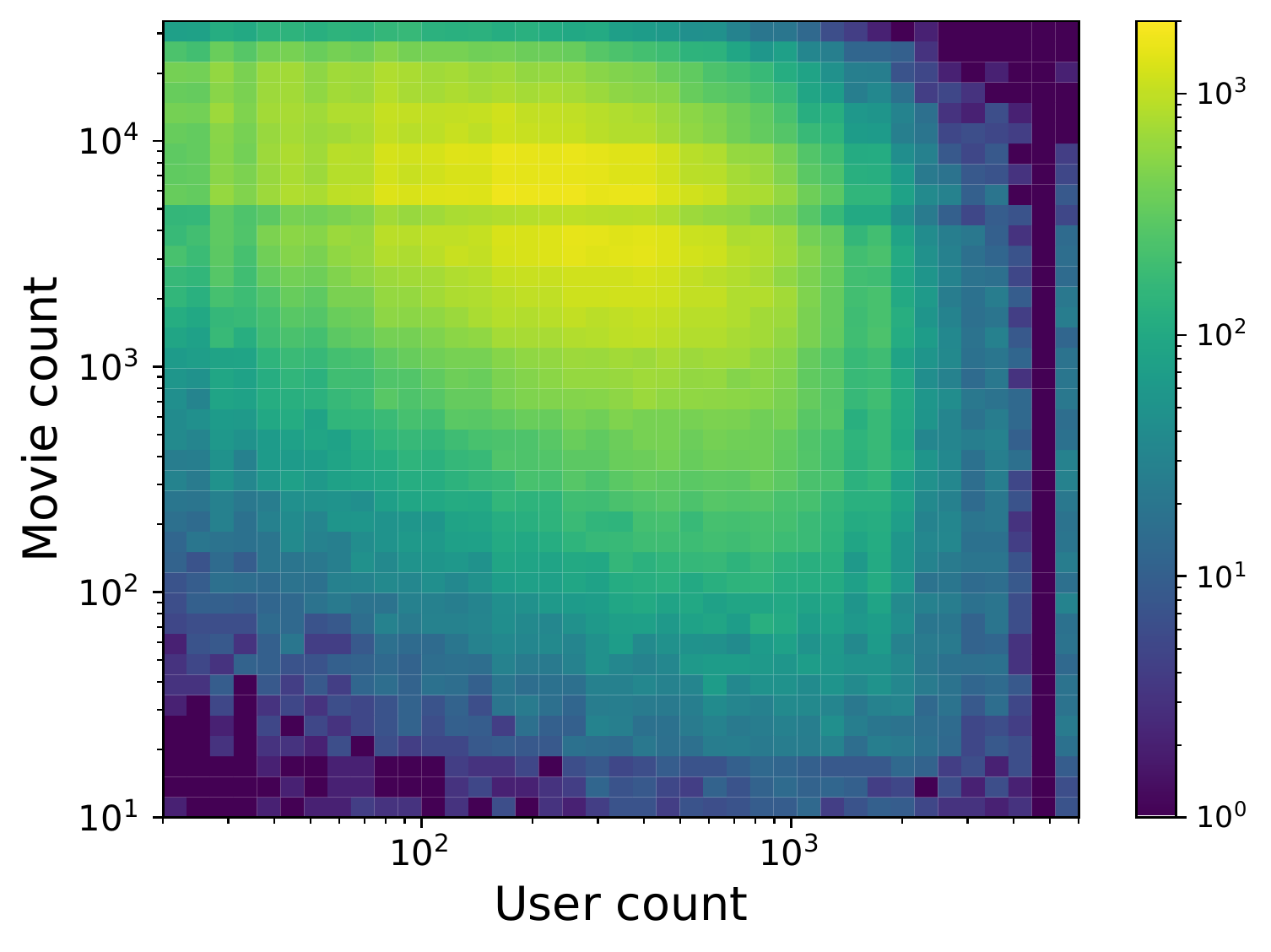}
\vspace{.13in}
\caption{ML-10M (unsampled)}
\label{fig:app:counts_hist_1}
\end{subfigure}\hfill
\begin{subfigure}[b]{0.31\textwidth}
\includegraphics[width=\textwidth]{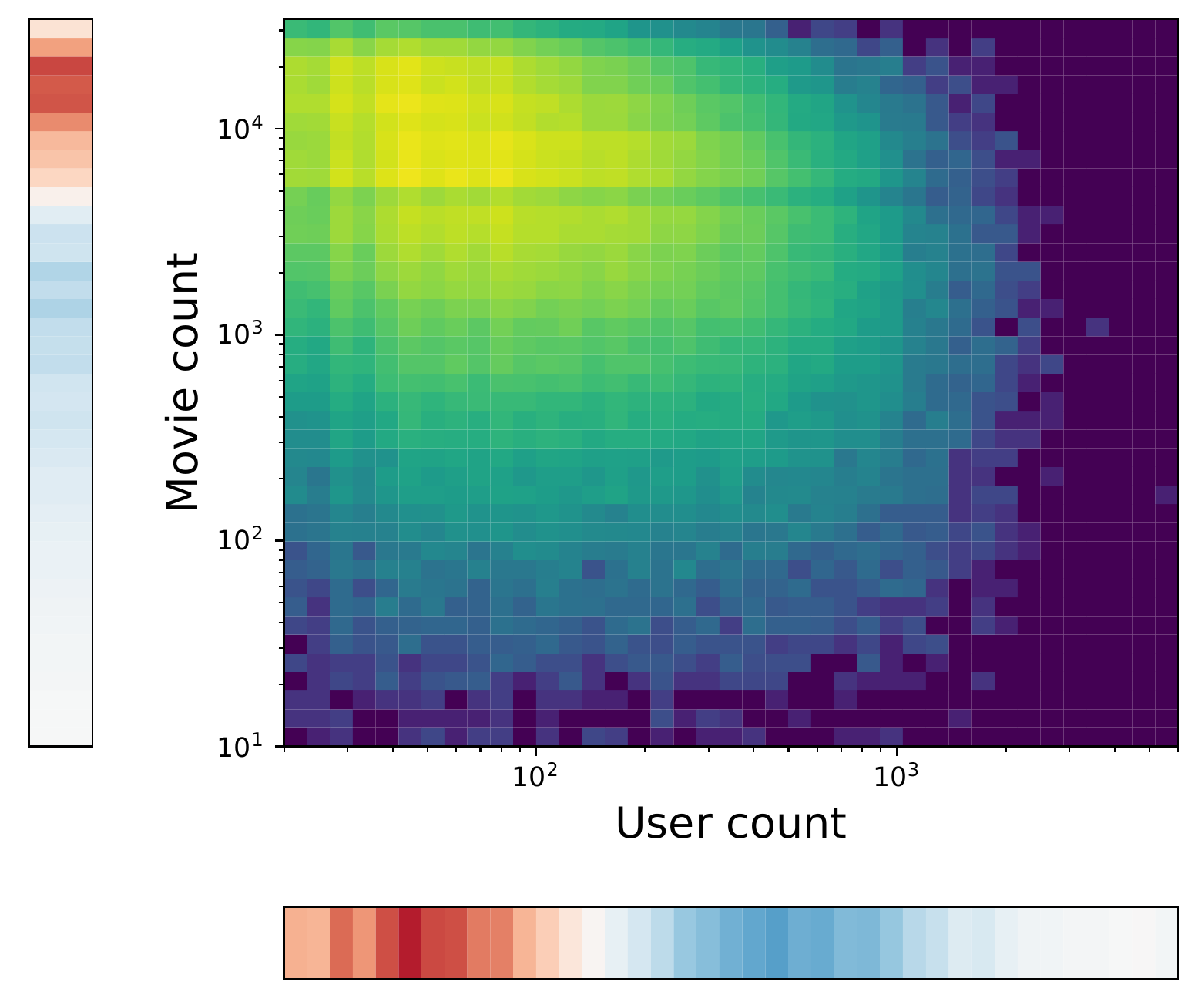}
\caption{Uniform sampling (k = 50)}
\label{fig:app:counts_hist_2}
\end{subfigure}\hfill
\begin{subfigure}[b]{0.31\textwidth}
\includegraphics[width=\textwidth]{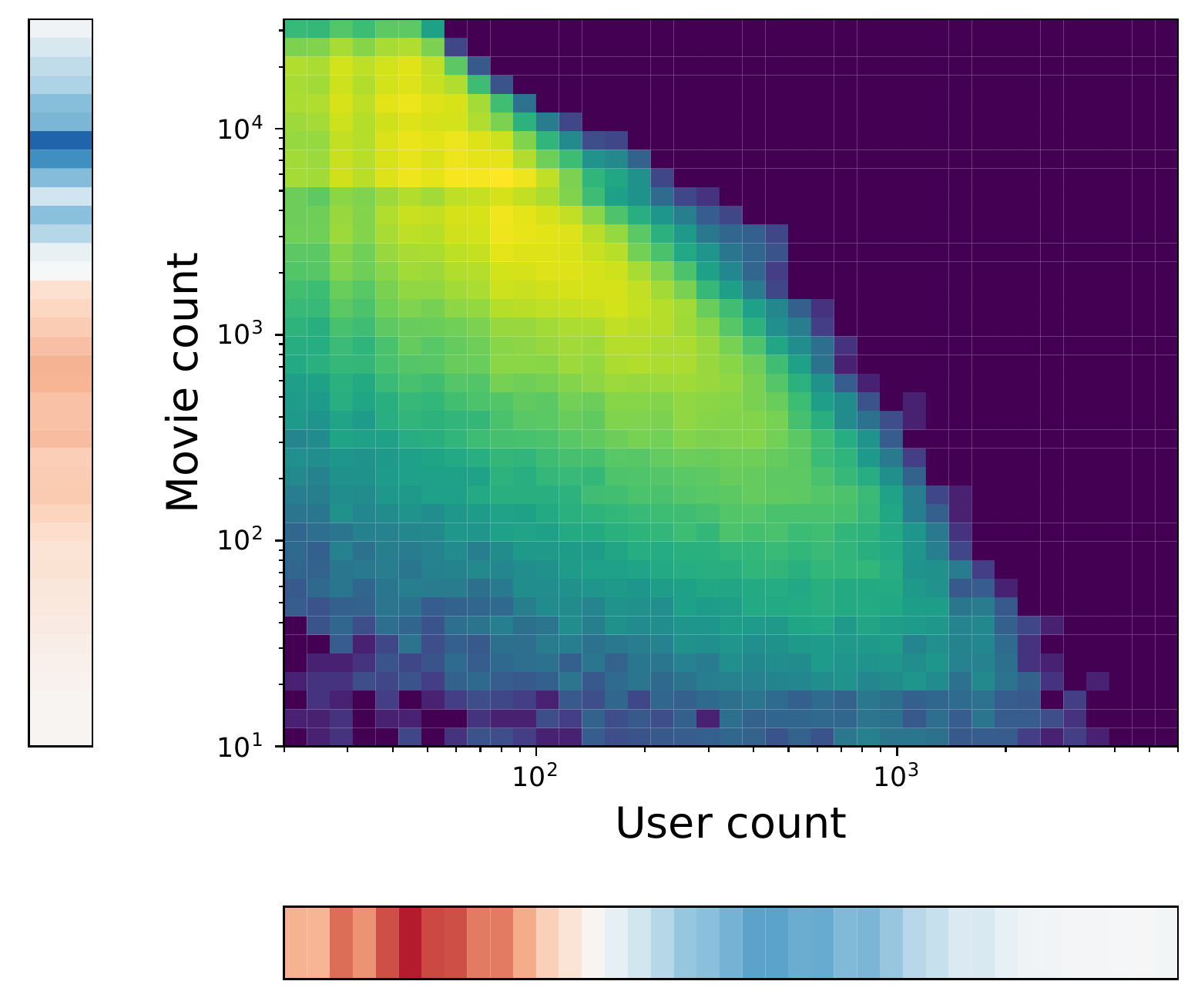}
\caption{Adaptive sampling (k = 50)}
\label{fig:app:counts_hist_3}
\end{subfigure}
\caption{Histogram of user and movie counts in ML-10M, in the original data, and under uniform and adaptive sampling. The color bars in Figures~\ref{fig:app:counts_hist_2} and~\ref{fig:app:counts_hist_3} show the difference in marginal probability compared to the original data in~\ref{fig:app:counts_hist_1}. Red indicates an increase in marginal probability, while blue indicates a decrease. Note that the probability of frequent movies increases under uniform sampling, and decreases under adaptive sampling.}
\label{fig:app:counts_hist}
\end{figure}

\mypar{Effect of uniform and adaptive sampling}
As observed in Figure~\ref{fig:ml10m_count_dist}, the movie count distribution of the MovieLens data set is heavily skewed. We also observed that, perhaps surprisingly, uniformly sampling $k$ items per user tangibly increases the skew. This can be explained by a negative correlation between user counts and movie counts; we computed a correlation coefficient of $-0.243$. This is also visible in Figure~\ref{fig:app:counts_hist_1}, which shows the joint histogram of $\{(c_i, c_j), (i, j) \in \Omega\}$, where $c_i = |\Omega_i|$ is the user count (the number of ratings this user produced) and $c_j = |\Omega_j|$ is the movie count (the number of ratings the movie received). The figure illustrates that infrequent users are more likely to rate frequent movies than the average user. By uniformly sampling a constant number of movies per user (Figure~\ref{fig:app:counts_hist_2}), we are, by definition, increasing the probability of infrequent users, hence increasing the probability of frequent movies (due to the negative correlation). This is made clear by the color bar left of Figure~\ref{fig:app:counts_hist_2}, which shows the change in movie count probability with respect to the original data set. This increase in the probability of frequent movies aggravates the skew of the movie distribution, as seen in Figure~\ref{fig:ml10m_count_dist}.

Adaptive sampling has the opposite effect: Figure~\ref{fig:app:counts_hist_3} shows that the probability of frequent movies \emph{decreases} under adaptive sampling, while that of infrequent movies increases. This leads to a decrease in bias toward frequent movies, as shown in Figure~\ref{fig:ml10m_count_dist}, and results in a significant improvement in the privacy/utility trade-off as discussed in Section~\ref{sec:practCons-3}.

\section{Additional Details on Experiments}\label{app:emp}
\subsection{Details on the Experimental Setup}
Table~\ref{tbl:movielens} shows the statistics of the MovieLens data sets.
\begin{table}[h]
\centering
\footnotesize
\caption{Statistics of the experiment data sets.}
\label{tbl:movielens}
\begin{tabular}{cccccc} 
\toprule
 &  ML-10M-top400 & ML-10M & ML-20M \\ [0.5ex]
\midrule
$n$ (number of users)  & 69,692 &  69,878 & 136,677 \\
$m$ (number of items)  & 400 & 10,677 & 20,108 \\
$|\Omega|$ (number of observations) & 4.49M & 10M & 9.99M \\
\bottomrule
\end{tabular}
\end{table}

For each data set, we partition the set of observations $\Omega$ into $\Omega = \OmegaTrain \sqcup \OmegaValid \sqcup \OmegaTest$. Hyper-parameter tuning is performed on $\OmegaValid$, and the final results are reported on $\OmegaTest$. The pre-processing described in Algorithm~\ref{Alg:preprocessing} is only applied to $\OmegaTrain$.

In the ML-10M benchmark, we follow the setup of~\cite{lee13llorma} and use a 80-10-10 split (random uniform over $\Omega$). In the ML-10M-top400 benchmark, we follow the setup of~\cite{jain2018differentially} and use a 98-1-1 split (random uniform over $\Omega$). In the ML-20M benchmark, we follow the setup of~\cite{liang18vae} and partition the set by \emph{users}, that is, a set of 20K random users are held-out, half of which are used for validation, and the other half for testing. Note that since held-out users are never seen in training, the protocol is to further split each user's observations $\OmegaTest_i$ (uniformly at random) into $\OmegaTestQuery_i \sqcup \OmegaTestTarget_i$. At test time, the model is allowed access to $\OmegaTestQuery_i$ to compute a user embedding and make a prediction for the user, and $\OmegaTestTarget_i$ is used as the ground truth target. The user embedding $\Uh_i$ is computed at test time simply by minimizing the loss in Eq.~\eqref{eq:objective} given the learned movie embeddings $\Vh$, that is,
\[
\Uh_i = \argmin_{u \in \mathbb R^r} \|{\sf P}_{\OmegaTestQuery_i}\big(\boldM_i-u\hV{}^\top\big)\|_F^2 + \lambda_0\|u\hV{}^\top\|_F^2 + \lambda \frac{\counts_i^{\nu}}{Z'} \|u\|^2.
\]
The resulting $\Uh_i$ is used to generate predictions for user $i$. Note that this procedure is consistent with the Joint-DP setting: the computation of $\Uh_i$ corresponds to one step of $\alocal$ in Algorithm~\ref{Alg:alt}, and is considered privileged (see Figure~\ref{fig:blockSchematic}). Besides, since the resulting embedding is not further used for training, it is unnecessary to clip the embedding norm. Avoiding norm clipping at test time could result in better predictions.

Finally the recall for user $i$ is computed as follows. Let $\Omega^{\sf prediction}_i$ be the top $k$ items that are not in $\OmegaTestQuery_i$. Then
$
\text{Recall}@k = \frac{|\Omega^\text{prediction}_i \cap \OmegaTestTarget_i|}{\min(k, |\OmegaTestTarget_i|)}
$.

\subsection{Hyper-Parameter Description and Ranges}
Table~\ref{tab:hyperparams} summarizes the complete list of hyper-parameters used in Algorithm~\ref{Alg:alt}, Algorithm~\ref{Alg:preprocessing}, and in the loss function~\eqref{eq:objective}, and specifies the ranges used in our experiments.
\begin{table}[h]
\caption{Hyper-parameter description and ranges.}
\label{tab:hyperparams}
\centering
\begin{tabular}{l l c}
\toprule
Symbol &Description & Range \\
\midrule
Model and training parameters & & \\
\hline
$r$ & rank & [2, 128] \\
$\reg$ & $\ell_2$ regularization coefficient & [0.1, 100]\\
$\reg_0$ & coefficient of the global penalty term & [0.1, 5]\\
$\mu$ & item regularization exponent & \{0, 0.5, 1\}\\
$\nu$ & user regularization exponent & \{0, 0.5, 1\}\\
$T$ & number of steps & [1, 5] \\
\midrule
Privacy parameters & & \\
\hline
$\rclip$ & row clipping parameter & 1 \\
$\vclip$ & entry clipping parameter & \{1, 5\} \\
$k$ & maximum number of ratings per user & [20, 150]\\
$\sigma$ & noise standard deviation & see remark below \\
\midrule
Pre-processing parameters & & \\
\hline
$\beta$ &fraction of items to train on & [0, 1]\\
$\sigma_p$ & standard deviation of pre-processing noise & [10, 200] \\
\bottomrule
\end{tabular}
\end{table}
We make several remarks about hyper-parameters:
\begin{itemize}
\itemsep0em
\item[--] In the non-private baselines, only the model and training parameters are tuned.
\item[--] Pre-processing (Algorithm~\ref{Alg:preprocessing}) is not used in synthetic experiments. Indeed, these heuristics are designed to deal with the non-uniform distribution of observations in practice. In synthetic experiments, the distribution is uniform by design.
\item[--] In the MovieLens experiments, the maximum value in $\boldM$ is known by definition of the task: In ML-10M, entries represent ratings in the interval $[0.5, 5]$, and in ML-20M the entries are binary. Thus, we simply set $\vclip$ to this value without tuning.
\item[--] We find that carefully tuning the model parameters, including the regularization coefficients $\lambda, \lambda_0$ and the exponents $\mu, \nu$ is important and can have a significant effect.
\item[--] For the rating prediction tasks (ML-10M and ML-10M-top400), we find that setting $\lambda_0$ to a positive number is detrimental, so we always use $0$. For the item recommendation task (ML-20M), using a non-zero $\lambda_0$ is important.
\item[--] The partitioning of the movies into $\head$ and $\tail$ is important for the private models, especially at lower values of $\epsilon$ (see Figure~\ref{fig:app:effect_of_frequent}), but does not help for the non-private baselines.
\item[--] To set the standard deviation $\sigma$, we use the simple observation that when all hyper-parameters except $\sigma$ are fixed, $\epsilon$ is a decreasing function of $\sigma$ that can be computed in closed form. Therefore, in each experiment, we set a target value of $\epsilon$ and do a binary search over $\sigma$ to select the smallest value that achieves the target $\epsilon$.
\item[--] Finally, note that in Algorithm~\ref{Alg:alt}, the parameter $\sigma$ determines the standard deviation of two noise terms: $\boldG$ in Line~\ref{line:noise1} and $\boldg$ in Line~\ref{line:noise2}. While this is sufficient for the analysis, we find in practice that the model is often more sensitive to $\boldg$, thus it can be advantageous to use different scales of noise. We will use the symbols $\sigma_G, \sigma_g$ to specify the scales of each term.
\end{itemize}

The optimal hyper-parameter values for each experiment and each value of $\epsilon$ are given in Table~\ref{tab:optimal-parameters}. These values are obtained through cross-validation. We do not include the privacy loss of hyper-parameter search because our main objective is to give insights into the choice of hyper-parameters at different privacy budgets. In practice, this can be accounted for, for example by the method in \cite{liu2019private}.

\begin{table}[H]
\caption{Optimal hyper-parameter values for the experiments in Figure~\ref{fig:tradeoff}. The clipping parameter $\rclip$ is set to $1$ in all experiments.}
\label{tab:optimal-parameters}
\centering
\begin{tabular}{l | c c c c | c | c c c c | c | c c c c | c }
\toprule
 & \multicolumn{5}{c|}{ML-10M-top400} & \multicolumn{5}{c|}{ML-10M} & \multicolumn{5}{c}{ML-20M} \\
\midrule
& \multicolumn{4}{c|}{\dpals} & ALS & \multicolumn{4}{c|}{\dpals} & ALS & \multicolumn{4}{c|}{\dpals} & ALS \\
\midrule
$\epsilon$ 	& 0.8 	& 4 	& 8 	& 16 	& - 	& 1 	& 5 	& 10 	& 20 	& - 	& 1 	& 5 	& 10 	& 20 	& -\\
\midrule
$r$ 		& 50 	& 50 	& 50 	& 50 	& 50 	& 32 	& 128 	& 128 	& 128 	& 128 	& 32 	& 32 	& 32 	& 128 	& 128\\
$\lambda$ 	& 90 	& 90 	& 80 	& 80 	& 70 	& 120 	& 80 	& 70 	& 60 	& 70 	& 0.5 	& 0.5 	& 0.1 	& 50 	& 30\\
$\lambda_0$ & 0 	& 0 	& 0 	& 0 	& 0 	& 0 	& 0 	& 0 	& 0 	& 0 	& 2 	& 0.6 	& 0.4 	& 0.4 	& 0.1\\
$\mu$ 		& 0.5 	& 0.5 	& 0.5 	& 0.5 	& 1 	& 0.5 	& 0.5 	& 0.5 	& 0.5 	& 1 	& - 	& - 	& - 	& - 	& -\\
$\nu$ 		& 1 	& 1 	& 1 	& 1 	& 1 	& 1 	& 1 	& 1 	& 1 	& 1 	& - 	& - 	& - 	& - 	& -\\
$T$ 		& 2 	& 2 	& 2 	& 2 	& 15 	& 2 	& 2 	& 2 	& 2 	& 15 	& 1 	& 3 	& 3 	& 1 	& 15\\
$k$ 		& 40 	& 50 	& 50 	& 50 	& - 	& 50 	& 50 	& 50 	& 50 	& - 	& 60 	& 60 	& 100 	& 60 	& -\\
$\sigma_G$ 	& 126.9 & 29.0 	& 11.3 	& 5.86 	& - 	& 125.9 & 27.8 	& 15.5 	& 7.5 	& - 	& 64.0 	& 20.2 	& 14.0 	& 3.5 	& -\\
$\sigma_g$ 	& 63.4 	& 14.5 	& 11.3 	& 5.86 	& - 	& 63.0 	& 13.9 	& 7.7 	& 3.8 	& - 	& 64.0 	& 20.2 	& 14.0 	& 3.5 	& -\\
$\beta$ 	& 1 	& 1 	& 1 	& 1 	& - 	& 0.05 	& 0.4 	& 0.5 	& 0.6 	& - 	& 0.05 	& 0.1 	& 0.05 	& 0.05 	& -\\
$\sigma_p$ 	& 200 	& 200 	& 20 	& 20 	& - 	& 100 	& 20 	& 10 	& 10 	& - 	& 100 	& 100 	& 100 	& 100 	& -\\
\bottomrule
\end{tabular}
\end{table}

\subsection{Standard Deviation}
Finally, Table~\ref{tab:stddev} reports the standard deviation of the \dpals metrics in Figure~\ref{fig:tradeoff}. For each data point, we repeat the experiment 20 times, using the same set of hyper-parameters selected on the validation set, and report the mean and standard deviation of the metric measured on the test set. In all cases, the standard deviation is less than 0.5\% of the mean.

\begin{table}[H]
\footnotesize
\caption{Mean and standard deviation of the \dpals metrics in Figure~\ref{fig:tradeoff}. \label{tab:stddev}}
\centering
\begin{tabular}{l | c c c c | c c c c | c c c c }
\toprule
 & \multicolumn{4}{c|}{ML-10M-top400 (test RMSE)} & \multicolumn{4}{c|}{ML-10M (test RMSE)} & \multicolumn{4}{c}{ML-20M (test Recall@20)} \\
\midrule
$\epsilon$ 	& 0.8 	& 4 	& 8 	& 16 	& 1 	& 5 	& 10 	& 20 	& 1 	& 5 	& 10 	& 20\\
mean    & 0.8855 	& 0.8321 	& 0.8201 	& 0.8147 	& 0.9398 	& 0.8725 	& 0.8530 	& 0.8373 	& 0.3120 	& 0.3330 	& 0.3368 	& 0.3444\\
stddev  & 0.0025 	& 0.0009 	& 0.0011 	& 0.0008 	& 0.0009 	& 0.0006 	& 0.0004 	& 0.0005 	& 0.0016 	& 0.0010 	& 0.0012 	& 0.0013\\
\end{tabular}
\end{table}

\subsection{Additional Experiments}
\label{app:emp-experiments}

\mypar{Convergence plots for \dpals and DPFW} This experiment illustrates the fact that ALS converges faster than FW, both in its exact and private variants, making it more suitable for training private models. Figure~\ref{fig:app:convergence} shows the test error (RMSE) against number of iterations, on the synthetic data set with $n = 20K$ users. We use the vanilla version of \dpals without the heuristics introduced in Section~\ref{sec:practCons}. The hyper-parameters of both methods are tuned on the validation set.

For the non-private baselines, ALS converges significantly faster than FW. For example, the error of ALS after 2 iterations is lower than the error of FW after 40 iterations. For the private models, we compare the two methods with the same sampling rate ($k = 150$) and same noise level ($\sigma = 10$ in Figure~\ref{fig:app:convergence-a} and $\sigma = 20$ in Figure~\ref{fig:app:convergence-b}), and tune other parameters. Since the sampling rate and noise level are fixed, the $\epsilon$ level is directly determined by the number of steps, and the vertical lines show different levels of $\epsilon$.
We can make the following observations. For both methods, in the presence of noise, the error decreases for a few iterations at a rate similar to their exact variants, then plateaus at a fixed error. The fixed error for \dpals the is an order of magnitude lower than DPFW. Furthermore, the error reached by \dpals in 2 iterations is lower than the error of DPFW after 40 iterations. The faster convergence of \dpals, even in the presence of noise, directly translates to a better privacy/utility trade-off as demonstrated in Section~\ref{sec:empEval}.

\begin{figure}[H]
\centering
\begin{subfigure}[b]{0.40\textwidth}
\centering
\includegraphics[width=\textwidth]{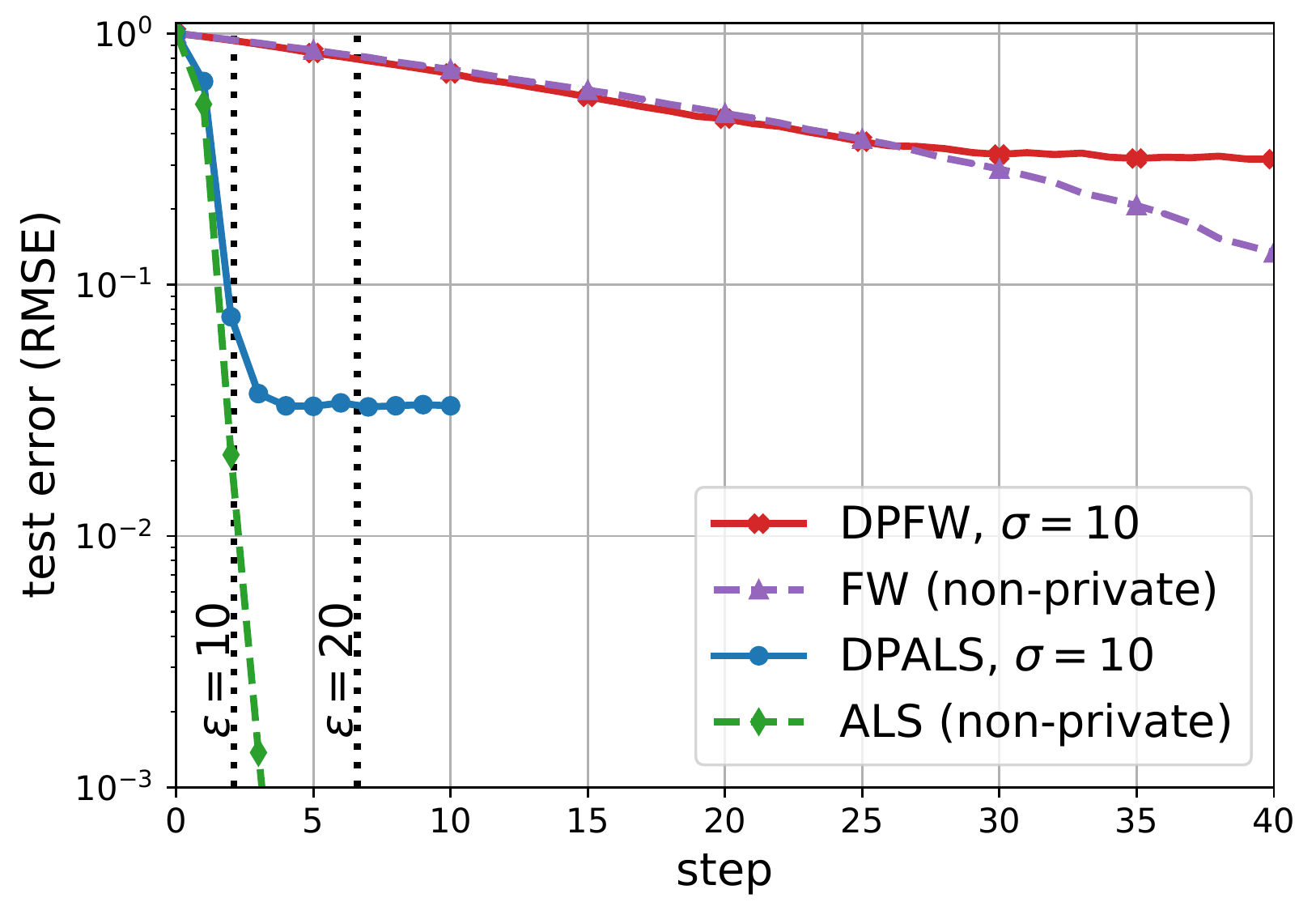}
\caption{$\sigma = 10$\label{fig:app:convergence-a}}
\end{subfigure}
\begin{subfigure}[b]{0.40\textwidth}
\centering
\includegraphics[width=\textwidth]{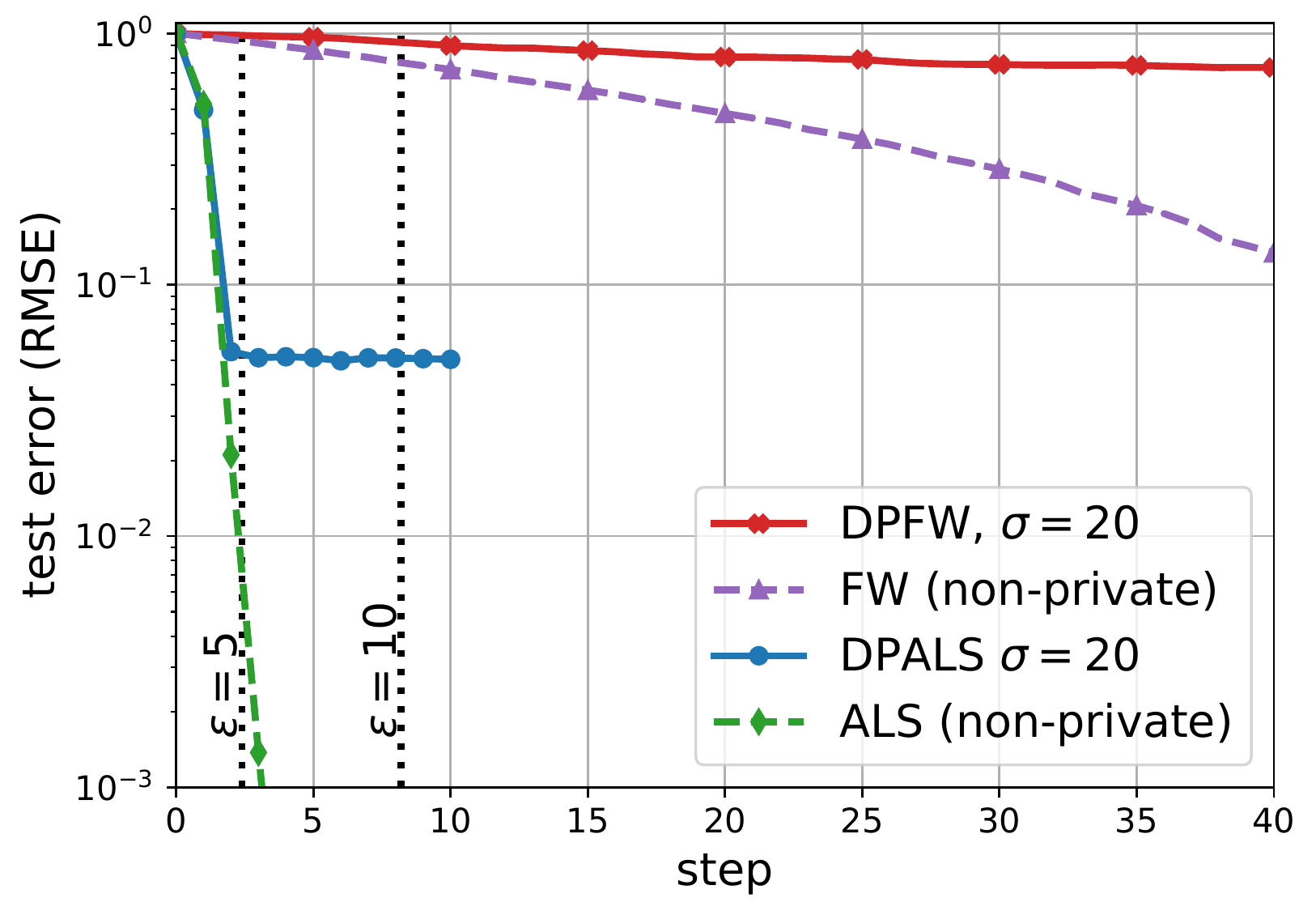}
\caption{$\sigma = 20$\label{fig:app:convergence-b}}
\end{subfigure}
\caption{RMSE against steps on the synthetic data set with $n = 20K$. Dashed lines correspond to the non-private baselines (without noise) and solid lines correspond to the private methods with a fixed noise level (left: $\sigma = 10$, right: $\sigma = 20$).}
\label{fig:app:convergence}
\end{figure}

\mypar{Varying the number of users} This experiment further illustrates the effect of increasing the number of users. We train the \dpals on a subset of the ML-10M-top400 data set, obtained by randomly sampling a subset of $n$ users. Figure~\ref{fig:app:ML10M_400_users} shows the results for different values of $n$, and confirms that increasing the number of users (while keeping the number of movies constant) improves the privacy/utility trade-off. The figure also compares to the DPFW baseline trained on the full data ($n = 69692$). Note that \dpals significantly outperforms \dpfw even when trained on a small fraction of the users ($n = 16000$, or 26.4\% of the total users).

\begin{figure}[H]
\centering
\includegraphics[width=0.40\textwidth]{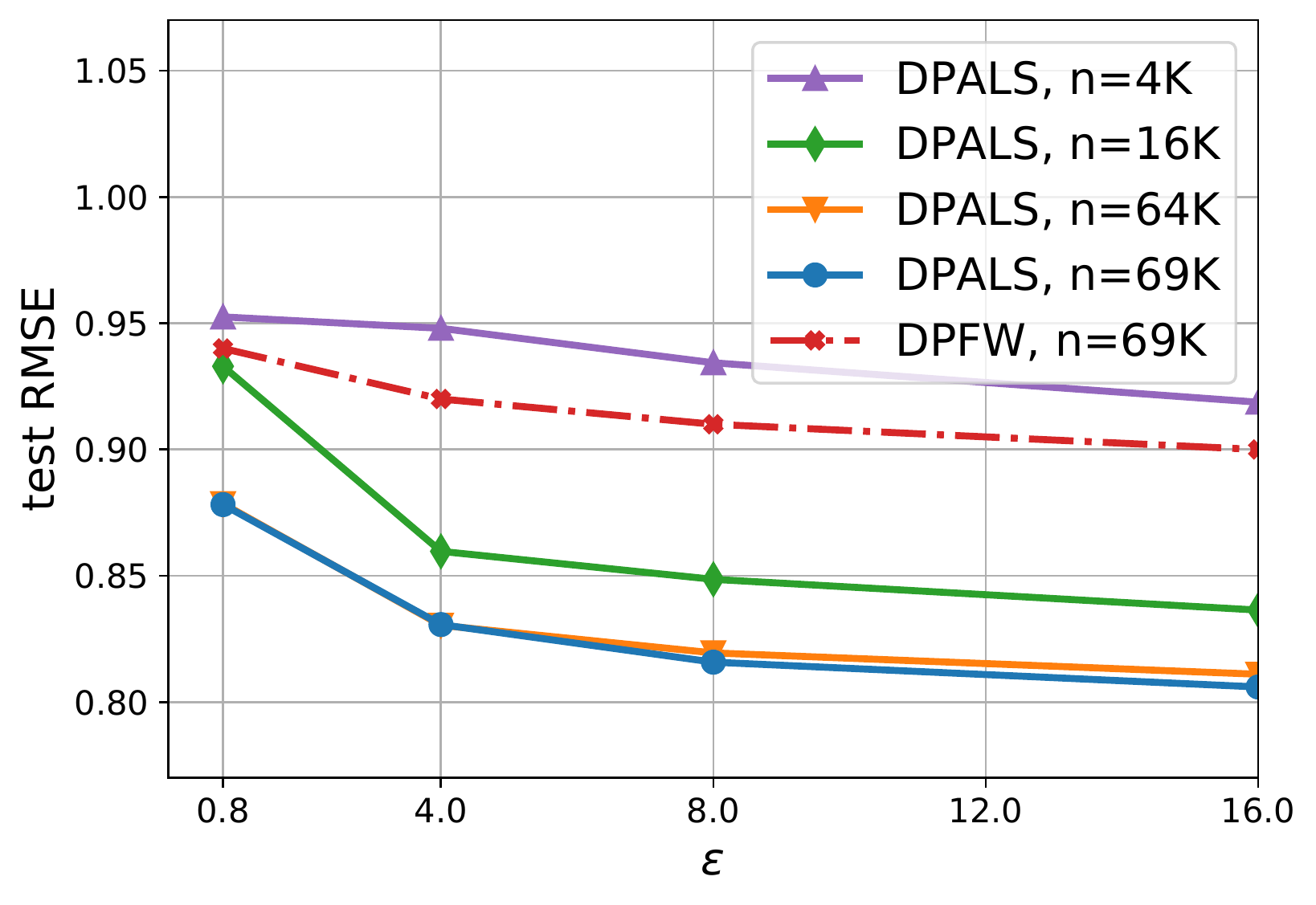}
\caption{\dpals on ML-10M-top400 with a varying number of users, $n$.}
\label{fig:app:ML10M_400_users}
\end{figure}

\mypar{Effect of the rank}
This experiment explores the effect of the rank on the privacy/utility tradeoff.
Figure~\ref{fig:app:effct_of_rank} shows the trade-offs for models of different ranks $r$ on ML-10M and ML-20M. We observe that for non-private ALS, models of higher rank consistently achieve better performance in the range of ranks that we have tried. This is not always the case for the private models. 
For the ML-10M task, the higher rank model ($r=128$) performs well for larger values of $\epsilon$, but not for $\epsilon = 1$. 
On the ML-20M task, the private model with $r=128$ gives the best recall for $\epsilon\approx20$ while $r=32$ performs the best for smaller $\epsilon$.
Therefore, unlike in the non-private ALS algorithm where a higher rank is often more desirable given enough computational and storage resources, when training a private model, one needs to carefully choose the rank to balance model capacity and utility degradation due to privacy.

\begin{figure}[H]
\centering
\begin{subfigure}[b]{0.40\textwidth}
\centering
\includegraphics[width=\textwidth]{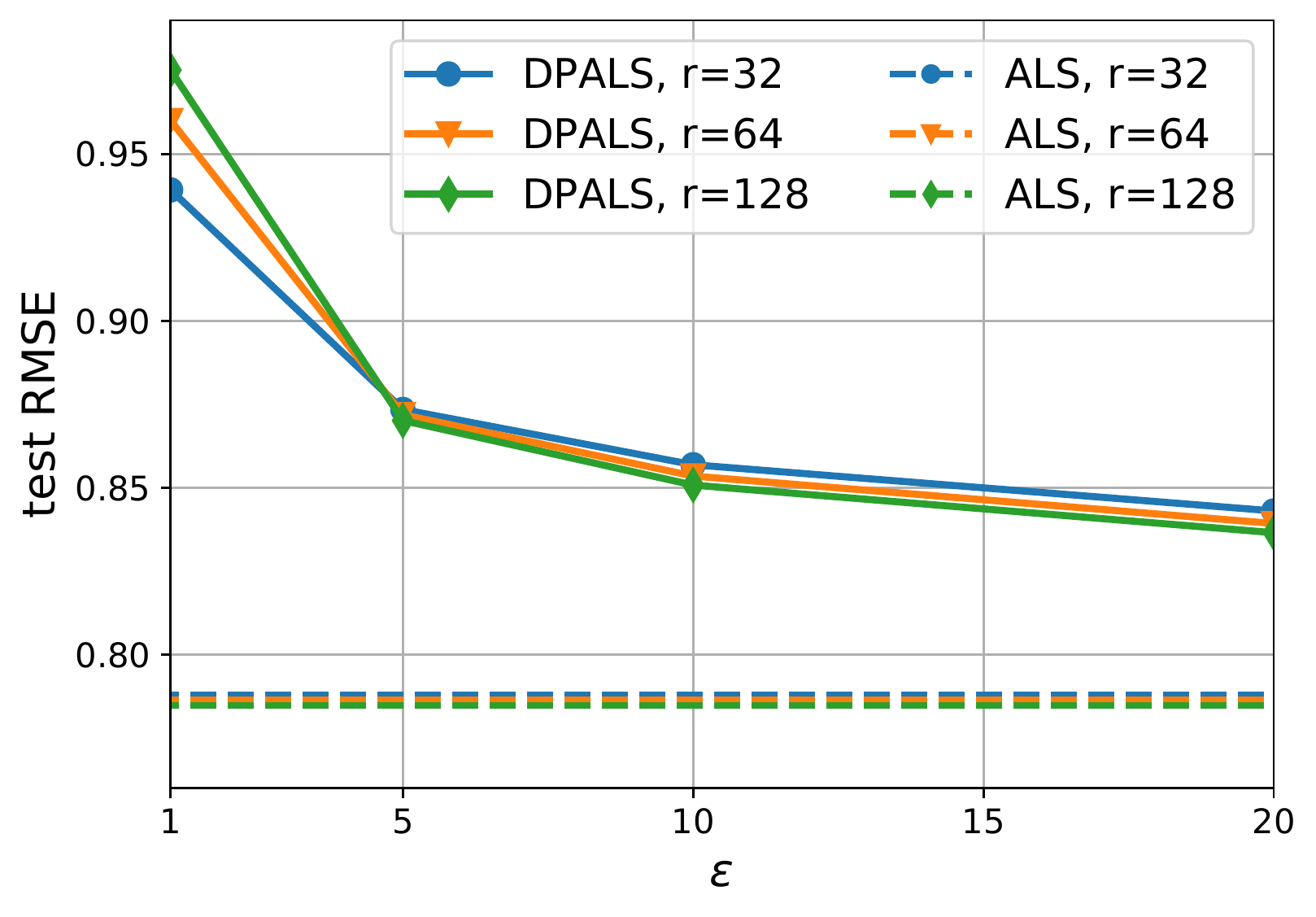}
\caption{RMSE on ML-10M (lower is better)}
\end{subfigure}
\begin{subfigure}[b]{0.40\textwidth}
\centering
\includegraphics[width=\textwidth]{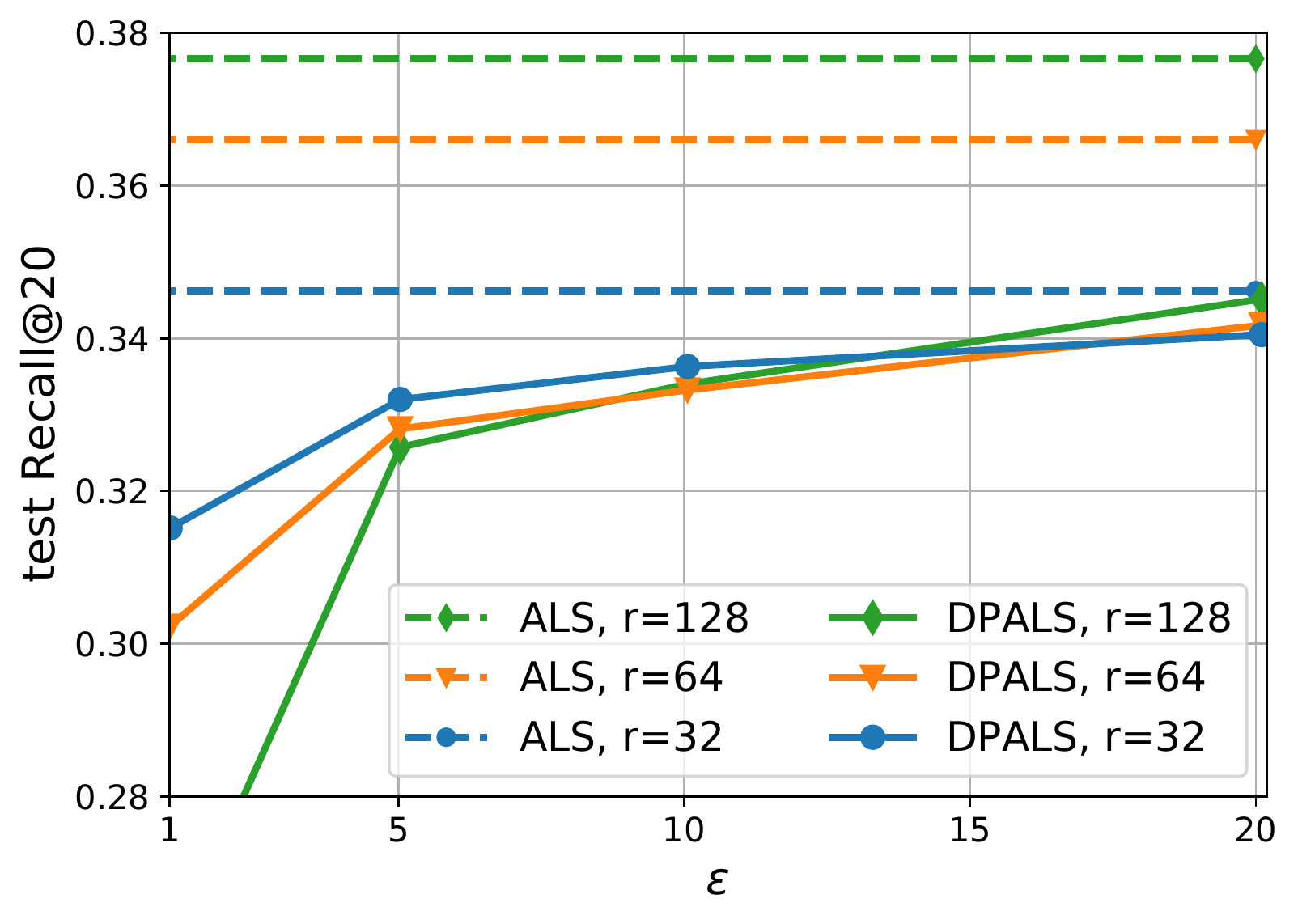}
\caption{Recall@20 on ML-20M (higher is better)}
\end{subfigure}
\caption{Privacy/utility trade-off for models of different ranks. For lower values of $\epsilon$, a lower rank achieves a better privacy/utility.}
\label{fig:app:effct_of_rank}
\end{figure}


\mypar{Training on $\head$ movies} This experiment illustrates the effect of partitioning the set movies into $(\head \sqcup \tail)$ and training only on $\head$ movies. Figure~\ref{fig:app:effect_of_frequent} shows the test RMSE vs movie fraction, at different levels of $\epsilon$. The rank of the model is fixed to $r = 32$, the sample size is fixed to $k = 50$, and other hyper-parameters are re-tuned. The results show that as $\epsilon$ decreases, the optimal fraction of movies decreases. In particular, for $\epsilon = 1$, the optimal fraction is 5\%; note however that this still corresponds to more than 50\% of the ratings, as shown on the right sub-figure.

\begin{figure}[H]
\centering
\begin{subfigure}[b]{0.40\textwidth}
\centering
\includegraphics[width=\textwidth]{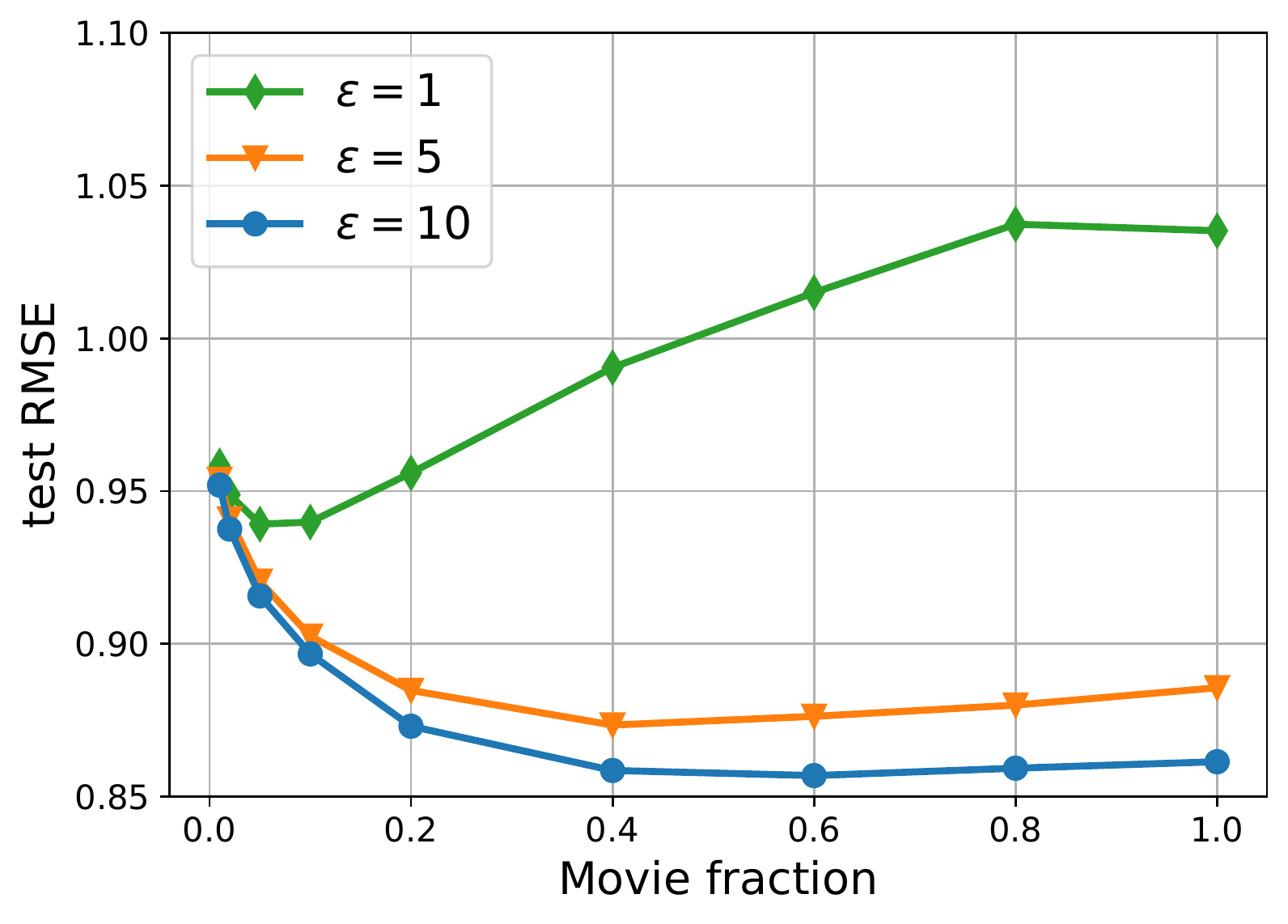}
\end{subfigure}
\begin{subfigure}[b]{0.40\textwidth}
\centering
\includegraphics[width=\textwidth]{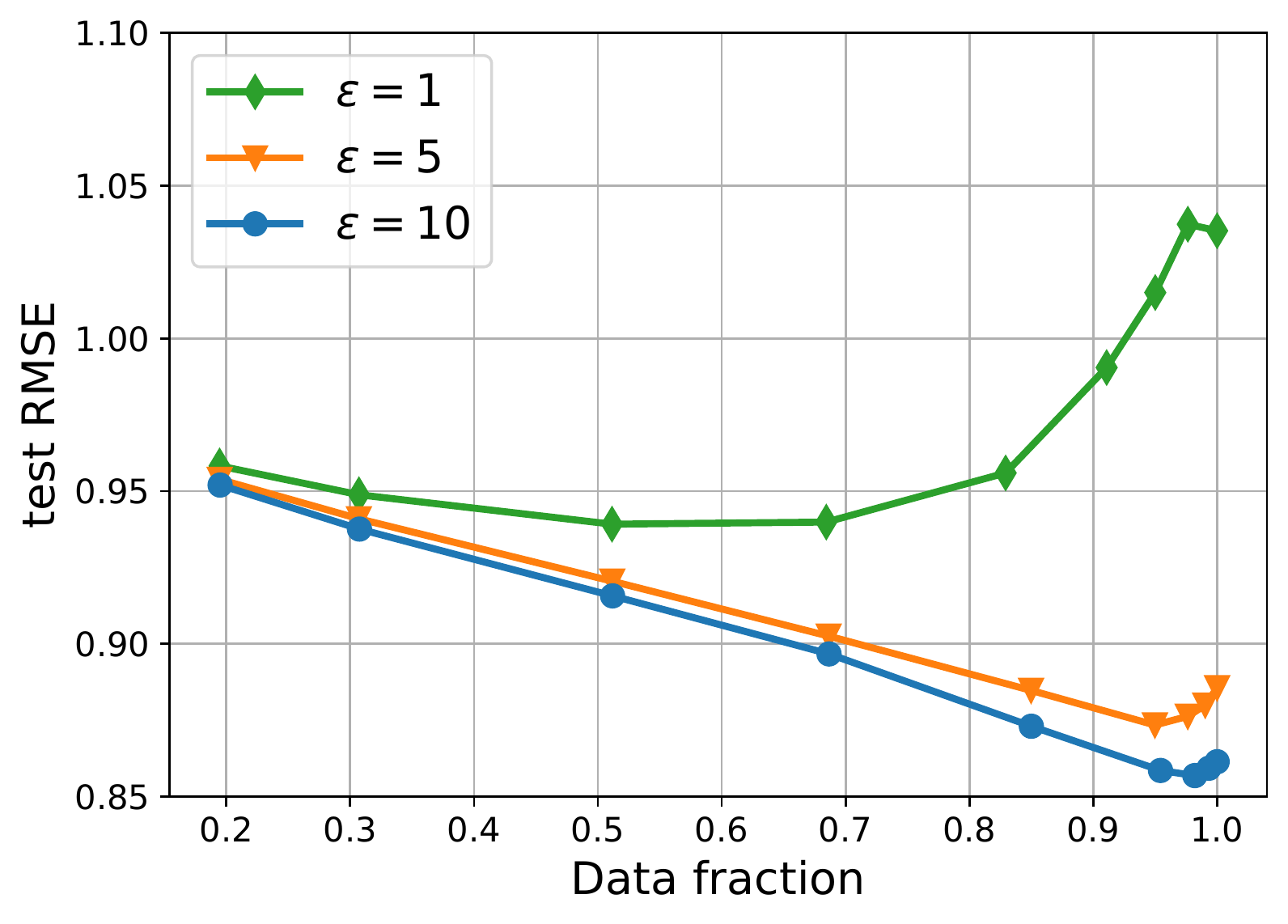}
\end{subfigure}
\caption{RMSE vs movie fraction, for a rank $32$ model on ML-10M, at different privacy levels $\epsilon$. Both figures show the same data, but with a different x axis. The movie fraction (left figure) is defined as $|\head|/m$. The data fraction (right figure) is defined as $|\{(i, j) \in \Omega : j \in \head \}| / |\Omega|$. The right figure emphasizes the long-tail distribution of movie counts -- a small fraction of $\head$ movies corresponds to a large fraction of data.}
\label{fig:app:effect_of_frequent}
\end{figure}

Figure~\ref{fig:app:effect_of_frequent2} shows a similar result for ML-20M. The optimal movie fraction in this example is between 5\% and 10\% depending on the rank.

\begin{figure}[h!]
\centering
\includegraphics[width=0.4\textwidth]{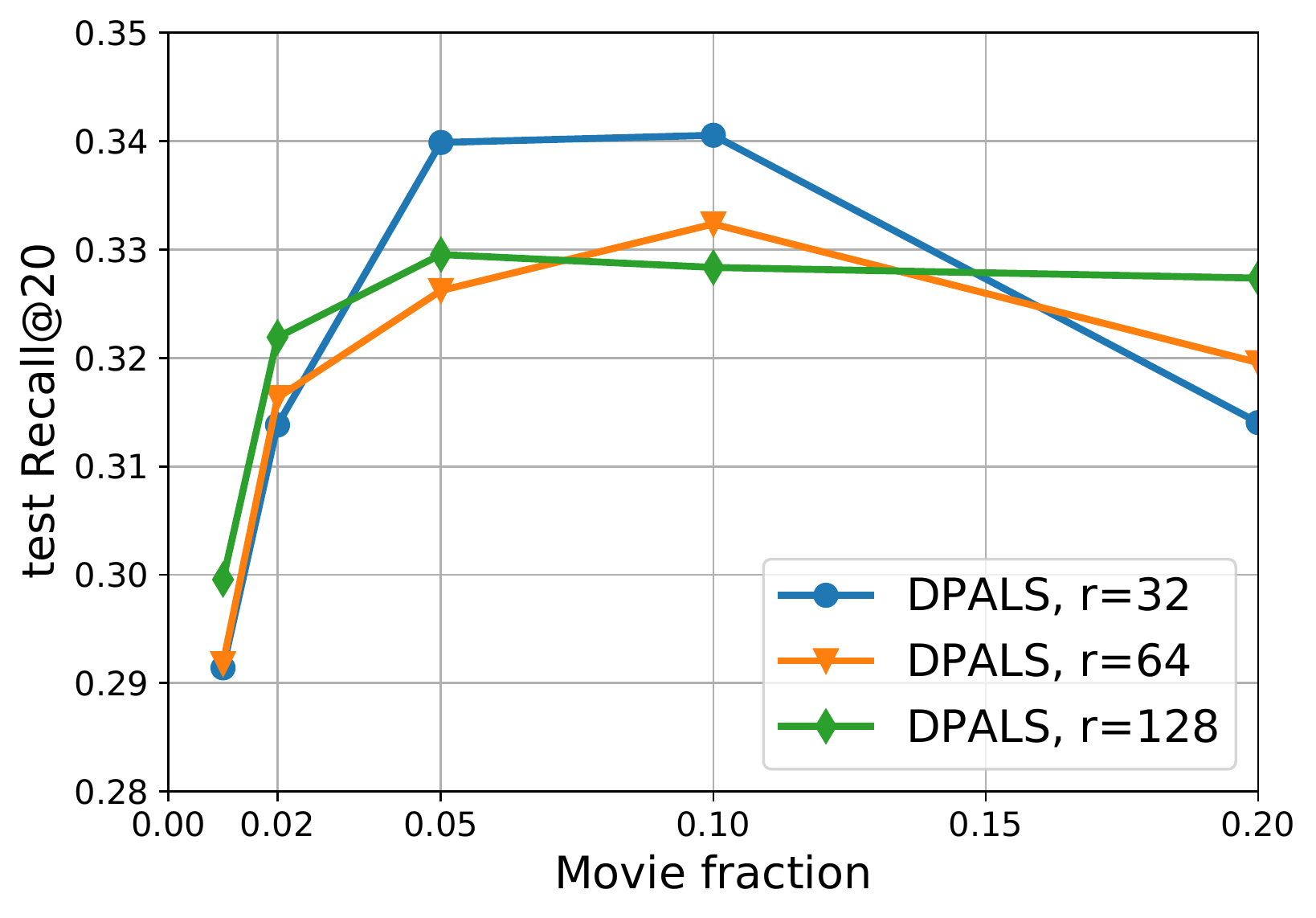}
\caption{Recall@20 vs movie fraction on ML-20M, for $\epsilon = 5$.}
\label{fig:app:effect_of_frequent2}
\end{figure}

\newpage
\mypar{Effect of the regularization exponents}
This experiment illustrates the effect of the regularization exponents $(\nu, \mu)$ in the loss function~\eqref{eq:objective}. We vary $(\nu, \mu)$ for a rank $128$ model with $\epsilon = 10$ on ML-10M (and re-tune other parameters). The results are reported in Figure~\ref{fig:app:effect_of_reg}. This example indicates that a careful tuning of the $\ell_2$ regularization can have a significant impact on utility, and can also make the private models more robust to noise: Notice that with the optimal setting of $(\nu, \mu)$ the model can be trained on a much larger fraction of movies, with only a slight degradation in utility.

\begin{figure}[h!]
\centering
\hspace{.8in}\includegraphics[width=0.54\textwidth]{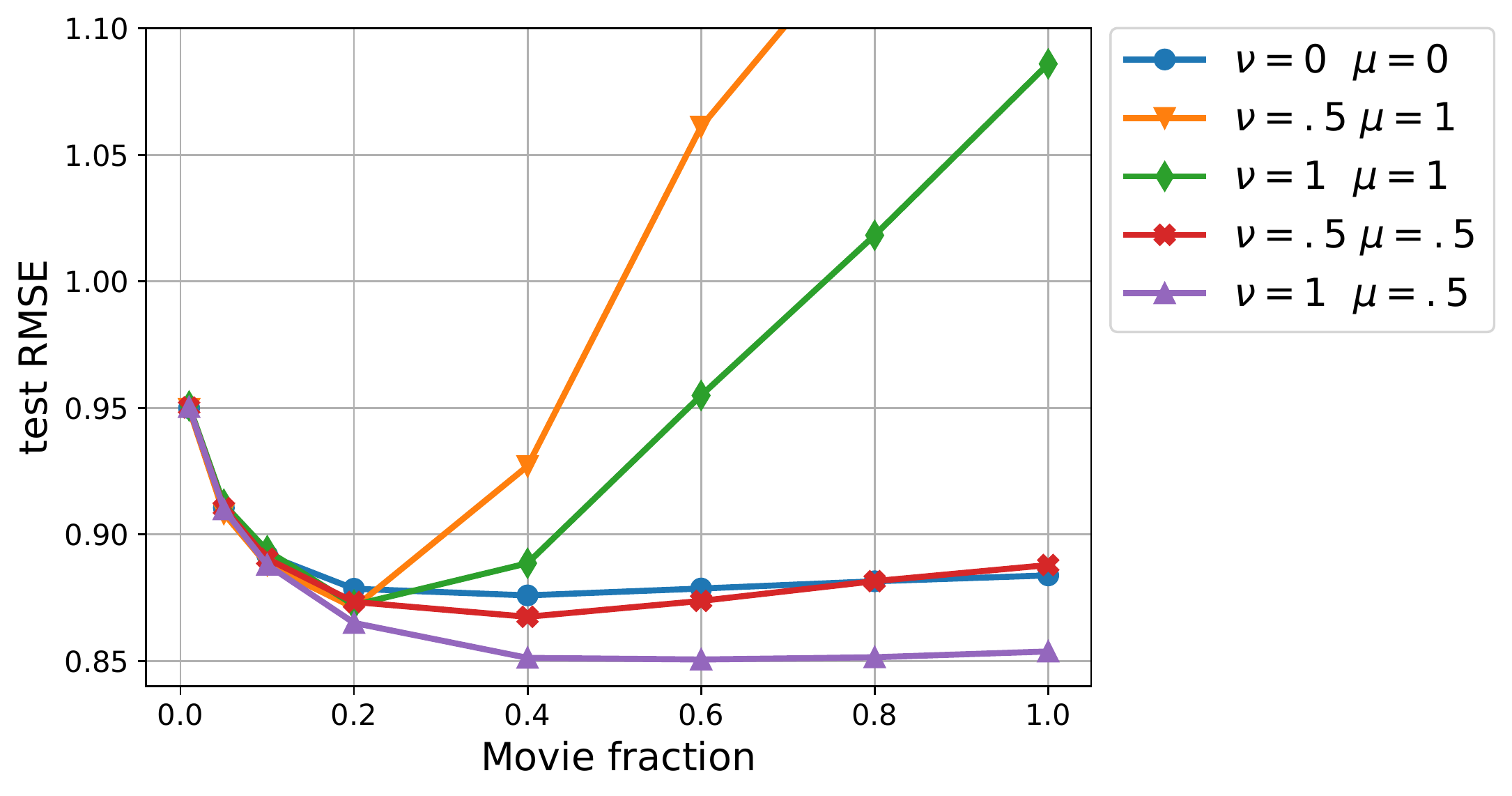}
\caption{RMSE vs movie fraction on ML-10M, for $\epsilon = 10$ and $r = 128$, and for different values of regularization exponents $(\nu, \mu)$.}
\label{fig:app:effect_of_reg}
\end{figure}

\end{document}